\documentclass[]{fairmeta}

\title{Towards joint scaling laws with optimal batch size schedules}

\author[1]{Jiaxiang Li}
\author[1]{Zhiqi Bu}
\author[2]{Shiyun Xu}

\affiliation[1]{Meta}
\affiliation[2]{Independent researcher}

\abstract{
Modern deep learning typically keeps the batch size static throughout training, thus overlooking the joint effect of learning rate and batch size on the training dynamics. In this paper, we study the deep learning dynamics through the lens of convex optimization and derive a joint characterization of loss in terms of both schedules, applicable to general optimizers and model architectures. This characterization yields a closed-form optimal batch size schedule for any prescribed learning rate schedule, and further leads to joint scaling laws that consistently outperform static batch size baselines, highlighting the significance of dynamic batch size schedule in large language model training.
}

\usepackage{verbatim}
\usepackage{url}
\usepackage{algorithm}
\usepackage{algorithmicx}
\usepackage[noend]{algpseudocode}
\usepackage{amsmath, amsfonts, amssymb, amsthm, amsbsy, amscd, bm, bbm,mathrsfs}
\usepackage{graphicx}
\usepackage{setspace}
\usepackage{hyperref}
\usepackage{cancel}
\usepackage{cleveref}
\usepackage[hyperpageref]{backref}
\usepackage{appendix}
\usepackage{xspace}
\usepackage{epigraph}
\usepackage[thinc]{esdiff}
\usepackage{multirow}
\usepackage{booktabs,tabularx,array}
\usepackage[labelfont=bf]{caption}
\usepackage[most]{tcolorbox}

\makeatletter

\newcommand{\E}{\mathbb{E}}

\allowdisplaybreaks \numberwithin{equation}{section}
\makeatother

\newcommand{\w}{\mathbf{w}}
\newcommand{\g}{\mathbf{g}}
\newcommand{\R}{\mathbb{R}}

\usepackage{aliascnt}

\newtheorem{theorem}{Theorem}

\newaliascnt{lemma}{theorem}
\newtheorem{lemma}[lemma]{Lemma}
\aliascntresetthe{lemma}

\newaliascnt{definition}{theorem}

\aliascntresetthe{definition}

\newaliascnt{corollary}{othertheorem}
\newtheorem{corollary}[corollary]{Corollary}
\aliascntresetthe{corollary}

\newaliascnt{proposition}{othertheorem}

\aliascntresetthe{proposition}

\newaliascnt{remark}{othertheorem}
\newtheorem{remark}[remark]{Remark}
\aliascntresetthe{remark}

\begin{document}

\maketitle

\section{Introduction}
Large-scale model training has been central to recent progress in artificial intelligence  \citep{kaplan2020scaling,hoffmann2022training}, but its effectiveness depends critically on robust choices of hyperparameters, particularly the learning rate and batch size (BS). 
The learning rate controls the magnitude of parameter updates: overly small values slow optimization, whereas overly large values can destabilize training and cause loss divergence \citep{yang2021tuning,li2025predictable}. The batch size controls a trade-off between model quality and systems efficiency: larger batches can improve throughput by exploiting data parallelism, amortizing communication, and increasing hardware utilization \citep{shallue2019measuring,goyal2017accurate,rajbhandari2020zero}; however, beyond a critical regime, increasing the batch size can degrade model performance under fixed data or compute budgets, because it reduces the number of optimization steps and changes the gradient-noise profile \citep{mccandlish2018empirical,golmant2018computational,keskar2017large}.

Consequently, hyperparameter selection is essential for efficient model training, yet exhaustive tuning is often prohibitively computationally expensive at frontier scale. This motivates principled methods for transferring or scaling hyperparameters across model sizes and training horizons, so that small-scale experiments can reliably inform large-scale ones \citep{kaplan2020scaling,li2025predictable,hu2024minicpm,zhangdoes,bergsma2025powerlines,shuai2024scalinglawbs,bi2024deepseek}.

For the learning rate, the mainstream practice is to use a schedule $\{\eta_t\}_{t=1}^T$ rather than a fixed constant. Common schedules typically combine warmup, which mitigates early-training instability, with decay, which improves late-stage optimization \citep{vaswani2017attention,tuningplaybookgithub,defazio2023optimal,schaipp2025surprising}. Under maximal update parametrization ($\mu$P), optimal learning rates can transfer across model sizes when the training horizon is fixed \citep{yang2021tuning,yang2024tensor,dey2026don}. Furthermore, when the training horizon $T$ changes, the optimal peak learning rate must be rescaled, often following a power law of the form $\eta^\star(T)\propto T^{-\alpha}$ \citep{bi2024deepseek,bjorck2025scaling,bu2026convex}.

In contrast, batch size has been less explored systematically as a scheduled hyperparameter. Existing approaches often use fixed batch sizes or hand-designed batch-size schedules coupled to the learning-rate schedule \citep{smith2018dontdecay,balles2017coupling,meterez2025seesaw}. See \Cref{appendix:existing_works} for a summarization of batch size schedules for recent LLMs. This is potentially suboptimal: a fixed batch size is only a constrained special case of a broader class of batch-size schedules $\{B_t\}_{t=1}^T$.
This gap motivates a principled study of batch-size scheduling and its transfer across model sizes and training horizons.

In this work, we study the fundamental question:

\begin{center}
    \noindent\textbf{What is the optimal batch size schedule and which factors determine it?}
\end{center}

Our approach builds on the stochastic convex optimization theory, capturing the training dynamics of deep learning \citep{defazio2023optimal} which is non-convex and non-smooth. To be specific, we develop a principled and generalizable mapping in \eqref{eq:loss mapping last} that characterizes \textit{the loss sequence} from both the \textit{learning rate sequence} and the \textit{batch size sequence}. 
This sequence-to-sequence characterization is empirically accurate and analytically tractable to formalize a constrained optimization problem: the objective is the last iteration of the loss sequence, the domain is batch size sequences, and the constraint is data or compute budgets. We solve this problem for the optimal batch size schedule in \Cref{thm:optimal B}. 

In summary, our answer is that the optimal batch size schedule is a function of total computational budget and learning rate schedule shape (cosine, linear, WSD), but independent of peak learning rate, weight decay, model size, model architecture, or optimizer. As a result, the batch size becomes a principled scaling dimension rather than a separately tuned hyperparameter.

\paragraph{\textbf{Contributions:}}
\begin{enumerate}
  \item \textbf{Loss prediction by learning rate and batch size schedules.}
  We derive an any-iteration loss characterization in \eqref{eq:loss mapping equality} from any learning rate and batch size schedules. This characterization is empirically precise for general optimizers and model architectures, providing evidence for the convex dominance in deep learning.

  \item \textbf{Closed-form optimal batch size schedule.}
  We derive the closed-form optimal batch size for any learning rate schedule in \Cref{thm:optimal B}, given the data, compute or general budgets. Notably, the optimal batch size schedule is independent of the peak learning rate, thus decoupled from the learning rate tuning.

  \item \textbf{Joint scaling laws and universal dynamics.}
  We build joint scaling laws that employ $1/\sqrt{T}$-scaling of learning rate and weight decay, as well as our scale-free batch size schedule, with an improvement on compute efficiency by $6\sim 15\%$ and zero computational overhead. Crucially, we show that the optimal batch size schedule preserves universal training dynamics, also known as supercollapse,   rendering a robust and high-performing recipe for scaling.
  
  \item \textbf{Empirical ablations in various regimes.}
  We validate the advantage of our batch size schedules, in pre-training and fine-tuning regimes, on Llama3 (dense), Qwen3 (MoE) and vision-language models up to 7B parameters, with AdamW and Muon optimizers, across different scales and configurations in \Cref{app:ablation}.
\end{enumerate}

\begin{figure}[!htb]
    \centering
\includegraphics[width=\linewidth]{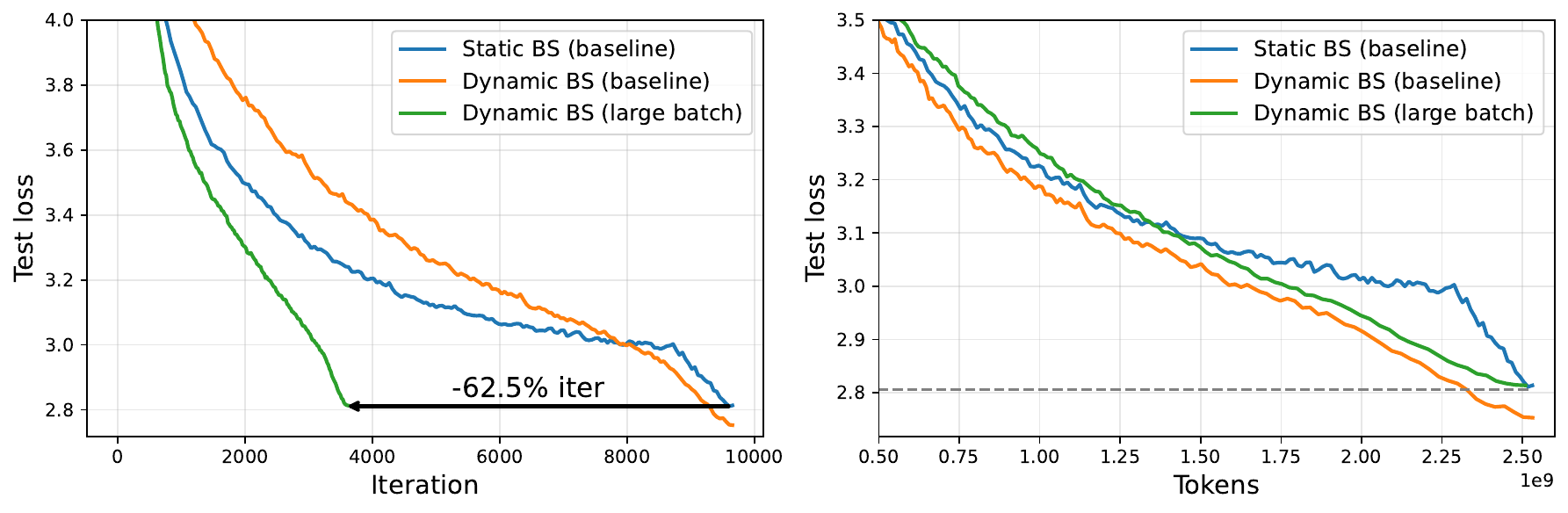}
\\
\includegraphics[width=\linewidth]{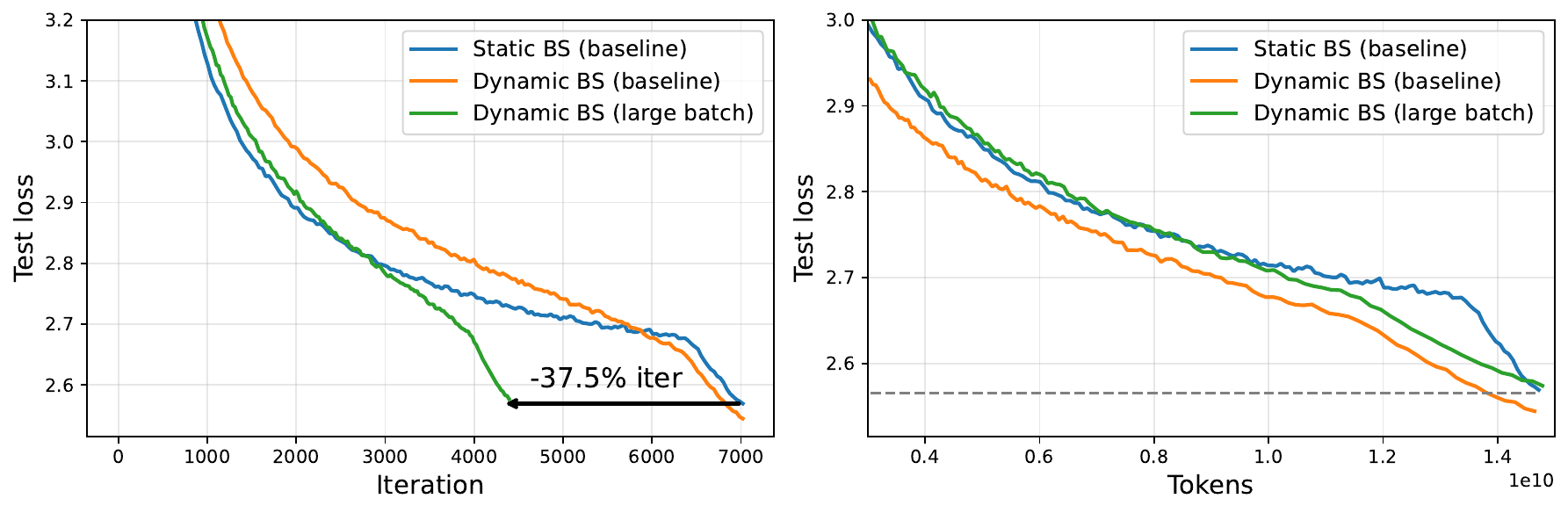}
    \vspace{-0.2cm}
    \caption{Dynamic batch size outperforms static batch size under fixed compute of 1e20 FLOPs on Fineweb-edu dataset: (I) for fixed training horizon, \textcolor{orange}{dynamic batch size schedule} (same BS) achieves better loss than \textcolor{blue}{static baselines}; (II) for the same loss, \textcolor{green}{dynamic batch size schedule} (larger BS) uses significantly fewer iterations than \textcolor{blue}{static baselines}, with the same learning rate. Upper row: Llama3-7B (dense). Lower row: Qwen3-A1B (MoE). See the evaluation in \Cref{tab:longest} and \Cref{fig:sub eval}.}
    \label{fig:longest}
\end{figure}

\begin{table}[!htb]
\centering
\caption{Dynamic batch size generally outperforms static batch size (marked in \textcolor{green}{green}) by lm-evaluation-harness v0.4 on the last-iterate. See \Cref{fig:sub eval} for the evaluation on intermediate iterates.}
\resizebox{\linewidth}{!}{
\begin{tabular}{c|c||c|c|c|c|c|c|c|c}
\hline
 Model & Batch Size & LAMBADA & HellaSwag & ARC-Easy & ARC-Challenge & WinoGrande & PIQA & OpenBookQA & BoolQ \\
 \hline
 \multirow{2}{*}{\textbf{Llama3 (dense)}} & static& 32.23 & 42.93 & 56.22 & 29.26 & 53.03 & 67.89 & 32.40 & 61.89 \\
 & dynamic& \textcolor{green}{35.36} & \textcolor{green}{44.33} & \textcolor{green}{56.36} & \textcolor{green}{30.46} & \textcolor{green}{53.99} & \textcolor{green}{68.34} & \textcolor{green}{33.20} & \textcolor{green}{62.26} \\
 \hline
 \multirow{2}{*}{\textbf{Qwen3 (MoE)}} & static& 36.06 & 45.61 & 57.83 & 32.59 & 52.49 & 68.77 & 34.00 & 61.47 \\
 & dynamic & \textcolor{green}{37.71} & \textcolor{green}{46.80} & \textcolor{green}{59.09} & \textcolor{red}{32.25} & \textcolor{green}{55.33} & \textcolor{green}{69.97} & \textcolor{green}{36.20} & \textcolor{green}{61.59} \\
 \hline
\end{tabular}
}
\label{tab:longest}
\end{table}

\section{Discrete loss prediction by sequences of batch size and learning rate}
We start with the stochastic gradient descent (SGD) as
$\w_{t+1}=\w_t-\eta_{t} \g_t$,
where $\w_t$ is the parameters, $\eta_t$ is the learning rate, $\g_t=\g(\w_t)$ is the mini-batch gradient with batch size $B_t$,
and $1\leq t\leq T$ with $T$ total iterations. In what follows, we denote $\w_*$ as the minimizer of loss $L$, and $D=\|\w_t-\w_*\|$.

\subsection{From learning rate sequence to loss sequence}
The prediction from learning rate to loss value has been investigated under convexity or strong convexity loss, Lipschitz continuity or smoothness, and averaged or current iterate. We follow the work of \cite{defazio2023optimal,bu2026convex, schaipp2025surprising} and study the sequence-to-sequence prediction under convex loss and SGD in \Cref{thm: simple} (see proof in \Cref{app:thm1}). 
\begin{theorem}
\label{thm: simple}
For convex loss and any learning rate sequence $\eta_{t}$, under SGD optimizer, we have
\begin{align}
\E L(\w_\tau; \eta_t)\leq L_*+\frac{D^2}{2\sum_{t=1}^\tau \eta_t}+
\frac{1}{2}\sum_{t=1}^{\tau-1}\frac{\eta_t^2\E\|\g_t\|^2}{\sum_{k=t+1}^\tau\eta_k}+\frac{1}{2}\eta_\tau \E\|\g_\tau\|^2
\label{eq:origin3}
\end{align}
\end{theorem}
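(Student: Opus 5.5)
This is the last-iterate bound of Defazio et al. (2023) for convex nonsmooth SGD, with $G^2$ replaced by $\E\|\g_t\|^2$. I will reproduce it through the standard "suffix averaging" argument. Throughout, $\g_t$ is an unbiased estimate of a subgradient of $L$ at $\w_t$, conditionally on $\w_t$. I read $D$ as $\|\w_1-\w_*\|$.

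\textbf{Step 1: one-step inequality against an arbitrary fixed comparator.} For any fixed $\mathbf{u}$, expand $\|\w_{t+1}-\mathbf{u}\|^2$, take conditional expectation, and use convexity. This gives
\begin{align*}
\eta_t\,\E[L(\w_t)-L(\mathbf{u})]\le \tfrac12\E\|\w_t-\mathbf{u}\|^2-\tfrac12\E\|\w_{t+1}-\mathbf{u}\|^2+\tfrac12\eta_t^2\E\|\g_t\|^2 .
\end{align*}
Summing from $t=k$ to $\tau$ with $\mathbf{u}=\w_k$, so that the first distance term vanishes, yields
\begin{align*}
\sum_{t=k}^{\tau}\eta_t\,\E[L(\w_t)-L(\w_k)]\le \tfrac12\sum_{t=k}^{\tau}\eta_t^2\E\|\g_t\|^2 .
\end{align*}
Taking $\mathbf{u}=\w_*$ and $k=1$ gives the averaged bound
\begin{align*}
\sum_{t=1}^\tau\eta_t\E[L(\w_t)-L_*]\le \tfrac{D^2}{2}+\tfrac12\sum_{t=1}^\tau\eta_t^2\E\|\g_t\|^2 .
\end{align*}

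\textbf{Step 2: suffix-average telescoping.} Define $S_k=\sum_{t=k}^\tau\eta_t$ and the weighted suffix average $Q_k=\frac{1}{S_k}\sum_{t=k}^\tau\eta_t\E L(\w_t)$. Then $Q_\tau=\E L(\w_\tau)$, and $Q_1-L_*$ is controlled by the averaged bound. A direct computation gives
\begin{align*}
Q_{k+1}-Q_k=\frac{\eta_k}{S_{k+1}}\Big(Q_k-\E L(\w_k)\Big).
\end{align*}
The bracket equals $\frac{1}{S_k}\sum_{t=k}^\tau\eta_t\E[L(\w_t)-L(\w_k)]$. By Step 1 it is at most $\frac{1}{2S_k}\sum_{t=k}^\tau\eta_t^2\E\|\g_t\|^2$. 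Summing over $k=1,\dots,\tau-1$ then gives
\begin{align*}
\E L(\w_\tau)\le Q_1+\sum_{k=1}^{\tau-1}\frac{\eta_k}{2S_kS_{k+1}}\sum_{t=k}^\tau\eta_t^2\E\|\g_t\|^2 .
\end{align*}

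\textbf{Step 3: exchange sums and collapse the coefficients.} This is the step I expect to be the main bookkeeping obstacle. Swap the order of summation. The coefficient of $\eta_t^2\E\|\g_t\|^2$ becomes
\begin{align*}
\tfrac12\sum_{k\le \min(t,\tau-1)}\frac{\eta_k}{S_kS_{k+1}}=\tfrac12\sum_{k}\Big(\frac{1}{S_{k+1}}-\frac{1}{S_k}\Big),
\end{align*}
using $\eta_k=S_k-S_{k+1}$. This sum telescopes:
\begin{align*}
\text{for } t\le\tau-1:&\quad \tfrac12\Big(\frac{1}{S_{t+1}}-\frac{1}{S_1}\Big),\\
\text{for } t=\tau:&\quad \tfrac12\Big(\frac{1}{S_\tau}-\frac{1}{S_1}\Big)=\tfrac12\Big(\frac{1}{\eta_\tau}-\frac{1}{S_1}\Big).
\end{align*}
Now add the contribution of $Q_1-L_*$, which is at most $\frac{D^2}{2S_1}+\frac{1}{2S_1}\sum_t\eta_t^2\E\|\g_t\|^2$. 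Its $-\frac{1}{S_1}$-type pieces cancel exactly against those above. What remains is $\frac{D^2}{2S_1}$, plus $\frac{\eta_t^2\E\|\g_t\|^2}{2S_{t+1}}$ for $t<\tau$, plus $\frac12\eta_\tau\E\|\g_\tau\|^2$. This is precisely \eqref{eq:origin3}.

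The care needed is in the index ranges of the swap and the cancellation of the $1/S_1$ terms. I would also note that Step 1 needs $\E\|\g_t\|^2$ kept inside the expectation, not bounded by a constant, since the paper later makes it depend on $B_t$.
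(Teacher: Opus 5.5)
Your proposal is correct and follows the paper's proof: the paper cites Theorem 10 of Defazio et al.\ for the bound you reach at the end of Step 2, then carries out your Step 3 exactly (write $A_k = 1/S_{k+1}-1/S_k$, swap the double sum, telescope, cancel the $1/S_1$ terms), so your Steps 1--2 simply re-derive the cited result by suffix averaging. One point to tighten: the comparator $\w_k$ is random, so state the one-step inequality for any $\mathbf{u}$ determined by the past, and use unbiasedness of $\g_t$ conditional on the full history $\g_1,\dots,\g_{t-1}$ rather than on $\w_t$ alone.
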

We emphasize that although the theoretical basis for \Cref{thm: simple} is limited to convex loss and SGD optimizer, the empirical applicability generalizes to non-convex neural networks (e.g. LLM and VLM) and adaptive optimizers (e.g. AdamW and Muon) as extensively investigated in \cite{bu2026convex}.

\subsection{From batch size sequence to loss sequence}
To understand the effect of batch size on the loss, we simply assume
\begin{align}
\E\|\g_t\|^2 \leq G^2+{X}/{B_t}
\label{eq:grad condition}
\end{align}
where $G$ upper bounds the norm of expected gradient $\|\E\g_t\|$, and $X$ upper bounds the trace of per-example gradient covariance matrix. We derive \eqref{eq:grad condition} and extend it to adaptive optimizers in \Cref{app:grad condition}.

\begin{theorem}
\label{thm:seq2seq}
Assuming \eqref{eq:grad condition}, for convex loss and any learning rate sequence $\eta_t$, we have
\begin{align}
\begin{split}
\E L(\w_\tau; \eta_t, B_t)\leq L_*+\frac{D^2}{2\sum_{t=1}^{\tau} \eta_t}+
\frac{1}{2}\sum_{t=1}^{\tau-1}\frac{\eta_t^2  (G^2+X/B_t)}{\sum_{k=t+1}^{\tau} \eta_k}+\frac{\eta_\tau}{2} (G^2+X/B_\tau)
\end{split}
\label{eq:loss mapping}
\end{align}
\end{theorem}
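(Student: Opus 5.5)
The plan is to obtain \Cref{thm:seq2seq} directly from \Cref{thm: simple}, substituting the gradient-second-moment bound \eqref{eq:grad condition} term by term. \Cref{thm: simple} already gives, for any convex loss and any learning rate sequence, an upper bound on $\E L(\w_\tau)$ that depends on the stochastic gradients only through $\E\|\g_t\|^2$ for $1\le t\le\tau$. The batch size enters the dynamics only through the distribution of $\g_t$. So the whole argument reduces to checking that each occurrence of $\E\|\g_t\|^2$ appears with a nonnegative coefficient, and then replacing it by $G^2+X/B_t$.

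\textbf{Step 1.} I will confirm that the proof of \Cref{thm: simple} (following \cite{defazio2023optimal}) does not depend on how $\g_t$ is formed, beyond unbiasedness $\E[\g_t\mid\w_t]=\nabla L(\w_t)$ (or a subgradient). A mini-batch gradient with batch size $B_t$ is still an unbiased stochastic (sub)gradient. Therefore \eqref{eq:origin3} holds verbatim when the batch size sequence $\{B_t\}$ is arbitrary and deterministic.

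\textbf{Step 2.} I will check the signs of the coefficients. In \eqref{eq:origin3} the coefficients of $\E\|\g_t\|^2$ are $\tfrac12\eta_t^2/\sum_{k=t+1}^\tau\eta_k$ for $t<\tau$, and $\tfrac12\eta_\tau$ for $t=\tau$. Both are nonnegative when $\eta_t\ge 0$, with the denominators positive: if some tail sum vanishes, the corresponding term should be read as $+\infty$, or we restrict to positive learning rates.

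\textbf{Step 3.} I will apply \eqref{eq:grad condition} to each $\E\|\g_t\|^2$. This is justified by the bias–variance decomposition
\[
\E\|\g_t\|^2=\E\|\E[\g_t\mid\w_t]\|^2+\E\,\mathrm{tr}\,\mathrm{Cov}(\g_t\mid\w_t)\le G^2+X/B_t,
\]
which uses that the conditional covariance of an average of $B_t$ i.i.d.\ per-example gradients is the per-example covariance divided by $B_t$. Monotonicity of the right-hand side of \eqref{eq:origin3} in these quantities then gives \eqref{eq:loss mapping} immediately.

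The only delicate point is that \eqref{eq:grad condition} must hold uniformly along the trajectory, i.e.\ conditionally on $\w_t$ and then in expectation. I take this as the paper's standing assumption; its derivation is deferred to \Cref{app:grad condition}. I do not expect any genuine obstacle beyond stating this conditioning carefully.
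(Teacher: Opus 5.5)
Your proposal is correct and matches the paper's proof, which consists solely of substituting \eqref{eq:grad condition} into \Cref{thm: simple}; your check that every coefficient of $\E\|\g_t\|^2$ in \eqref{eq:origin3} is nonnegative is the implicit step that makes this substitution valid. Your conditional form of the mean--covariance decomposition is slightly more careful than the unconditional version in \Cref{app:grad condition}, but the theorem takes \eqref{eq:grad condition} as a hypothesis, so the proof does not need it.
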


\begin{proof}[Proof of \Cref{thm:seq2seq}]
Substituting \eqref{eq:grad condition} to \Cref{thm: simple}.
\end{proof}

\subsection{Precise characterization under linear regression}
We validate \Cref{thm:seq2seq} and thus justify the assumptions \eqref{eq:origin3} and \eqref{eq:grad condition} on the various sequences of learning rate and batch size in \Cref{fig:nanogpt-muon}, where we construct three variables
$$x_{1,\tau}=\frac{1}{2\sum_{t=1}^\tau \eta_t},
x_{2,\tau}=\frac{1}{2}\sum_{t=1}^{\tau-1}\frac{\eta_t^2}{\sum_{k=t+1}^\tau \eta_k}+\frac{\eta_\tau}{2}, 
x_{3,\tau}=\frac{1}{2}\sum_{t=1}^{\tau-1}\frac{\eta_t^2 /B_t}{\sum_{k=t+1}^\tau \eta_k}+\frac{\eta_\tau/B_\tau}{2}$$
then solve a linear regression\footnote{If $B_t$ is constant, then the features $(\bm{x}_2,\bm{x}_3)$ are collinear and perfectly correlated, rendering the coefficients $(G,X)$ indistinguishable, although the linear regression is solvable.} with $\bm{L}=[L_1,...,L_T]$:
\begin{align}
\min_{L_*,D,G,X} \left\|L_*+D^2 \bm{x}_{1}+G^2 \bm{x}_{2}+X \bm{x}_{3}-\bm{L}\right\|_2^2
\label{eq:linear fit}
\end{align}

\begin{figure}[!htb]
\centering

\begin{tcolorbox}[
    colback=white,
    colframe=yellow!60!orange,
    boxrule=1.2pt,
    arc=3mm,
    left=0mm,
    right=0mm,
    top=1mm,
    bottom=0mm,
    width=\linewidth,
    title={\textbf{constant batch size, dynamic learning rate}},
    coltitle=black,
    colbacktitle=yellow!25,
    fonttitle=\normalsize,
    enhanced,
    attach boxed title to top center={
        yshift=-1.5mm
    },
    boxed title style={
        colback=yellow!25,
        colframe=yellow!60!orange,
        arc=2mm,
        boxrule=1pt
    }
]

\centering

\includegraphics[width=0.24\linewidth]{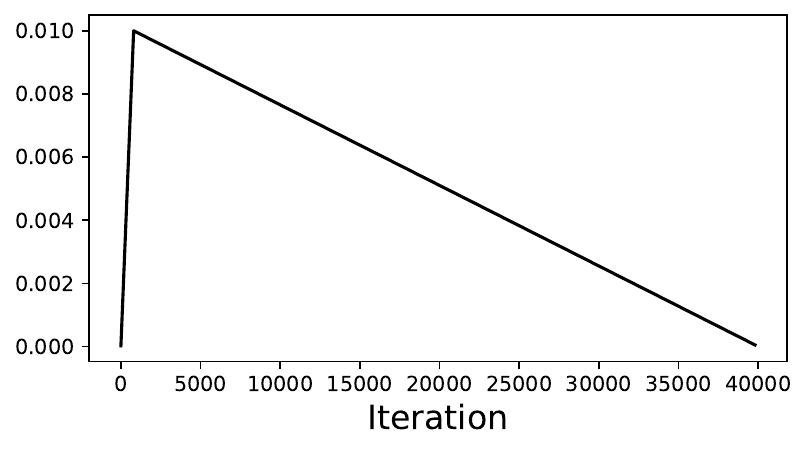}
\includegraphics[width=0.24\linewidth]{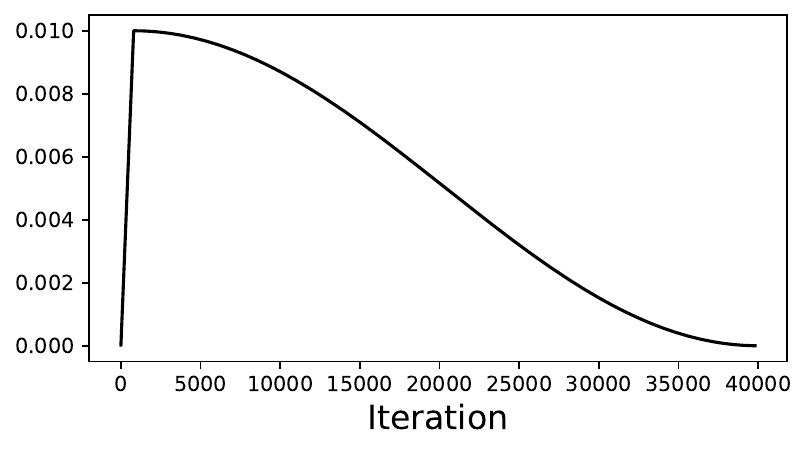}
\includegraphics[width=0.24\linewidth]{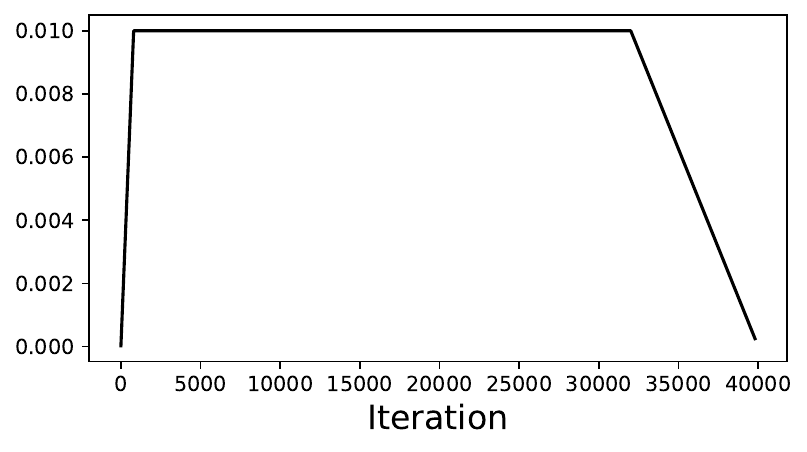}
\includegraphics[width=0.24\linewidth]{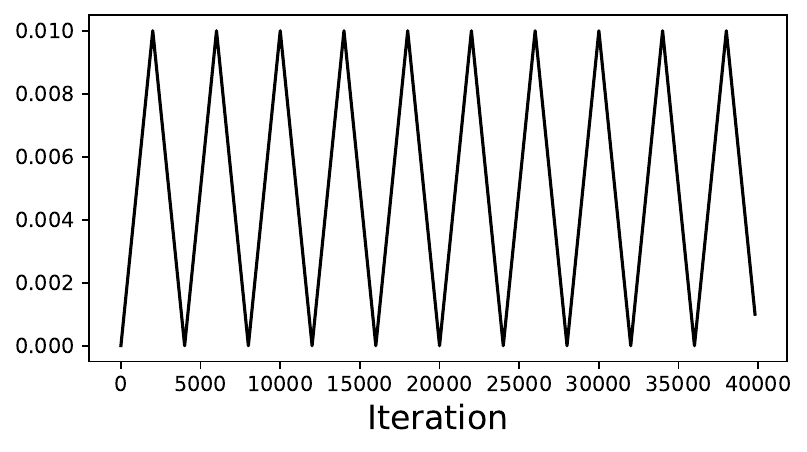}

\includegraphics[width=0.24\linewidth]{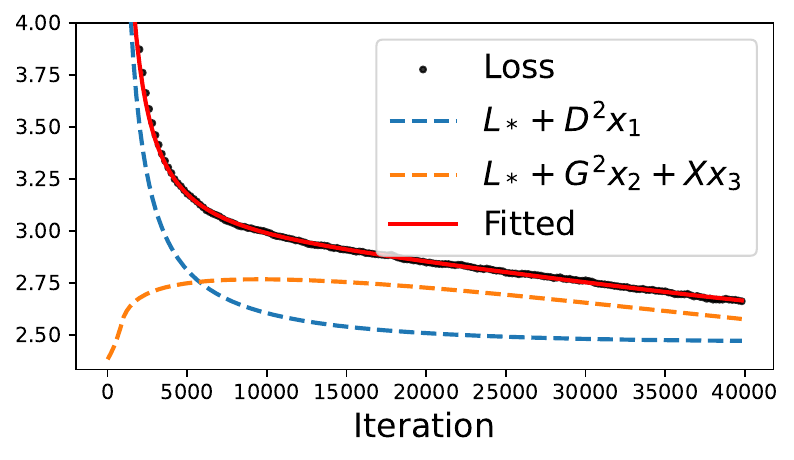}
\includegraphics[width=0.24\linewidth]{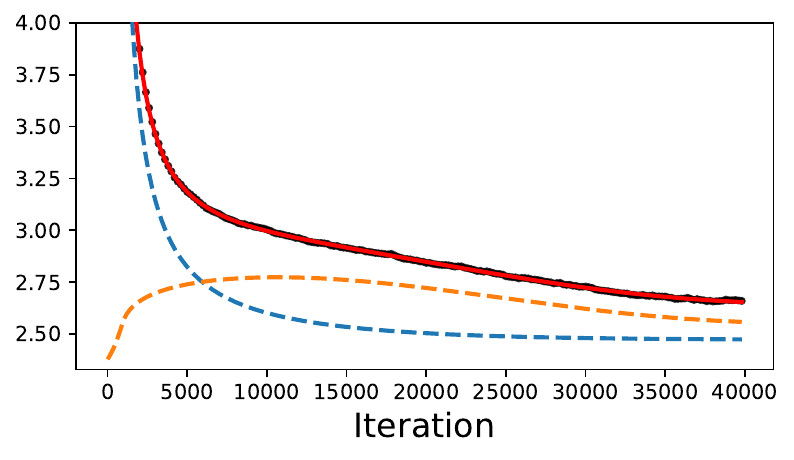}
\includegraphics[width=0.24\linewidth]{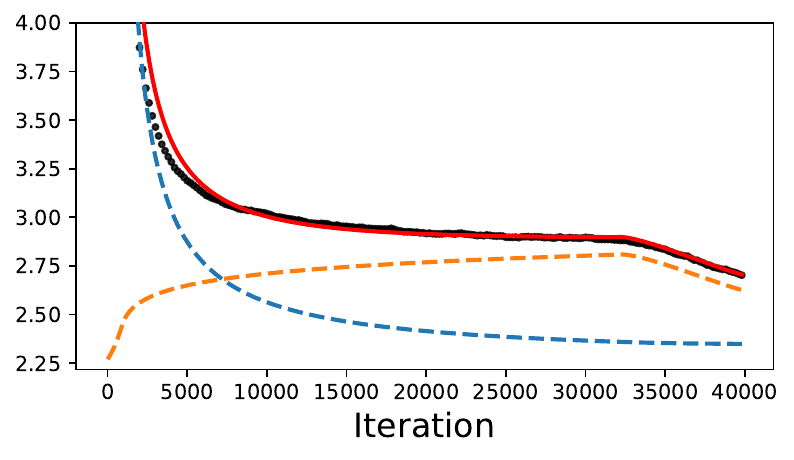}
\includegraphics[width=0.24\linewidth]{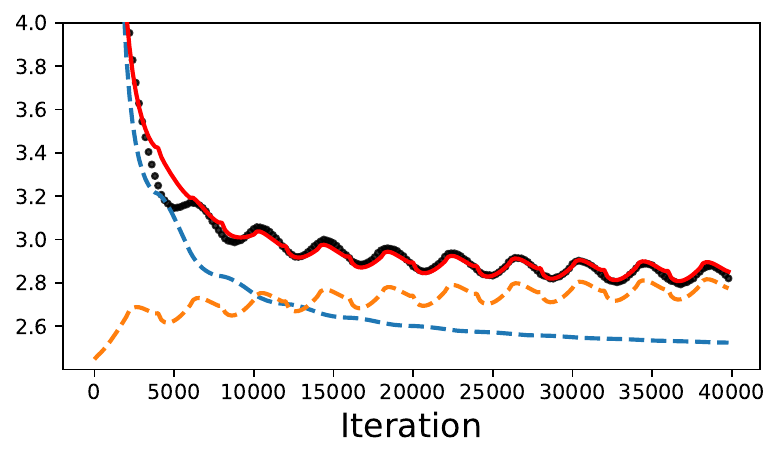}

\end{tcolorbox}

\begin{tcolorbox}[
    colback=white,
    colframe=blue!60,
    boxrule=1.2pt,
    arc=3mm,
    left=0mm,
    right=0mm,
    top=1mm,
    bottom=0mm,
    width=\linewidth,
    title={\textbf{constant learning rate, dynamic batch size}},
    coltitle=black,
    colbacktitle=blue!15,
    fonttitle=\normalsize,
    enhanced,
    attach boxed title to top center={
        yshift=-1.5mm
    },
    boxed title style={
        colback=blue!15,
        colframe=blue!60,
        arc=2mm,
        boxrule=1pt
    }
]

\centering

\includegraphics[width=0.24\linewidth]{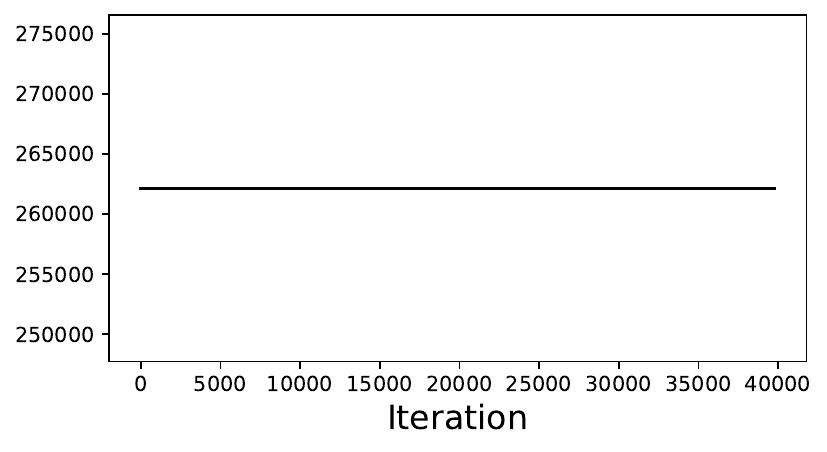}
\includegraphics[width=0.24\linewidth]{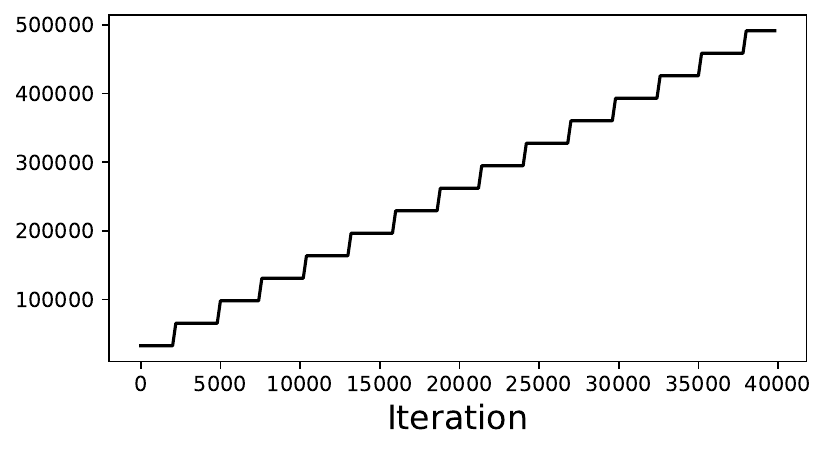}
\includegraphics[width=0.24\linewidth]{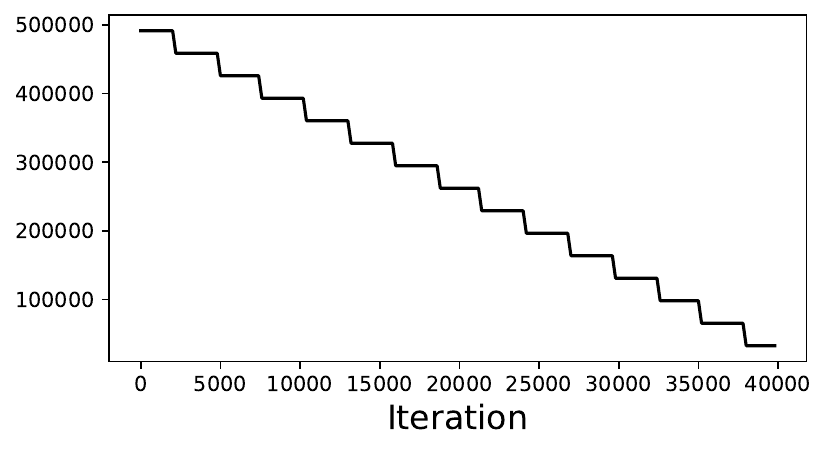}
\includegraphics[width=0.24\linewidth]{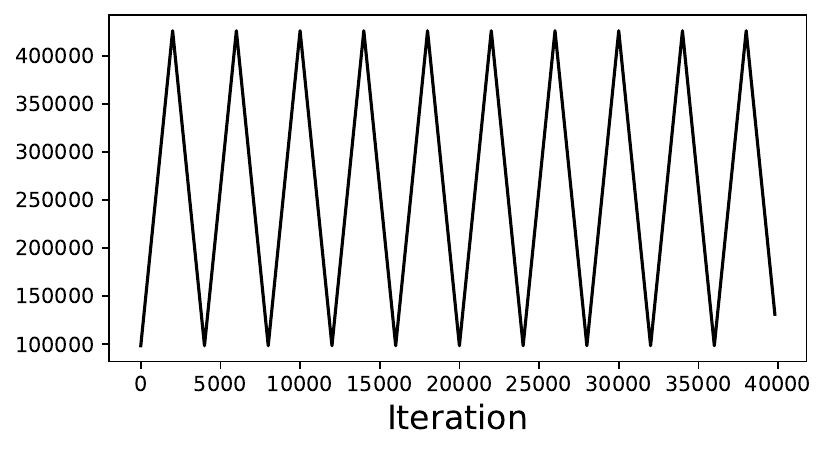}

\includegraphics[width=0.24\linewidth]{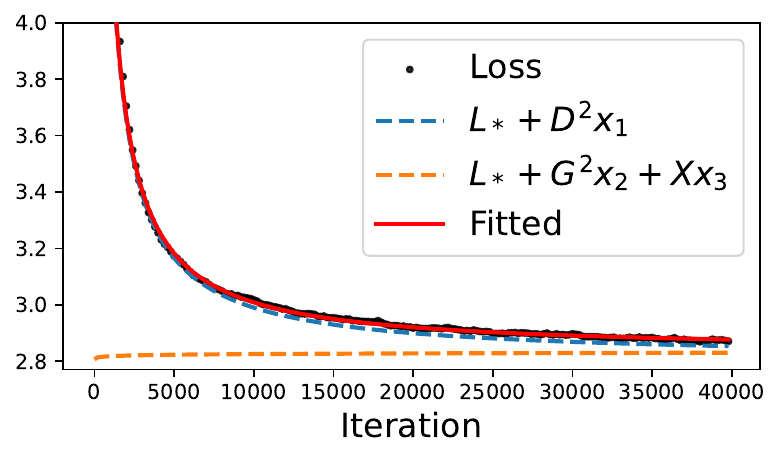}
\includegraphics[width=0.24\linewidth]{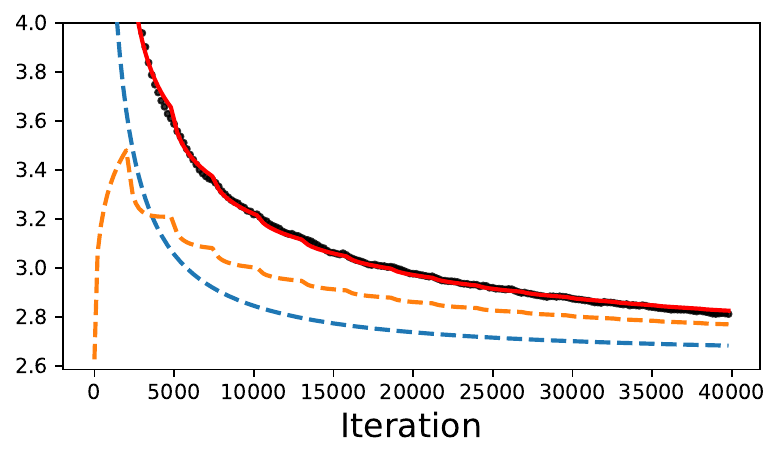}
\includegraphics[width=0.24\linewidth]{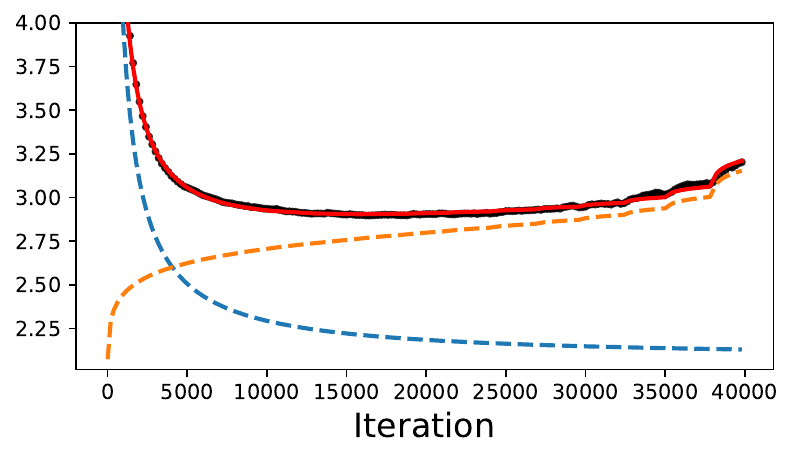}
\includegraphics[width=0.24\linewidth]{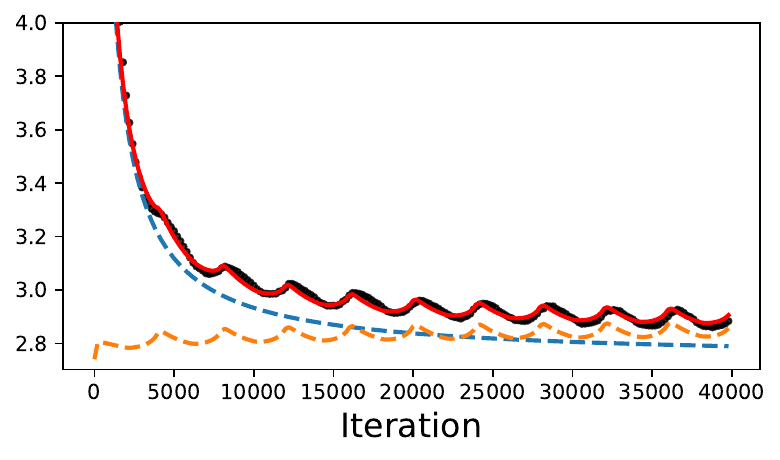}

\end{tcolorbox}
\caption{Sequence-to-sequence prediction by \Cref{thm:seq2seq} for Llama3-1B model on Fineweb-edu data, trained by Muon-NSGD optimizer. Top two rows: $B_t$ is fixed ($B=256*1024$) and $\eta_t$ is dynamic. Bottom two rows: $\eta_t$ is fixed ($\eta=0.01$) and $B_t$ is dynamic. Similar patterns for AdamW optimizer can be observed in \Cref{fig:nanogpt-adam}.
}
\label{fig:nanogpt-muon}
\end{figure}

As shown in \Cref{fig:nanogpt-muon}, for all cases, a precise characterization emerges after an initial transient phase. This indicates that the training dynamics become dominated by convexity, and that the inequality in \Cref{thm:seq2seq} may be replaced by equality in practice\footnote{$X/G^2$ is essentially the critical batch size $B_\text{simple}$ defined in \cite{mccandlish2018empirical}, which we treat as time-independent in line with \cite{merrillcritical,golmant2018computational}.}:
\begin{align}
\begin{split}
\E L(\w_\tau)\approx L_*+\frac{D^2}{2\sum_{t=1}^{\tau} \eta_t}+
\frac{1}{2}\sum_{t=1}^{\tau-1}\frac{\eta_t^2  (G^2+X/B_t)}{\sum_{k=t+1}^{\tau} \eta_k}+\frac{\eta_\tau}{2} (G^2+X/B_\tau)
\end{split}
\label{eq:loss mapping equality}
\end{align}

Specifically, if the learning rate decays to zero ($\eta_T=0$), then for the last iterate,
\begin{align}
\begin{split}
\E L(\w_T)\approx L_*+\frac{D^2}{2\sum_{t=1}^{T} \eta_t}+
\frac{1}{2}\sum_{t=1}^{T-1}\frac{\eta_t^2  (G^2+X/B_t)}{\sum_{k=t+1}^{T} \eta_k}
\end{split}
\label{eq:loss mapping last}
\end{align}

\section{Continuous loss prediction and optimal batch size schedule}
Now we optimize the batch size with respect to the last-iterate loss in \eqref{eq:loss mapping last}, subject to some constraints on the data size, compute budget, wall-clock training time, etc. To derive closed-form solutions, we will work in the continuous regime and replace the summations in \eqref{eq:loss mapping last} by integrals:
\begin{align}
L_*+\frac{D^2}{2\int_0^T \eta_t dt}+
\frac{G^2}{2}\int_0^T \left(\frac{\eta_t^2}{ \int_{t}^T\eta_k dk}\right)dt+\frac{X}{2}\int_0^T \left(\frac{\eta_t^2/B_t}{ \int_{t}^T\eta_k dk}\right) dt
\label{eq:loss mapping completely continuous}
\end{align}

For static batch size $B_t=B_\textup{static}$, we get
\begin{align}
L_\text{static}(T):=L_*+\frac{D^2}{2\int_0^T \eta_t dt}+
\frac{G^2}{2}\int_0^T \left(\frac{\eta_t^2}{ \int_{t}^T\eta_k dk}\right)dt+\frac{X}{2B_\textup{static}}\int_0^T \left(\frac{\eta_t^2}{ \int_{t}^T\eta_k dk}\right) dt
\label{eq:static loss}
\end{align}

\subsection{Data-optimal and compute-optimal batch size schedule}
We optimize the batch size given a fixed amount of data budget $K=B_\textup{static}T$, while the learning rate schedule and the number of total iterations $T$ are fixed. This is equivalent to optimizing the batch size given a fixed compute budget $C=6NK$ (FLOPs), where $N$ is model size.

It suffices to only minimize the last term of \eqref{eq:loss mapping completely continuous} with respect to $B_t$:
\begin{align}
\min_{B_t} \int_0^T \left( \frac{1}{B_t} \frac{\eta_t^2}{\int_t^T \eta_k dk}\right)dt,\quad \text{s.t.} \int_{0}^T B_t dt = K
\label{eq: constrained optim}
\end{align}
We solve this problem in \Cref{thm:optimal B} with a proof in \Cref{app:thm3}.
\begin{theorem}
\label{thm:optimal B}
The closed-form solution to \eqref{eq: constrained optim} is
\begin{align}
B_t^\textup{optim}=K\cdot\frac{b_t}{\int_0^T b_t dt}=\frac{K}{2}\frac{\eta_t}{\sqrt{\int_{0}^T\eta_k dk\int_{t}^T\eta_k dk}} \textup{ where }b_t:=\frac{\eta_t}{\sqrt{\int_{t}^T\eta_k dk}}
\label{eq:data optimal BS}
\end{align}
Importantly, the optimal batch size schedule only depends on the shape of the learning rate schedule, not the peak learning rate. \textbf{Thus, $B_t^\textup{optim}$ is fully decoupled from learning rate tuning and scale-free in terms of model size} (see \Cref{tab:our first} for the independence on $\eta_t$, and \Cref{fig:vary peak lr} for ablations on peak learning rate).
\end{theorem}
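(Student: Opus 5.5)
The plan is to treat \eqref{eq: constrained optim} as minimizing $\int_0^T a_t/B_t\,dt$ over positive $B_t$ subject to $\int_0^T B_t\,dt=K$. Here the weight is
\[
a_t:=\frac{\eta_t^2}{\int_t^T\eta_k dk}\ge 0 .
\]
I would solve it by the Cauchy--Schwarz inequality, which gives both the minimizer and a certificate of global optimality without any variational calculus.

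\textbf{Step 1: lower bound.} For any admissible $B_t>0$, write $\sqrt{a_t}=\sqrt{a_t/B_t}\cdot\sqrt{B_t}$ and apply Cauchy--Schwarz:
\[
\Big(\int_0^T\sqrt{a_t}\,dt\Big)^2\le \int_0^T \frac{a_t}{B_t}dt\cdot\int_0^T B_t\,dt = K\int_0^T \frac{a_t}{B_t}dt .
\]
This shows the objective is at least $\big(\int_0^T\sqrt{a_t}\,dt\big)^2/K$. Equality holds exactly when $\sqrt{a_t/B_t}\propto\sqrt{B_t}$, that is, when $B_t\propto\sqrt{a_t}=\eta_t/\sqrt{\int_t^T\eta_k dk}=b_t$. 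Normalizing by the budget constraint gives $B_t^\textup{optim}=K b_t/\int_0^T b_t\,dt$, which is the first expression in \eqref{eq:data optimal BS}.

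As a sanity check, the same answer comes from a Lagrangian: setting $-a_t/B_t^2+\lambda=0$ gives $B_t=\sqrt{a_t/\lambda}$. The objective is convex in $B_t$ on the positive reals, so this stationary point is the global minimizer.

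\textbf{Step 2: closed form of the normalizer.} Let $S(t):=\int_t^T\eta_k dk$, so that $S'(t)=-\eta_t$ and $S(T)=0$. Then $b_t=-S'(t)/\sqrt{S(t)}=-2\frac{d}{dt}\sqrt{S(t)}$, and therefore
\[
\int_0^T b_t\,dt=2\big(\sqrt{S(0)}-\sqrt{S(T)}\big)=2\sqrt{\int_0^T\eta_k dk}.
\]
Substituting this gives the second expression, $\frac{K}{2}\,\eta_t/\sqrt{\int_0^T\eta_k dk\int_t^T\eta_k dk}$.

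\textbf{Step 3: scale invariance.} Replace $\eta_t$ by $c\,\eta_t$. The numerator scales by $c$ and the square root in the denominator scales by $\sqrt{c^2}=c$, so $B_t^\textup{optim}$ is unchanged. It therefore depends only on the shape of the schedule, not on the peak learning rate. It also involves no model-dependent quantity such as $D$, $G$ or $X$, since those appear only as multiplicative constants outside the term being optimized.

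\textbf{Main obstacle.} The delicate step is integrability near $t=T$. There $S(t)\to0$, so $a_t$ and $b_t$ may blow up, and both the Cauchy--Schwarz step and the normalizing integral need to be justified. I would handle this by noting that $\int_0^T b_t\,dt$ is computed exactly and is finite via the antiderivative $-2\sqrt{S}$. This requires only $\eta_t\ge0$ integrable, taken as an improper integral at $T$. Consequently $\sqrt{a_t}=b_t$ is integrable and the lower bound is finite. If $\int_0^T a_t/B_t\,dt=\infty$ for some admissible $B_t$, that choice is trivially not optimal, so the inequality argument is valid on all admissible schedules.
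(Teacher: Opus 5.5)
Your proof is correct, and it takes a different route from the paper. The paper argues only through the Lagrangian. It sets the pointwise derivative of $b_t^2/B_t+\lambda B_t$ to zero, obtaining $B_t=b_t/\sqrt{\lambda}$, and fixes $\lambda$ from the budget. It then simply asserts $\int_0^T b_t\,dt=2\sqrt{\int_0^T\eta_k dk}$. It never checks that the stationary point is a global minimizer, and it never addresses the singularity of $b_t$ at $t=T$.

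Your Cauchy--Schwarz argument gives a lower bound valid for every admissible schedule, together with an exact equality case. Global optimality and a.e.\ uniqueness therefore come for free. It also yields the optimal value $\bigl(\int_0^T b_t\,dt\bigr)^2/K$ directly. With $K=B_\textup{static}T$, this is exactly the last term of \Cref{thm:dynamic loss}. It also makes \Cref{cor:variance reduction} immediate, since the static schedule $B_t\equiv K/T$ is just one admissible competitor. Your antiderivative $-2\sqrt{S(t)}$ supplies the normalizer computation the paper omits and handles the improper integral at $T$ cleanly. Your convexity remark in the Lagrangian sanity check is precisely the sufficiency step missing from the paper's version.

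What the paper's Lagrangian route buys is reusability. The same stationarity computation is recycled for \Cref{lemma:general cost} with a nonlinear cost $f(B_t)$. There no single Cauchy--Schwarz inequality is available: you would need H\"older for power costs, or convexity of $f$ plus a duality argument in general.
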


To give some concrete examples, we summarize the optimal batch size schedules corresponding to different learning rate schedules, which are visualized in \Cref{fig:visual} \footnote{We observe no empirical difference whether the learning rate warmup is calculated in $B_t^\text{optim}$ or not.}. The endpoint behavior of $B_t^{\mathrm{optim}}$ is governed by the \emph{order} at which the learning rate approaches zero: if $\eta_t \sim (T-t)^p$ as $t\to T$, then $b_t = \eta_t/\sqrt{\int_t^T \eta_k\,dk} \sim (T-t)^{(p-1)/2}$, so schedules with a nonzero terminal slope ($p=1$, e.g.\ linear and the WSD cooldown) yield a constant terminal batch size, whereas schedules that flatten to zero ($p=2$, e.g.\ cosine with $\eta_t \propto (T-t)^2$) drive $B_t^{\mathrm{optim}} \to 0$.

\begin{remark}
     Our characterization 
departs from the common simplification that the training dynamics are governed by the ratio $\eta_t/B_t$ alone \citep{smith2018dontdecay, goyal2017accurate, mccandlish2018empirical}, under which learning rate decay could be freely traded for batch size increase. We test this directly in \Cref{sec:eta_B_ratio}: for configurations with matched $\eta_t/B_t$ trajectories and identical token budgets, realizing the schedule through the learning rate consistently outperforms realizing it through the batch size (by $+0.017$ to $+0.743$ validation loss; see Table~\ref{tab:eta_B_ratio}). Hence $\eta_t/B_t$ is not a sufficient statistic for the training dynamics, and we justify that optimizing the batch size schedule is an independent dimension rather than as a re-parameterization of the learning rate.
\end{remark}

\begin{table}[!htb]
\caption{Optimal batch size schedules for  different learning rate schedules.}
\centering
\resizebox{\linewidth}{!}{
    \begin{tabular}{c|c|c|c}
    \hline
        learning rate & $\eta_t$ formula &$b_t$ formula& $B_t^\textup{optim}$ formula\\\hline\hline
      constant  & $\eta$& $\sqrt{\frac{\eta}{T-t}}$&$\frac{B_\text{static}}{2}\sqrt{\frac{T}{T-t}}$\\\hline
      cosine  & $\frac{\eta}{2}(\cos(\frac{\pi t}{T})+1)$&$\sqrt{\frac{\eta}{2}}\frac{1+\cos(\pi t/T)}{\sqrt{T-t-T\sin(\pi t/T)/\pi}}$&$
\frac{
B_\text{static}\,\cos^2\!\left(\frac{\pi t}{2T}\right)
}{\,\sqrt{
1-\frac{t}{T}-\frac{1}{\pi}\sin\frac{\pi t}{T}
}
}$\\\hline
      linear & $\eta(1-t/{T})$& $\sqrt{\frac{2\eta}{T}}$ &$B_\text{static}$\\\hline
      WSD & $\begin{cases}
  \eta&\textit{ if } t\leq cT\\
  \eta\frac{T-t}{T-cT}&\textit{ if } t> cT         
      \end{cases}$ & $\begin{cases}
  \sqrt{\frac{2\eta}{T+cT-2t}}  &\text{ if }t\leq cT\\
  \sqrt{\frac{2\eta}{T-cT}}  &\text{ if }t> cT
\end{cases}$&$\begin{cases}
  \frac{B_\text{static}T}{\sqrt{(T+cT-2t)(T+cT)}}  &\text{ if }t\leq cT\\ \frac{B_\text{static}}{\sqrt{1-c^2}}  &\text{ if }t> cT
\end{cases}$ \\
\hline
\end{tabular}
}
    \label{tab:our first}
\end{table}

\begin{figure}[!htb]
    \centering
    \includegraphics[width=0.324\linewidth]{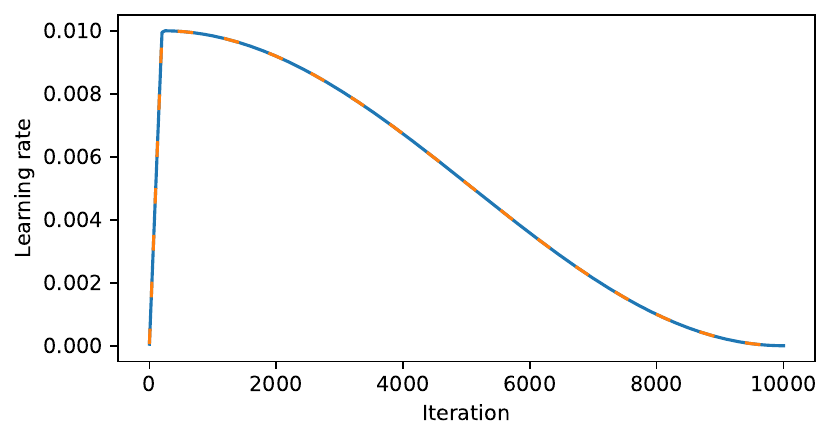}
    \includegraphics[width=0.324\linewidth]{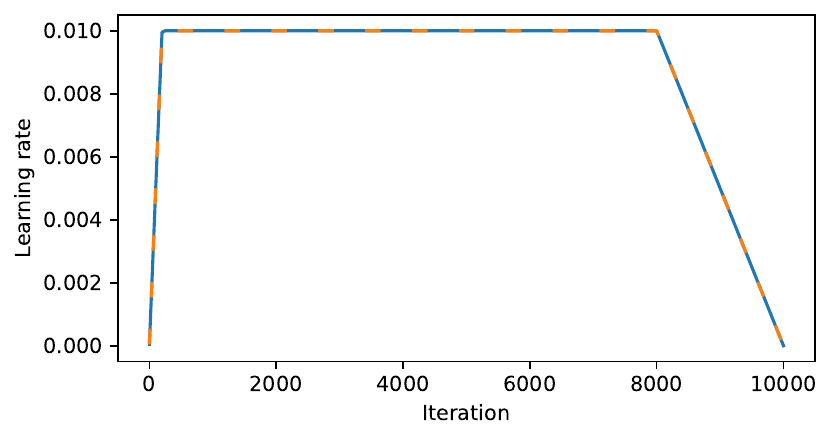}
    \includegraphics[width=0.33\linewidth]{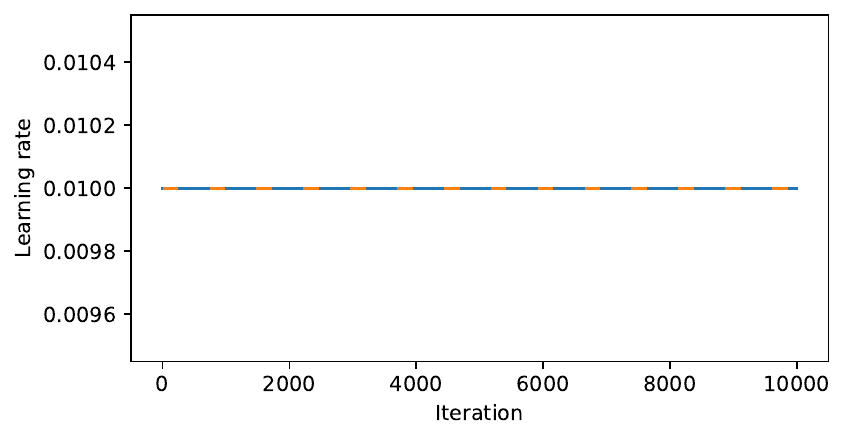}
    \\
    \includegraphics[width=0.324\linewidth]{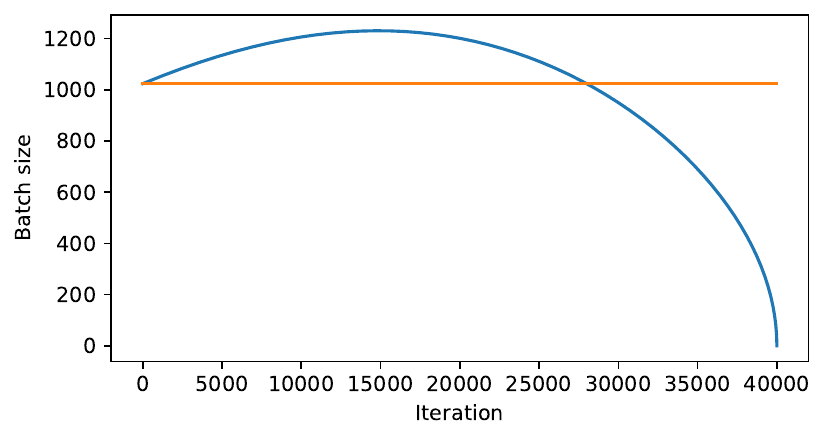}
    \includegraphics[width=0.324\linewidth]{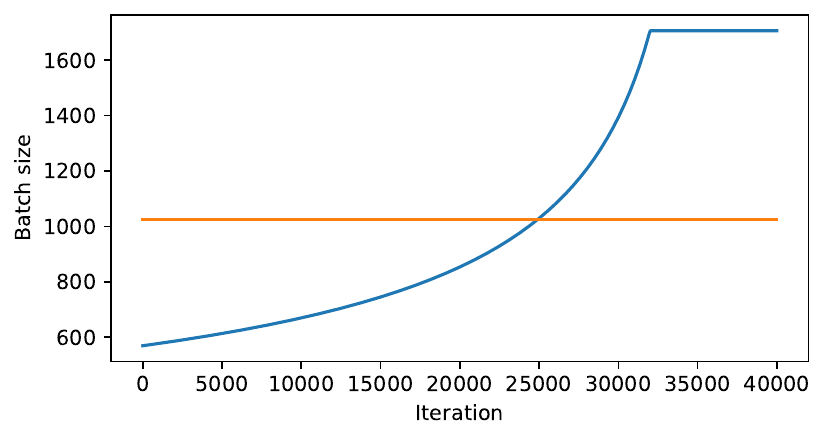}
\includegraphics[width=0.324\linewidth]{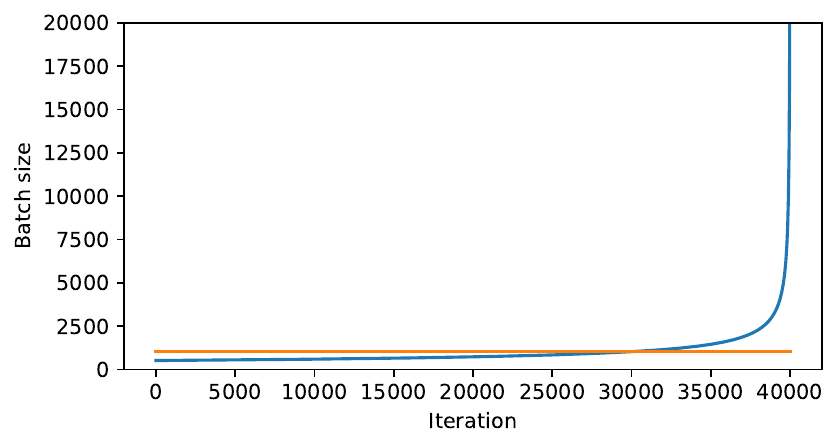}
\vspace{-0.2cm}
    \caption{Optimal batch size schedules (\textcolor{blue}{blue}) by \Cref{thm:optimal B} and static schedules (\textcolor{orange}{orange}) for cosine, WSD, and constant learning rate schedules.
 }
    \label{fig:visual}
\end{figure}

\subsection{Optimal batch size leverages variance reduction}
To demonstrate the theoretical advantage of the optimal batch size schedule, we firstly derive the last-iterate loss.
\begin{corollary}
\label{thm:dynamic loss}
Substituting \eqref{eq:data optimal BS} back to \eqref{eq:loss mapping completely continuous}, we have
$$L_\textup{dynamic}(T):=L_*+\frac{D^2}{2\int_0^T \eta_t dt}+
\frac{G^2}{2}\int_0^T \left(\frac{\eta_t^2}{ \int_{t}^T\eta_k dk}\right)dt+\frac{X}{2B_\textup{static}T}\left(\int_0^T \frac{\eta_t}{\sqrt{\int_{t}^T\eta_k dk}} dt \right)^2$$
\end{corollary}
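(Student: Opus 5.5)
The plan is to substitute the closed form \eqref{eq:data optimal BS} from \Cref{thm:optimal B} directly into the continuous loss \eqref{eq:loss mapping completely continuous}. The first three terms, $L_*$, the $D^2$ term and the $G^2$ term, do not involve $B_t$ and carry over unchanged. The whole computation therefore reduces to evaluating the last term,
\[
\frac{X}{2}\int_0^T \frac{\eta_t^2}{B_t^\textup{optim}\int_t^T\eta_k\,dk}\,dt ,
\]
at the optimizer. We use the data budget in the form $K=B_\textup{static}T$, as fixed at the start of this subsection.

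Write $I_t:=\int_t^T\eta_k\,dk$ and $b_t=\eta_t/\sqrt{I_t}$. Then the integrand of the last term is $\eta_t^2/(B_t I_t)=b_t^2/B_t$. With $B_t^\textup{optim}=K b_t/Z$, where $Z:=\int_0^T b_s\,ds$, this becomes $b_t Z/K$. Integrating over $[0,T]$ gives $Z^2/K$, so the last term equals
\[
\frac{X}{2K}\Bigl(\int_0^T \frac{\eta_t}{\sqrt{\int_t^T\eta_k dk}}\,dt\Bigr)^2 .
\]
Replacing $K$ by $B_\textup{static}T$ yields exactly the stated expression.

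As a consistency check, the second form in \eqref{eq:data optimal BS} follows from the identity $Z=\int_0^T \eta_t I_t^{-1/2}dt=-\int_0^T I_t' I_t^{-1/2}dt=2\sqrt{I_0}$. This shows $Z$ is finite, so the normalization is well defined. It also gives the alternative form $\frac{2X}{B_\textup{static}T}\int_0^T\eta_t\,dt$ of the last term.

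The only delicate point is integrability near $t=T$, where $I_t\to0$. There I would check that $b_t$ is integrable, which is immediate from the antiderivative $-2\sqrt{I_t}$, and that the cancellation $b_t^2/B_t\propto b_t$ holds pointwise for $t<T$, so no boundary term arises. No other obstacle is expected, since the corollary is a direct substitution.
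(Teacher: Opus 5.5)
Your proposal is correct and follows the paper's route exactly: the corollary is just substituting $B_t^\textup{optim}=Kb_t/\int_0^T b_s\,ds$ with $K=B_\textup{static}T$ into the last term of \eqref{eq:loss mapping completely continuous}, which gives $\int_0^T b_t^2/B_t\,dt=(\int_0^T b_t\,dt)^2/K$ while the $B$-independent terms are unchanged. Your identity $\int_0^T b_t\,dt=2\sqrt{\int_0^T\eta_t\,dt}$ is the same one used in the paper's proof of \Cref{thm:optimal B}, and the resulting form $\frac{2X}{B_\textup{static}T}\int_0^T\eta_t\,dt$ reproduces the dynamic rows of \Cref{tab:theorem1}.
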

Secondly, we compare $L_\textup{dynamic}$ in \Cref{thm:dynamic loss} to $L_\text{static}$ in \eqref{eq:static loss}, where the only difference is in the last terms. This difference explains the advantage of optimal batch size through the lens of variance reduction, as we show in \Cref{cor:variance reduction}.
\begin{corollary}
\label{cor:variance reduction}
Denoting $Z_t:=\frac{\eta_t}{\sqrt{\int_{t}^T\eta_k dk}}$, we show that the difference is always non-positive, 
$$L_\textup{dynamic}(T)-L_\textup{static}(T)=\frac{XT}{2B_\textup{static}}\left[\left(\frac{\int_0^T Z_t dt}{T}\right)^2- \frac{\int_0^T Z_t^2dt}{T}\right]\leq 0$$
\end{corollary}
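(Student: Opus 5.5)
The plan is to compute the difference directly from the closed forms already available and then recognize the bracket as minus a variance. By \Cref{thm:dynamic loss} and \eqref{eq:static loss}, $L_\textup{dynamic}(T)$ and $L_\textup{static}(T)$ share the first three terms ($L_*$, the $D^2$ term and the $G^2$ term). They differ only in the $X$-term. First observe that $Z_t^2=\eta_t^2/\int_t^T\eta_k\,dk$. The static last term is therefore $\frac{X}{2B_\textup{static}}\int_0^T Z_t^2\,dt$, and the dynamic last term is $\frac{X}{2B_\textup{static}T}\bigl(\int_0^T Z_t\,dt\bigr)^2$. Subtracting and factoring out $\frac{XT}{2B_\textup{static}}$ gives
\begin{align*}
L_\textup{dynamic}(T)-L_\textup{static}(T)=\frac{XT}{2B_\textup{static}}\left[\frac{1}{T^2}\Big(\int_0^T Z_t\,dt\Big)^2-\frac{1}{T}\int_0^T Z_t^2\,dt\right],
\end{align*}
which is exactly the claimed identity. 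This step is pure bookkeeping, and it relies on the substitution already carried out in \Cref{thm:dynamic loss}, where $\int_0^T B_t^\textup{optim}dt=K=B_\textup{static}T$.

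For the sign, I would interpret the bracket probabilistically. Let $t$ be uniform on $[0,T]$. Then the bracket equals $(\E Z)^2-\E Z^2=-\mathrm{Var}(Z)\le 0$. Equivalently, by Cauchy--Schwarz, $\bigl(\int_0^T Z_t\cdot 1\,dt\bigr)^2\le \int_0^T Z_t^2\,dt\cdot\int_0^T 1\,dt=T\int_0^T Z_t^2\,dt$. Since $X\ge 0$ (it bounds a trace of a covariance matrix) and $B_\textup{static},T>0$, the prefactor is nonnegative, so the difference is $\le 0$. Equality holds iff $Z_t$ is a.e.\ constant, which matches the linear schedule row of \Cref{tab:our first}, where $B_t^\textup{optim}=B_\textup{static}$.

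The only genuine subtlety is integrability. Near $t=T$ the quantity $\int_t^T\eta_k\,dk\to 0$. For a constant learning rate, $Z_t^2\sim \eta/(T-t)$ is not integrable, so $L_\textup{static}$ diverges while $\int Z_t\,dt$ stays finite. I would handle this by noting that Cauchy--Schwarz remains valid in the extended sense: if $\int Z_t^2=\infty$, the inequality holds trivially with the difference equal to $-\infty$. More generally, I would assume $Z\in L^2[0,T]$ so that the identity holds as an equality of finite numbers. Per the endpoint analysis after \Cref{thm:optimal B}, this is the case whenever $\eta_t\sim (T-t)^p$ with $p\ge 1$, since then $Z_t^2\sim (T-t)^{p-1}$.
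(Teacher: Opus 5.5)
Your proposal is correct and matches the paper's argument, which likewise notes that only the $X$-terms differ and then applies Jensen's inequality (equivalently Cauchy--Schwarz to $Z_t$ and $1$). One small correction to your integrability aside: if $\int_0^T Z_t^2\,dt=\infty$, the shared $G^2$ term $\frac{G^2}{2}\int_0^T Z_t^2\,dt$ also diverges, so for $G>0$ both $L_\textup{dynamic}(T)$ and $L_\textup{static}(T)$ are infinite and the full difference is $\infty-\infty$ rather than $-\infty$; your fallback assumption $Z\in L^2[0,T]$ is the right way to state the result.
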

\begin{proof}[Proof of \Cref{cor:variance reduction}]
Jensen's inequality, or Cauchy-Schwarz inequality applied to $Z_t$ and $1$.
\end{proof}

Additionally, we derive three facts from \Cref{tab:theorem1}.
\begin{itemize}
    \item \textbf{Variance reduction:} Our batch size schedule reduces the gradient covariance term with $\frac{X}{B_\text{static}}$. For example, dynamic batch size improves $1.061\frac{X}{B_\text{static}}\to \frac{X}{B_\text{static}}$ for cosine learning rate, and $2.47\frac{X}{B_\text{static}}\to 1.9\frac{X}{B_\text{static}}$ for WSD learning rate with 10\% cooldown.
    \item \textbf{Scaled learning rate:} The optimal peak learning rate for our batch size schedule is $1/\sqrt{T}$-scaled for qualified learning rate, by minimizing $L(T)-L_*$ over $\eta$ in the third column.
    \item \textbf{Optimal loss convergence rate:} Our batch size schedule consequently achieves $O(1/\sqrt{T})$ loss convergence in the last column.
\end{itemize}
\begin{table}[!htb]
\centering
\caption{Loss prediction by dynamic and static batch size schedules for different learning rate schedules.}
\resizebox{\linewidth}{!}{
    \begin{tabular}{c|c|c|c}
    \hline
        learning rate & batch size& $L(T)-L_*$ &optimal $L(T)-L_*$
        \\\hline
         \multirow{2}{*}{constant}
         &dynamic& $\frac{D^2}{2T\eta} + \frac{\eta}{2}G^2\ln T + \frac{2\eta X}{B_\textup{static}}$&$D\sqrt{G^2\frac{\ln T}{T}+4X/{B_\textup{static}}\frac{1}{T}}$
         \\
         &static& $ \frac{D^2}{2T \eta} + \frac{\eta}{2}G^2 \ln T+\frac{\eta X}{2B_\textup{static}} \ln T$& $ D \sqrt{G^2\frac{\ln T}{T}+X/B_\textup{static}\frac{\ln T}{T}}$ \\\hline     
         \multirow{2}{*}{cosine}
         &dynamic& $\frac{D^2}{T\eta}+{1.061\eta}G^{2}+\frac{X\eta}{B_\text{static}}$
&$2 D \sqrt{1.061G^2+X/B_\textup{static}} \sqrt{\frac{1}{T}}$
\\
         &static&$\frac{D^2}{T \eta}+1.061\eta G^2+1.061\frac{X\eta}{B_\textup{static}}$ & $2 D \sqrt{1.061G^2+1.061X/B_\textup{static}} \sqrt{\frac{1}{T}}$ \\\hline
       \multirow{2}{*}{WSD}
      &dynamic&$\frac{D^2}{(1+c) T\eta}+\eta G^2\left[1+\frac{1}{2}\ln\frac{1+c}{1-c}\right]+\eta X/B_\textup{static}(1+c)$ &$\frac{2D}{\sqrt{(1+c)T}}\sqrt{\left[1+\frac{1}{2}\ln\frac{1+c}{1-c}\right]G^2+(1+c)\frac{X}{B_\textup{static}}}$ 
\\
         &static&$\frac{D^2}{(1+c)T\eta}+ \eta (G^2+X/B_\textup{static})
\left[1+\frac{1}{2}\ln\left(\frac{1+c}{1-c}\right)\right]$& $  \frac{2D}{\sqrt{(1+c)T}}\sqrt{G^2+\frac{X}{B_\textup{static}}} \sqrt{1+\frac{1}{2}\ln\left(\frac{1+c}{1-c}\right)}$ \\
\hline
          \end{tabular}
}
\label{tab:theorem1}
\end{table}

\begin{figure}[!htb]
    \centering
\includegraphics[width=0.32\linewidth]{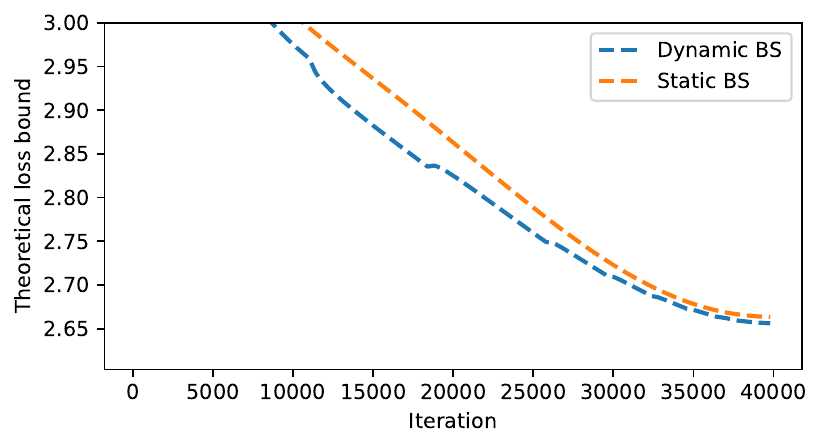}
    \includegraphics[width=0.32\linewidth]{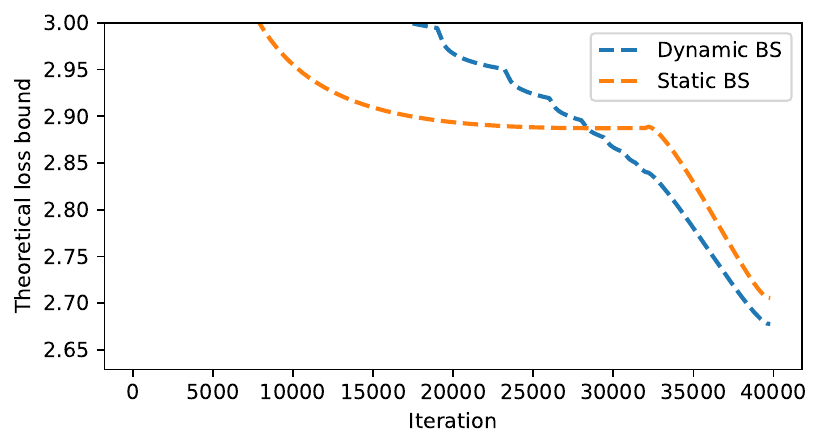}
\includegraphics[width=0.32\linewidth]{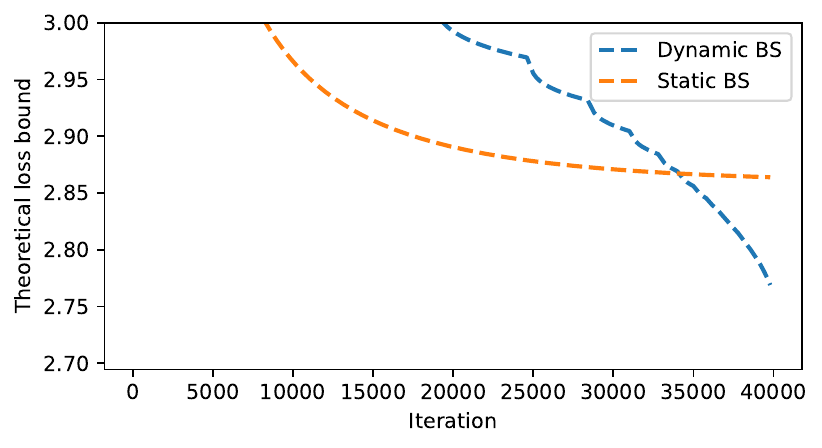}
  \\
    \includegraphics[width=0.32\linewidth]{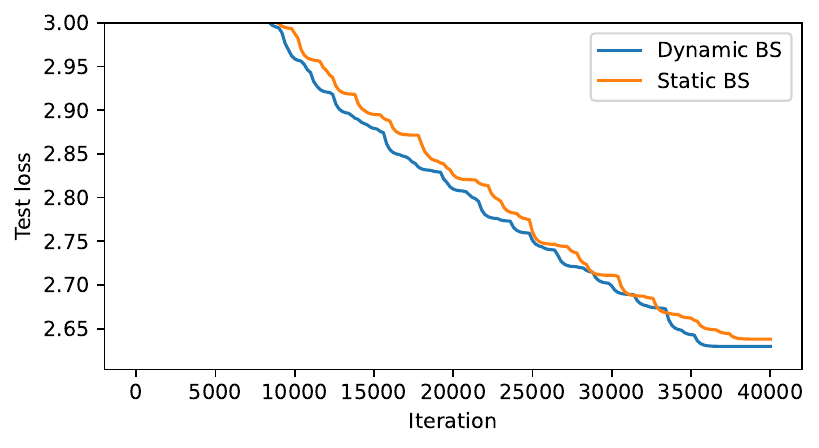}
    \includegraphics[width=0.32\linewidth]{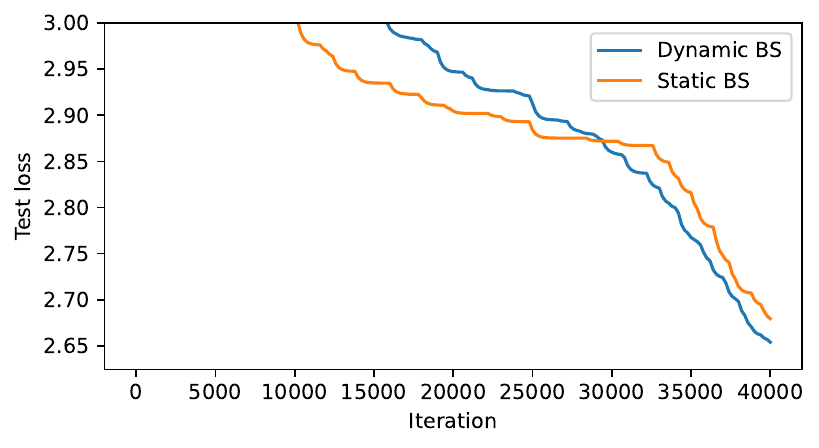}
    \includegraphics[width=0.32\linewidth]{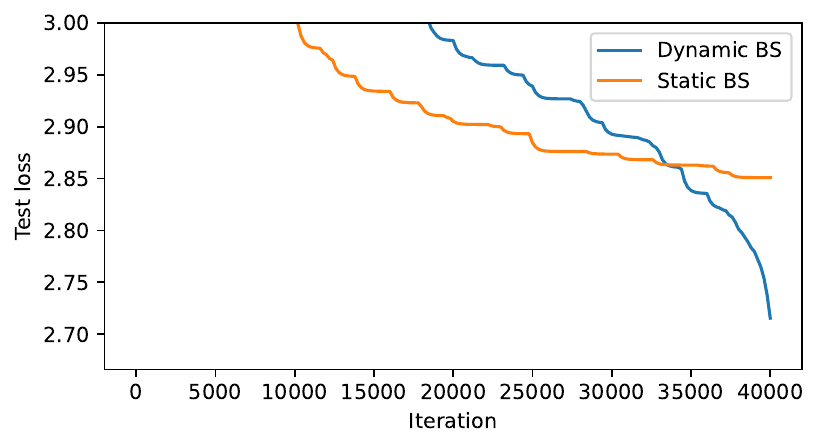}
  \\
    \includegraphics[width=0.32\linewidth]{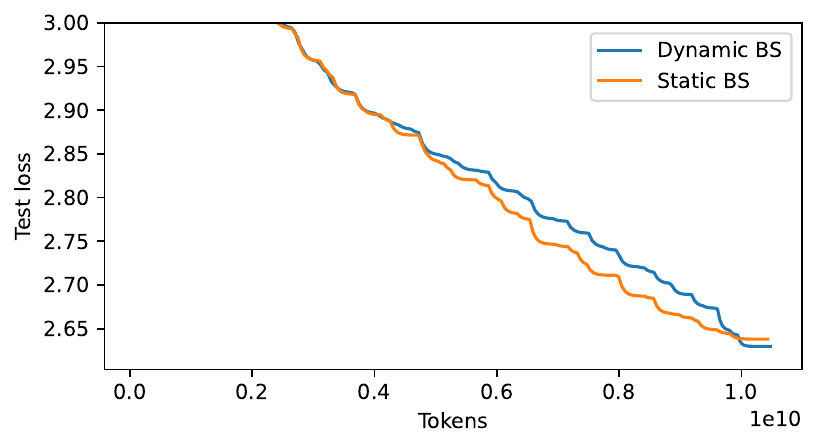}
    \includegraphics[width=0.32\linewidth]{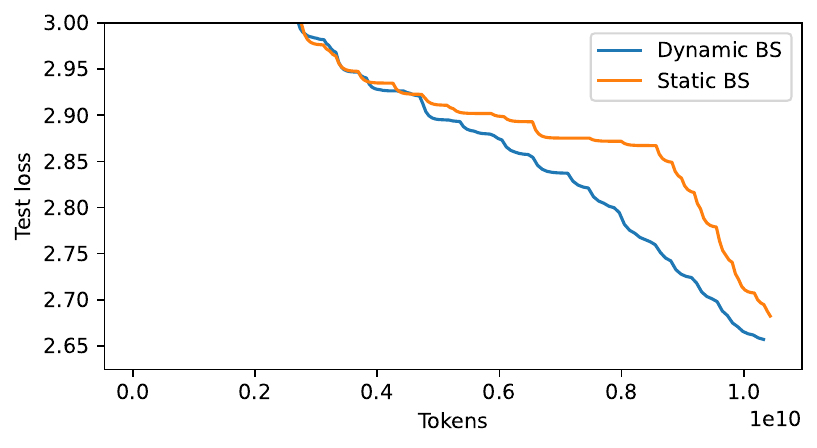}
    \includegraphics[width=0.32\linewidth]{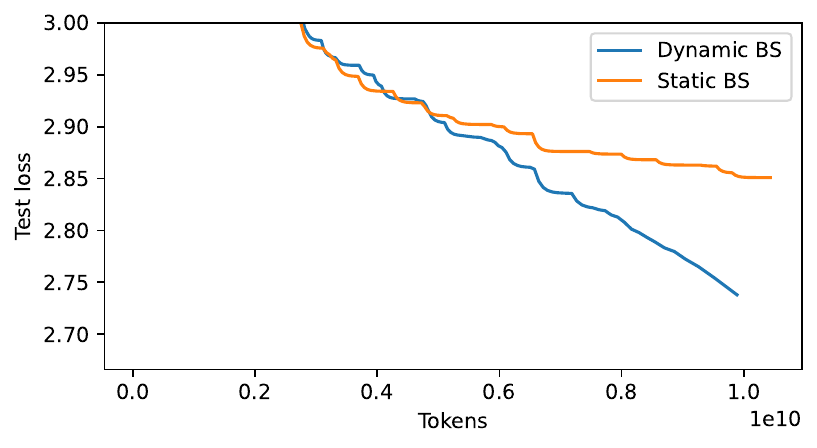}
\vspace{-0.2cm}
    \caption{Dynamic batch size consistently outperforms static batch size for Llama3-1B model on Fineweb-edu data, trained by Muon-NSGD optimizer. Left to right: cosine, WSD, and constant learning rate. Relative improvement of best perplexity is 0.8\%, 2.5\%, and 14.6\%, respectively.
}
    \label{fig:dynamicBS}
\end{figure}

\subsection{General computational cost-optimal batch size schedule}
Theoretically, we can derive the optimal batch size for any general computational cost function $f$, as an extension of \eqref{eq: constrained optim}, e.g. the wall-clock training time. 
\begin{align}
\min_B \int_0^T \left( \frac{\eta_t^2}{B_t} \frac{1}{\int_t^T \eta_s ds}\right) dt,\quad  s.t. \int_0^T f(B_t) dt= K
\label{eq: constrained optim2}
\end{align}
Notice that when $f(B)=B$, the following reduces to the data-optimal case in \Cref{thm:optimal B}. We obtain the closed form solution to \eqref{eq: constrained optim2} as $B_t^*(\lambda)$ in \Cref{lemma:general cost} (see proof in \Cref{app:thm4}).
\begin{lemma}
\label{lemma:general cost}
Denote $g^{-1}(x)=x^2 f'(x)$, then we can determine $\lambda$ and $B_t^*$ through
$$\int_0^T f\left(g\left(\frac{\eta_t^2}{\int_t^T \eta_s ds}/\lambda\right)\right) dt = K,
\quad B_t^*=g\left(\frac{\eta_t^2}{\int_t^T \eta_s ds}/\lambda\right)$$
\end{lemma}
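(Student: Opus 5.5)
The plan is a Lagrangian (pointwise first-order) argument for \eqref{eq: constrained optim2}, mirroring the data-optimal case of \Cref{thm:optimal B}. Write $a_t:=\frac{\eta_t^2}{\int_t^T \eta_s ds}\ge 0$, so the objective is $\int_0^T a_t/B_t\,dt$ with constraint $\int_0^T f(B_t)\,dt=K$. Form
\[
\mathcal{L}(B,\lambda)=\int_0^T \frac{a_t}{B_t}\,dt+\lambda\left(\int_0^T f(B_t)\,dt-K\right).
\]
Because neither the objective nor the constraint couples different times $t$, the stationarity condition decouples pointwise: differentiating the integrand in $B_t$ gives $-a_t/B_t^2+\lambda f'(B_t)=0$, i.e.\ $B_t^2 f'(B_t)=a_t/\lambda$. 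With $g^{-1}(x)=x^2f'(x)$ this is exactly $B_t^*=g(a_t/\lambda)$. Substituting into the constraint yields the scalar equation $\int_0^T f(g(a_t/\lambda))\,dt=K$, which determines $\lambda$. As a sanity check, for $f(B)=B$ we get $g^{-1}(x)=x^2$, so $B_t^*=\sqrt{a_t/\lambda}\propto b_t$; normalizing recovers \eqref{eq:data optimal BS}.

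To make this rigorous rather than formal, I would impose standing assumptions: $f$ is increasing, differentiable and convex on $(0,\infty)$, and $x\mapsto x^2f'(x)$ is strictly increasing, so that $g$ is well defined. Under these assumptions the pointwise problem $\min_{B>0}\, a_t/B+\lambda f(B)$ is strictly convex in $B$ for $\lambda>0$, because $a_t/B$ is convex and $f$ is convex. The stationary point is therefore the unique pointwise minimizer. Weak duality then gives global optimality: for any feasible $B$,
\[
\int \frac{a_t}{B_t} = \mathcal{L}(B,\lambda) \ge \int \left(\frac{a_t}{B_t^*}+\lambda f(B_t^*)\right) - \lambda K = \int \frac{a_t}{B_t^*},
\]
where the first equality uses feasibility of $B$ and the final equality uses feasibility of $B^*$.

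The main obstacle is showing that a $\lambda>0$ satisfying the constraint equation exists and is unique. Since $g$ is increasing, $\lambda\mapsto g(a_t/\lambda)$ is decreasing, so $\Phi(\lambda):=\int_0^T f(g(a_t/\lambda))\,dt$ is monotone decreasing. I would try to prove that $\Phi$ is continuous by dominated or monotone convergence, and that it ranges from $\int f(\infty)$ as $\lambda\to 0$ down to $T f(0^+)$ as $\lambda\to\infty$. This requires $K$ to lie in that range, and it also requires care at the endpoint $t\to T$, where $a_t$ may blow up (constant learning rate) or vanish (cosine). In those cases integrability of $f(g(a_t/\lambda))$ has to be checked, and I would state it as an assumption on $f$ and $\eta$ rather than prove it in general.
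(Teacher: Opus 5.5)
Your proposal is correct and follows the paper's proof. The paper likewise forms the Lagrangian with integrand $b_t^2/B_t+\lambda f(B_t)$ (your $a_t$ is its $b_t^2$), inverts the pointwise condition $B_t^2 f'(B_t)=b_t^2/\lambda$ via $g$, and fixes $\lambda$ from the constraint; it stops at that formal first-order computation, so your weak-duality sufficiency argument and your discussion of solvability of $\Phi(\lambda)=K$ are rigor the paper omits rather than a different route.
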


\section{Joint scaling laws and universal dynamics}
\label{sec:scaling}

\subsection{Hyperparameter scaling}
With our batch size schedule in \Cref{thm:optimal B}, we present the joint scaling laws of multiple hyperparameters, which extends the single scaling law of peak learning rate in \citet{bu2026convex}: with $\eta^*(N,T)$ and $\kappa^*(N,T)$ being the optimal peak learning rate and weight decay for model size $N$ and horizon $T$, 
\begin{align}
\begin{split}
\text{peak learning rate: }&\eta^*(N,T)= \eta^*(N_\text{small},T_\text{small})/\sqrt{T/T_\text{small}}
\\
\text{optimal batch size: }&B_t^{\text{optim}}(T)=\frac{\text{total data}}{2}\frac{\eta^*_t}{\sqrt{\int_{0}^T\eta^*_k dk\int_{t}^T\eta^*_k dk}}
\\
\text{weight decay: }&\kappa(N,T)=\kappa^*(N_\text{small},T_\text{small})/\sqrt{T/T_\text{small}}
\end{split}
\label{eq:loss scaling law}
\end{align}

We theoretically validate \eqref{eq:loss scaling law} in \Cref{thm:joint wd}, showing that $1/\sqrt{T}$ scaling of peak learning rate and weight decay leads to asymptotically optimal loss convergence.

\begin{theorem}
\label{thm:joint wd}
Assuming \eqref{eq:grad condition}, convex loss and bounded parameters, for SGD with weight decay $\w_{t+1}=(1-\eta_t\kappa)\w_t-\eta_t\g_t$ and for qualified schedule with $\eta_\text{ref}/\sqrt{T}$ peak learning rate, we have 
$$\E L(\w_T)- L_*=O(1/\sqrt{T})$$
under static batch size and optimal batch size schedule if $\kappa=O(1/\sqrt{T})$.
\end{theorem}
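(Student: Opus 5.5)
The plan is to redo the last-iterate analysis behind \Cref{thm: simple} for the weight-decayed update, treating weight decay as an extra (regularization) gradient term. The step $\w_{t+1}=(1-\eta_t\kappa)\w_t-\eta_t\g_t$ can be rewritten as $\w_{t+1}=\w_t-\eta_t\tilde\g_t$ with $\tilde\g_t=\g_t+\kappa\w_t$. Under the bounded-parameters assumption $\|\w_t\|\le R$ for all $t$ (and $\|\w_*\|\le R$), the convexity inequality becomes
\[
\langle \E\g_t,\w_t-\w_*\rangle\ge \E L(\w_t)-L_*,
\qquad
\kappa\langle \w_t,\w_t-\w_*\rangle\ge -2\kappa R^2 .
\]
Here I bound the cross term crudely by Cauchy--Schwarz. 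Alternatively, one can note that $\tilde\g_t$ is a stochastic subgradient of the convex function $L+\frac{\kappa}{2}\|\w\|^2$, whose value differs from $L$ by at most $\kappa R^2/2$. I prefer this second view: it lets me apply \Cref{thm: simple} verbatim to the regularized objective $L_\kappa:=L+\frac{\kappa}{2}\|\cdot\|^2$.

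With that choice, the key steps are as follows.

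\textbf{Step 1.} Apply \Cref{thm: simple} to $L_\kappa$ with minimizer $\w_*^\kappa$. Then bound $L(\w_T)-L_*\le L_\kappa(\w_T)-L_\kappa(\w_*^\kappa)+\frac{\kappa}{2}\|\w_*\|^2$, which uses $L_\kappa(\w_*^\kappa)\le L_\kappa(\w_*)$ and $L\le L_\kappa$. This costs an additive $\kappa R^2/2$.

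\textbf{Step 2.} Bound the stochastic gradient second moment by
\[
\E\|\tilde\g_t\|^2\le 2(G^2+X/B_t)+2\kappa^2R^2,
\]
using \eqref{eq:grad condition}.

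\textbf{Step 3.} Pass to the continuous form as in \eqref{eq:loss mapping completely continuous}. The bound is then
\[
\frac{D^2}{2\int\eta}+(G^2+\kappa^2R^2)\int_0^T\frac{\eta_t^2}{\int_t^T\eta}\,dt+X\int_0^T\frac{\eta_t^2/B_t}{\int_t^T\eta}\,dt+\frac{\kappa R^2}{2}.
\]

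\textbf{Step 4.} For a qualified schedule, meaning one where $\eta_t=\eta\, h(t/T)$ with a fixed shape $h$ (cosine, linear, WSD), substitute $s=t/T$. This gives $\int_0^T\eta=\Theta(\eta T)$ and $\int_0^T\eta_t^2/\int_t^T\eta\,dt=\eta\, c_h$, with $c_h$ a shape constant finite for the qualified schedules (cf.\ \Cref{tab:theorem1}). For the static batch size the $X$ term is $\eta c_h X/B_\textup{static}$. For the optimal schedule, \Cref{thm:dynamic loss} gives
\[
\frac{X}{B_\textup{static}T}\Bigl(\int Z\Bigr)^2=\eta X\, c_h'/B_\textup{static},
\]
and by \Cref{cor:variance reduction} this is no larger than the static value.

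\textbf{Step 5.} Plug in $\eta=\eta_\text{ref}/\sqrt T$ and $\kappa=O(1/\sqrt T)$. Every term is then $O(1/\sqrt T)$: $D^2/(\eta T)\sim 1/\sqrt T$, $\eta(G^2+X/B)\sim 1/\sqrt T$, $\eta\kappa^2R^2\sim T^{-3/2}$, and $\kappa R^2\sim1/\sqrt T$.

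The main obstacle is making the reduction to \Cref{thm: simple} legitimate, because $D$ there must be the distance to the minimizer of $L_\kappa$, not of $L$. I would handle this by requiring $D$ to be uniform, namely $\|\w_1-\w_*^\kappa\|\le 2R$, which follows from bounded parameters. A second concern is that the constant terminal term $\frac{\eta_T}{2}\E\|\tilde\g_T\|^2$ vanishes only when $\eta_T=0$. For qualified schedules I will assume decay to zero; otherwise the term is still $O(\eta)=O(1/\sqrt T)$, so it does no harm. Finally, for the constant-LR case with the $\ln T$ factor in the $G^2$ term, the claim fails by a log. I would therefore define ``qualified'' as exactly those shapes with finite $c_h$.
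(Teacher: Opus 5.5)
Your proposal is correct and follows the paper's proof essentially step for step. You recast weight decay as plain SGD on $L_\kappa=L+\frac{\kappa}{2}\|\w\|^2$, apply \Cref{thm: simple} with an inflated gradient bound, read off the $O(1/\sqrt{T})$ rate from \Cref{tab:theorem1} for qualified schedules, and pay the extra gap $L_{\kappa,*}-L_*\le\frac{\kappa}{2}\|\w_*\|^2$, which forces $\kappa=O(1/\sqrt{T})$. The paper's gradient bound $(G+\kappa W)^2+X/B_t$ is slightly tighter than your factor-2 split, but your split only rescales the $X/B_t$ term by a constant, so $B_t^\textup{optim}$ is unchanged. Your handling of $D$ as a distance to $\w_*^\kappa$ (uniformly bounded because $\|\w_*^\kappa\|\le\|\w_*\|$) and of the terminal $\eta_T$ term is more careful than the paper, which glosses over both.
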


We demonstrate on the Llama3 and Qwen3 MoE families that the optimal batch size schedule is highly beneficial, on a wide range of model sizes (from 0.1B to 7B) and compute budgets (up to 1e21 FLOPs). We visualize two experiments in \Cref{fig:scaling}, one with fixed compute and the other with fixed TPP. For the static batch size, we determine the optimal learning rate by following \cite{bu2026convex}: the peak learning rate is scaled as $\eta_\text{ref}/\sqrt{T}$, independent of model size, where $\eta_\text{ref}$ is tuned on small-scale runs. Weight decay is determined in the same manner. We emphasize that our batch size schedule directly uses the same learning rate and weight decay tuned for the static batch size, rendering the comparison fair but conservative. Nevertheless, in both experiments, the optimal batch size significantly outperforms the static batch size, with 15\% improvement of compute efficiency on Llama3 as well as 6\% improvement on Qwen3 MoE (see the middle column). Especially, the performance is consistent for most iterations during training.

\begin{figure}[!htb]
    \centering
\includegraphics[width=0.312\linewidth]{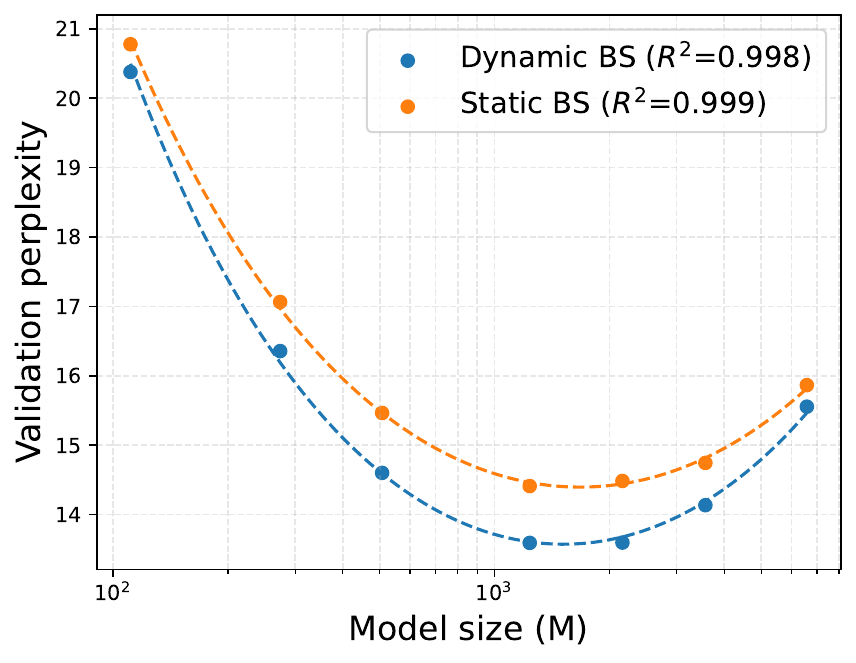}
\includegraphics[width=0.312\linewidth]{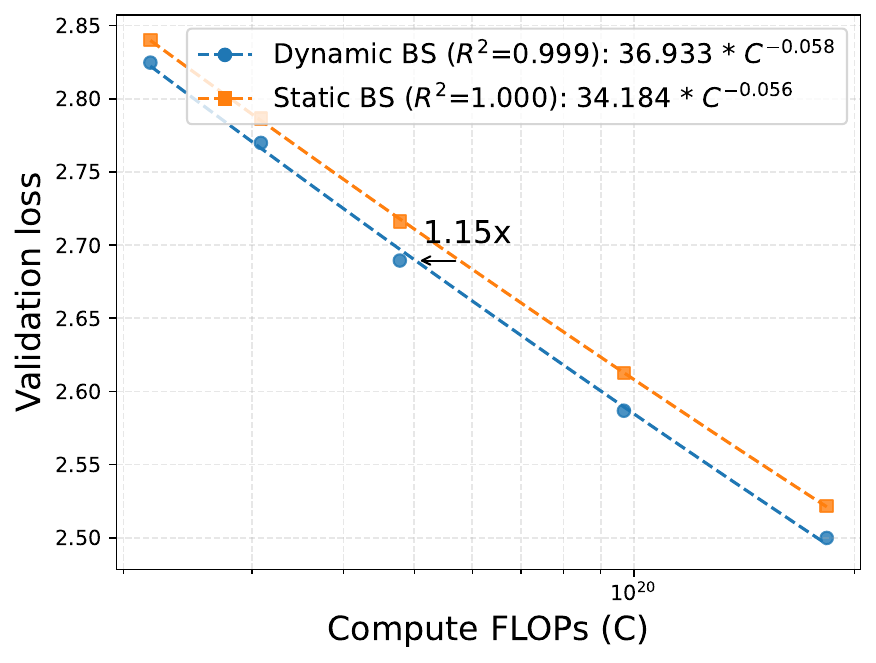}
\includegraphics[width=0.32\linewidth]{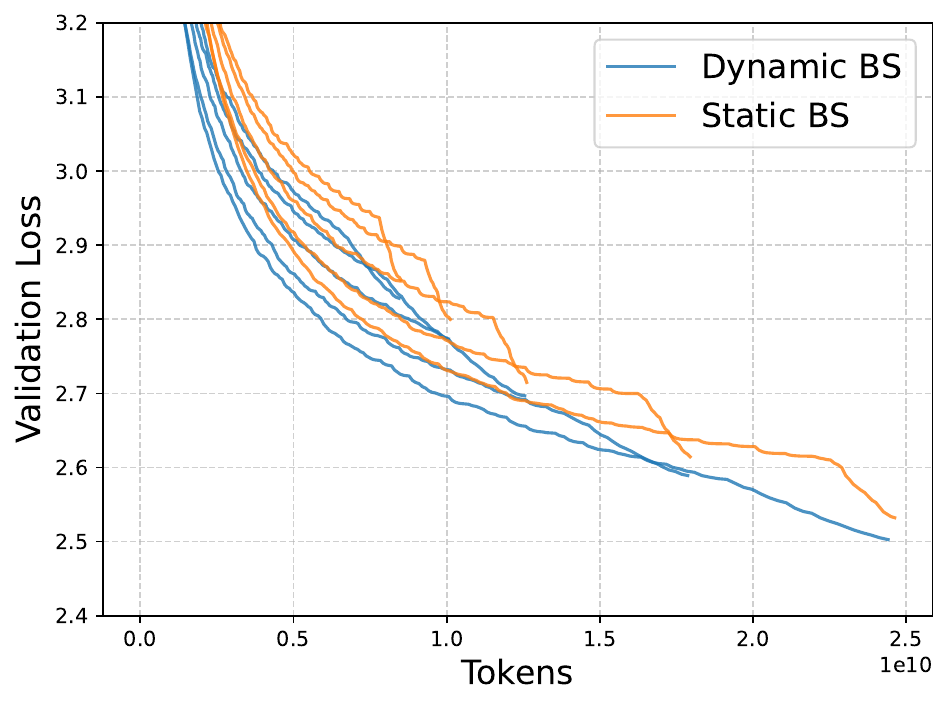}\\
\includegraphics[width=0.312\linewidth]{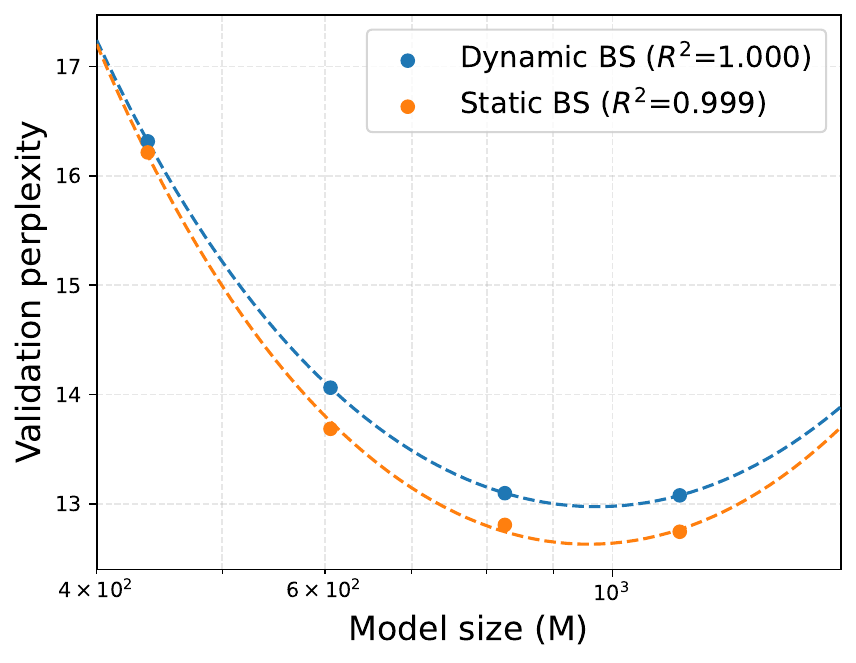}
\includegraphics[width=0.312\linewidth]{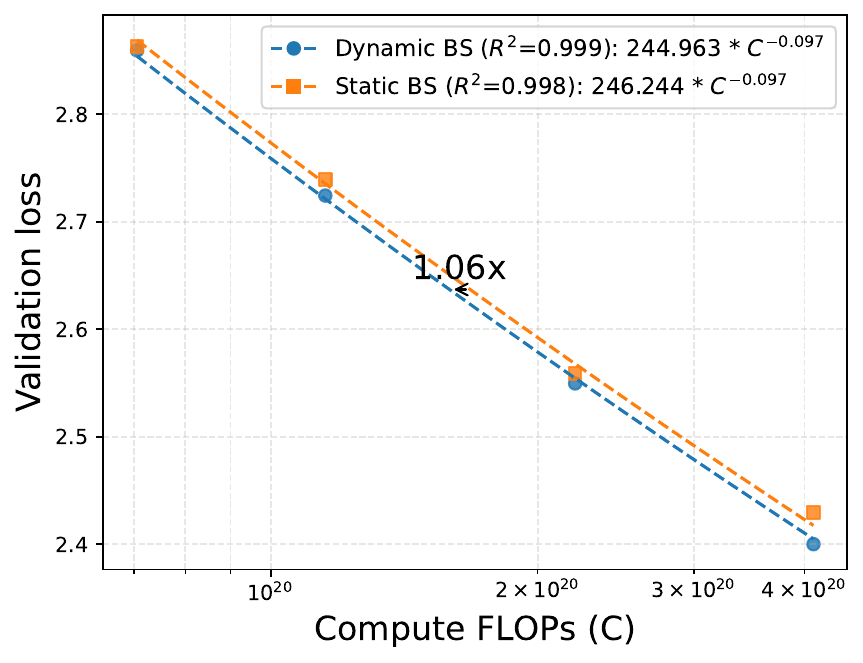}
\includegraphics[width=0.32\linewidth]{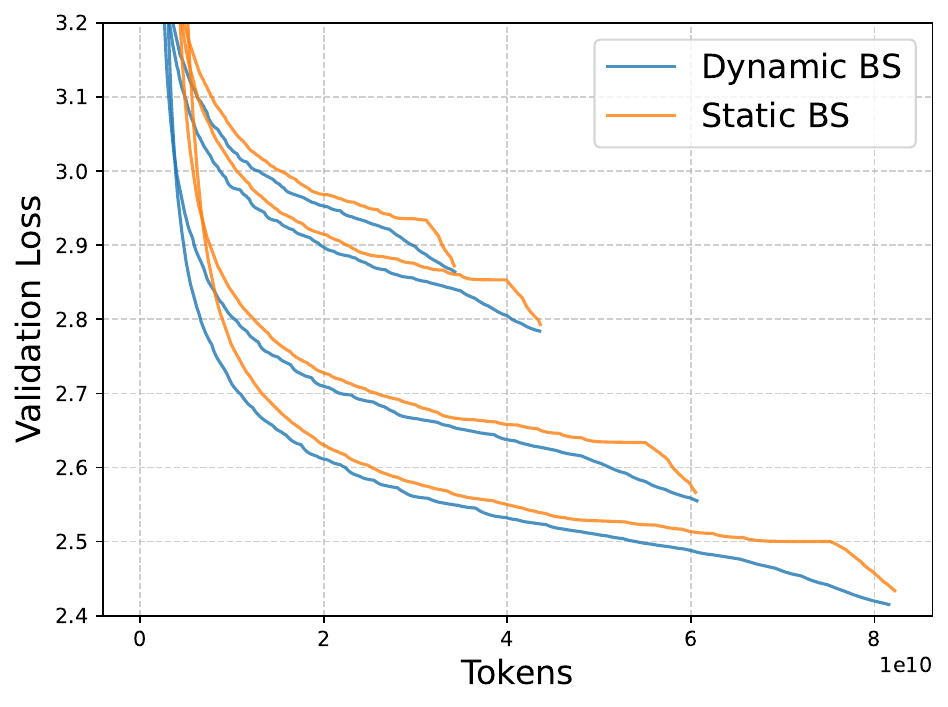}
\caption{Dynamic batch size significantly outperforms static batch size for Llama3 (top row) and Qwen3 MoE (bottom row). Left: perplexity under fixed compute of 1e20 FLOPs. Quadratic functions are fitted between validation loss and log(model size) following \cite{hoffmann2022chinchilla}. Middle \& Right: TPP=50 for Llama3 and TPP=100 for Qwen3 MoE. Power laws are fitted between validation loss and compute FLOPs.}
\label{fig:scaling}
\end{figure}

\subsection{Universal dynamics}
Crucially, our joint scaling laws lead to the universal dynamics \citep{qiu2025scaling,bergsma2025scaling}, where different models share the same normalized loss curve when 
\begin{enumerate}
    \item [(I)] TPP is fixed (governing depth, width, and training horizon).
    \item [(II)] Hyperparameters are scaled (governing learning rate and weight decay).
\end{enumerate}
In \Cref{fig:collapse}, each loss curve is normalized on the training horizon $x=t/T$ and on the validation loss $F(x)=\frac{L(\w_{xT})-L_*}{L(\w_{T})-L_*}$. In addition, we observe another universal dynamics when the loss is normalized with respect to tokens $x=\frac{\int_0^t B_k dk}{\int_0^T B_k dk}$, besides to iterations. Notice that these two types of normalization are equivalent for static batch size but different for dynamic batch size. As a result, our joint scaling laws enable the robust prediction of future scaling and demonstrate the advantage of optimal batch size.
\begin{figure}[!htb]
    \centering
    \includegraphics[width=0.3045\linewidth]{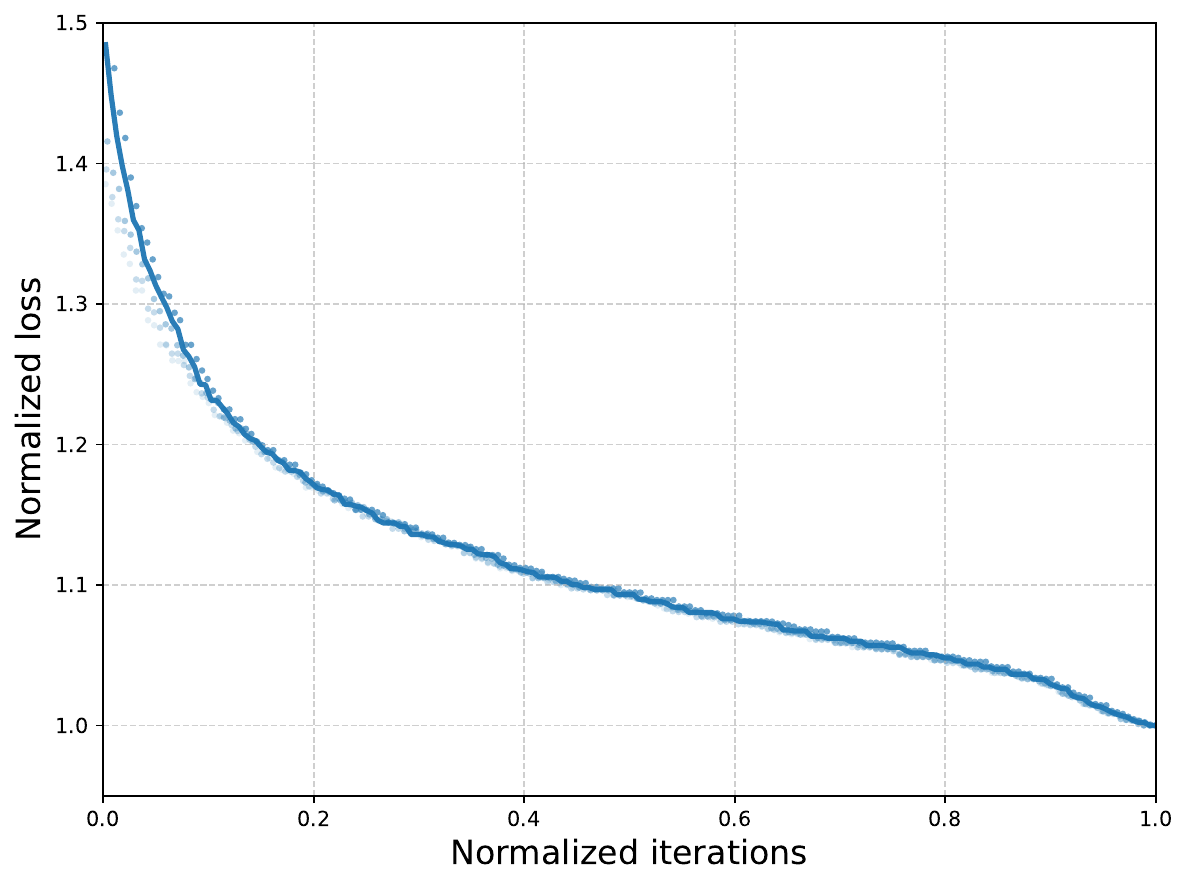}
    \includegraphics[width=0.3045\linewidth]{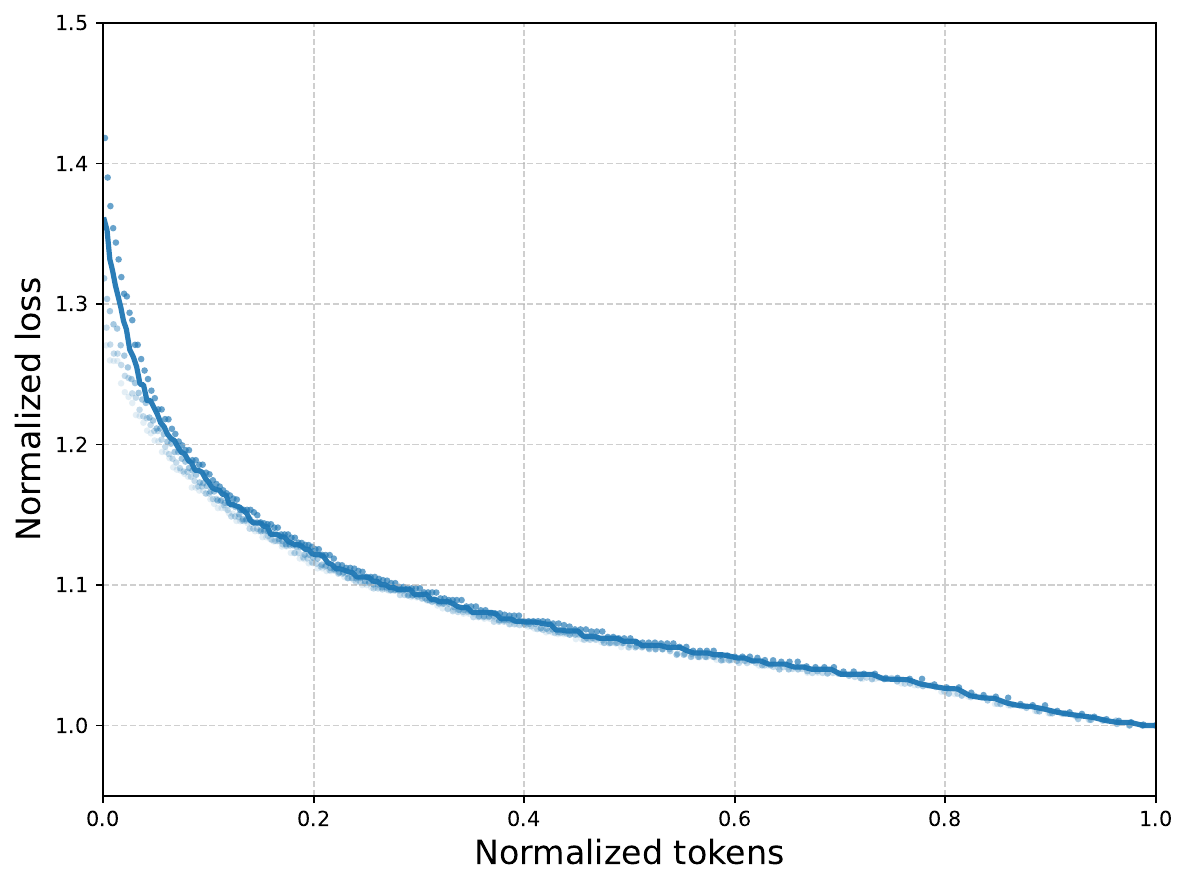}
    \includegraphics[width=0.3045\linewidth]{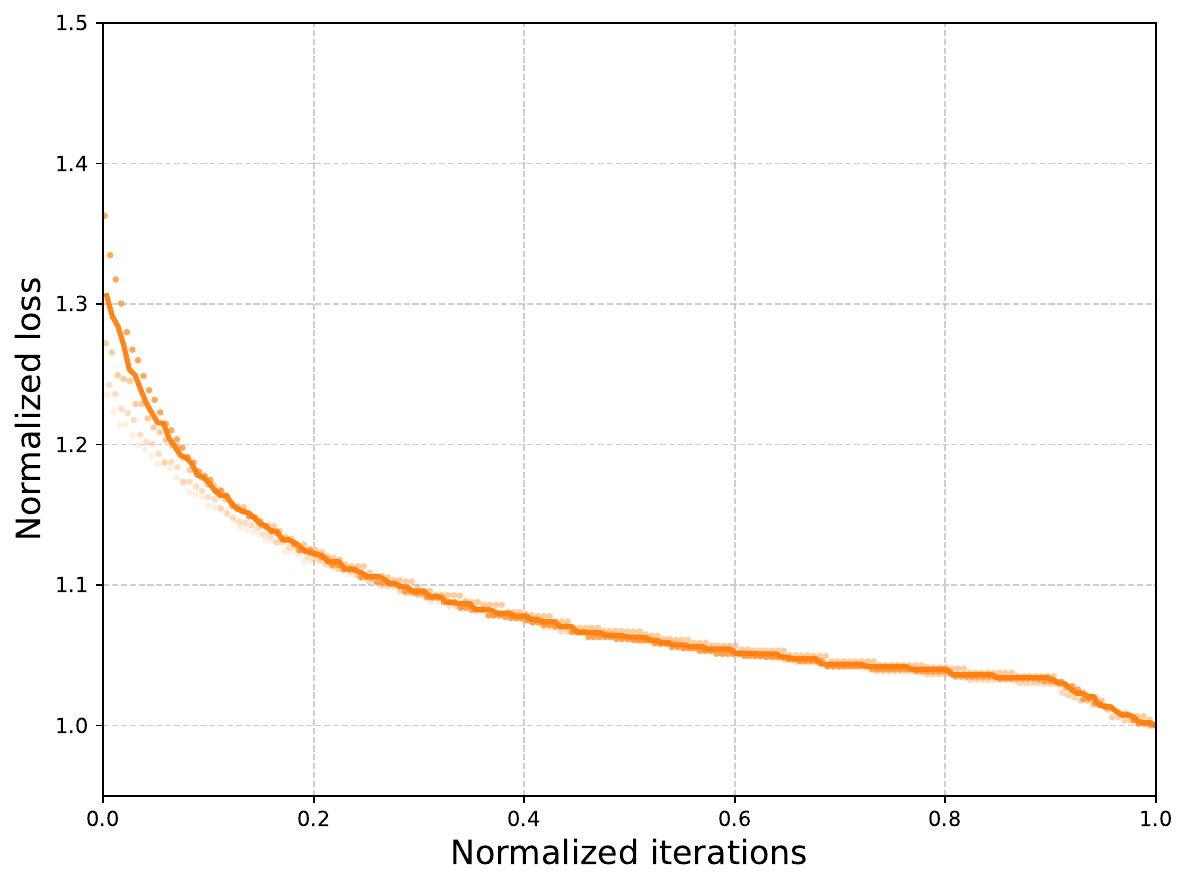}
\caption{Dynamic batch size has universal dynamics with respect to normalized iterations and normalize tokens, similar to the static batch size for which two types of normalization are equivalent. Each plot shows 4 runs of Qwen3 MoE with TPP=100 from 0.4 to 2B parameters, with lighter color being smaller model size .}
    \label{fig:collapse}
\end{figure}

\section{Post-training experiments}
\label{sec:posttrain}
We further test our batch size schedule on two post-training experiments---vision-language model (VLM) and math-domain fine-tuning. See additional experiment details in \Cref{app:posttrain}.

\subsection{Vision-language model}
\label{sec:vlm}
We train vision-language models (VLM) with SmoLM2-360M-Instruct as language backbone and Siglip2-large as vision backbone, following nanoVLM codebase \citep{wiedmann2025nanovlm} on the Cauldron dataset \citep{laurencon2024matters}. We train with Muon-NSGD optimizer. The static batch size is 0.26M tokens for text and 512*512 pixels for image.

\begin{figure}[!htb]
    \centering
    \includegraphics[width=0.425\linewidth]{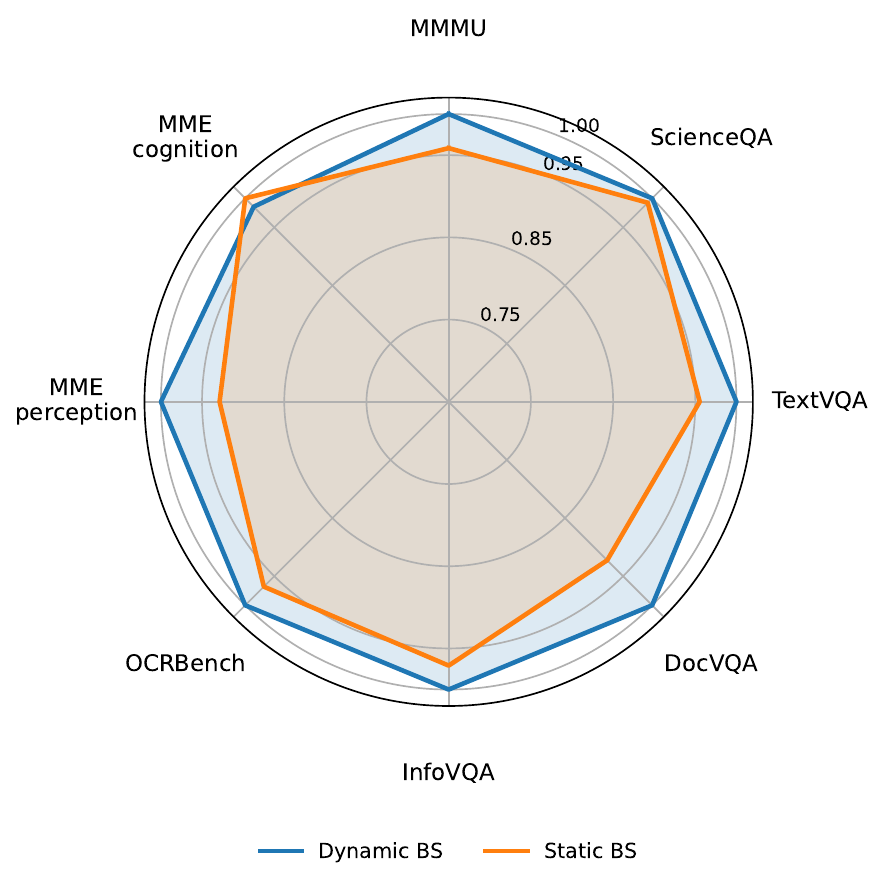}    \includegraphics[width=0.385\linewidth]{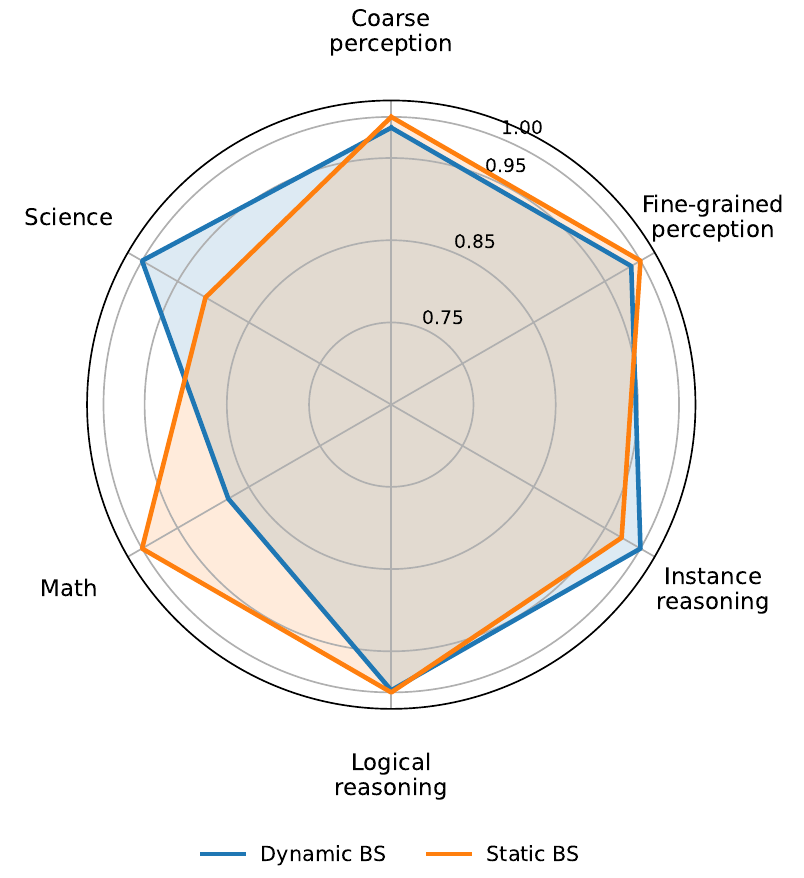}
    \caption{Normalized radar comparison of dynamic vs.\ static batch size across VLM benchmarks. Left: MMStar subtasks. Right: other benchmarks (MMMU, ScienceQA, TextVQA, DocVQA, InfoVQA, OCRBench, and MME). Each axis is normalized by $\max(\text{Dynamic BS},\text{Static BS})$ so that mixed-scale metrics are visually comparable. See \Cref{tab:vlm_results} for absolute scores.}
    \label{fig:vlm_radar}
\end{figure}

\Cref{fig:vlm_radar} and \Cref{tab:vlm_results} compare the dynamic and static batch sizes across various VLM evaluation tasks, where the dynamic batch size outperforms the static baseline on 7 of 9 tasks, while the static baseline can be slightly better on some MMStar subtasks. Overall, we see that dynamic batch size is competitive or superior for VLM tasks, with no significant regression on any benchmark.

\subsection{Math-domain fine-tuning} 
We fine-tune Qwen3-0.6B \citep{qwen3technicalreport} on \texttt{orca\_math} dataset \citep{mitra2024orcamath}  with AdamW optimizer and WSD learning rate schedule. The peak learning rate is selected via a sweep with a constant batch size. In \Cref{fig:math500_placeholder}, dynamic batch size shows better overall trajectories on both validation loss and GSM8K accuracy than static batch size.

\begin{figure}[!htb]
    \centering
    \includegraphics[width=0.24\linewidth]{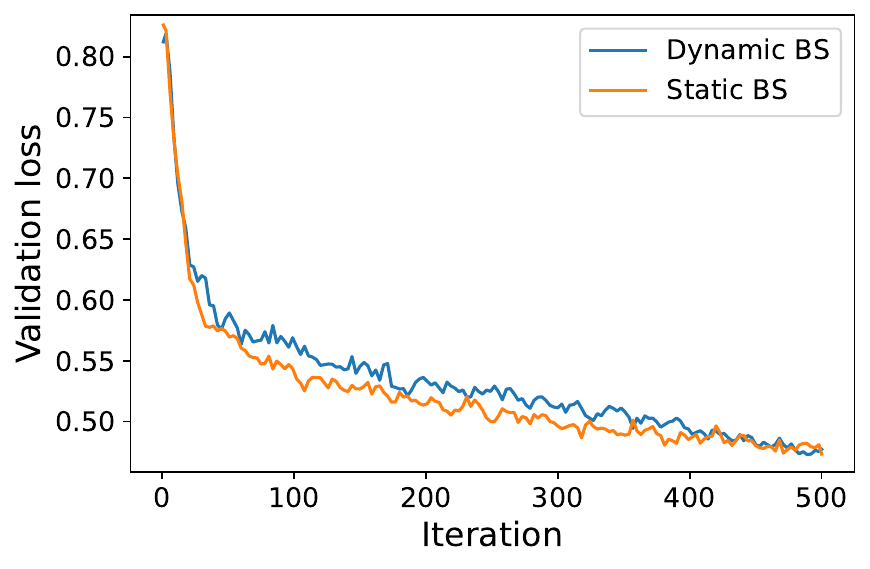}
    \includegraphics[width=0.24\linewidth]{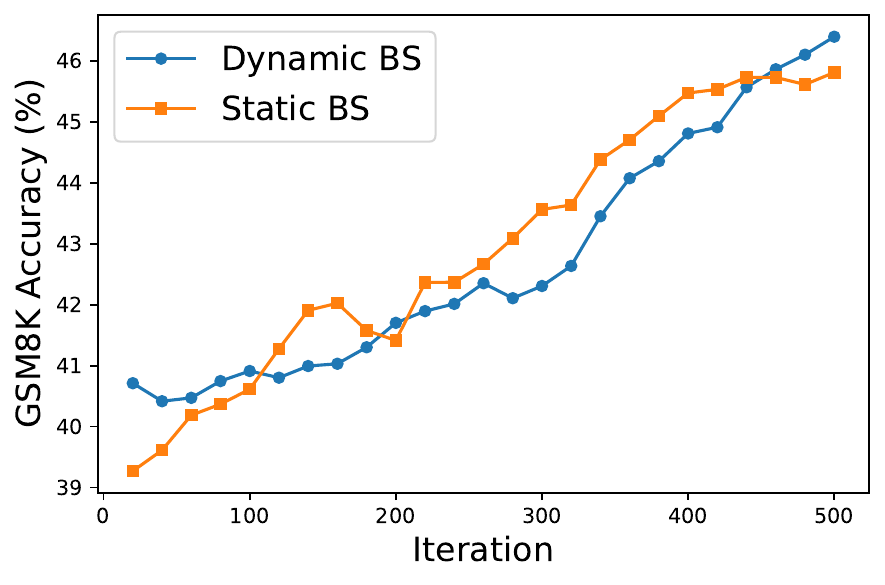}
    \includegraphics[width=0.24\linewidth]{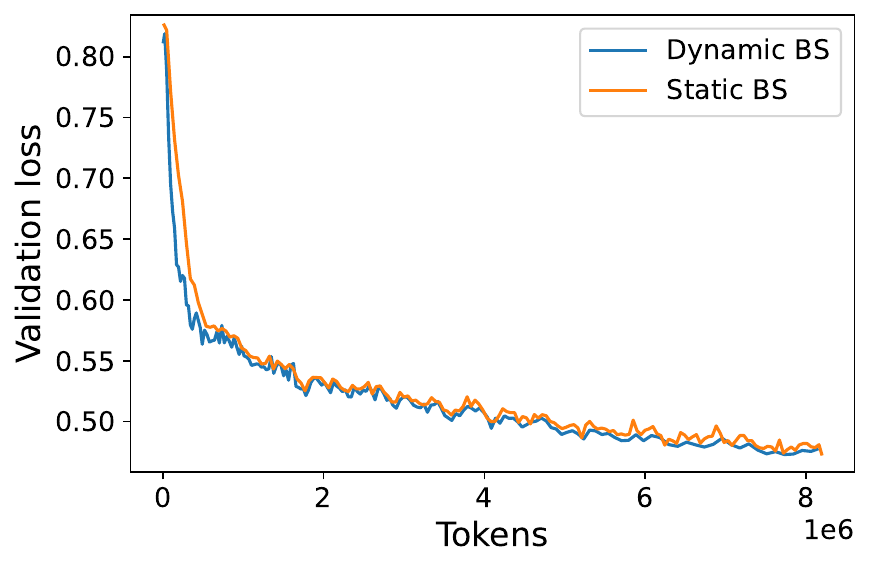}
    \includegraphics[width=0.24\linewidth]{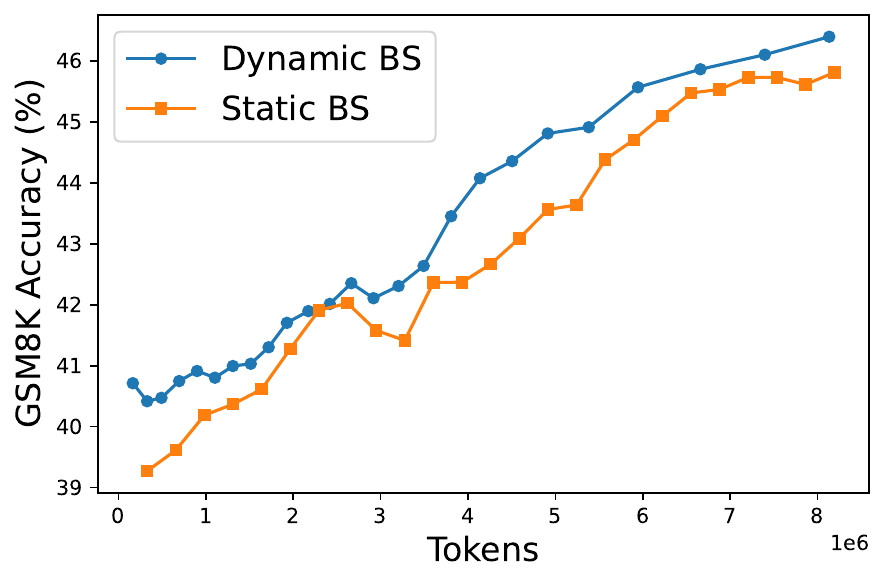}
    \caption{Dynamic batch size consistently outperforms static batch size on math-domain fine-tuning. Left: validation loss and 5-shot GSM8K accuracy over iterations. Right: the same metrics plotted against total tokens consumed.}
    \label{fig:math500_placeholder}
\end{figure}

\section{Conclusion}
In this work, we validated that the convex optimization theory can characterize the training dynamics of deep learning in \eqref{eq:loss mapping equality}, from the perspectives of learning rate and batch size schedules. This allows us to reverse-engineer and derive an optimal batch size schedule, which is decoupled from the choice of learning rate (no hyperparameter tuning) and consistently outperforms static batch size. As a result, we build joint scaling laws that inform the scaling of hyperparameters, and lead to better scaling of model performance.

Throughout the paper, we ablate the most representative configurations to test the optimal batch size schedule: optimizers (Muon-NSGD and AdamW), model families (Llama3 and Qwen3 MoE), model sizes and training horizons in \Cref{fig:scaling}, and learning rate schedules in \Cref{fig:nanogpt-muon} and \Cref{fig:dynamicBS}. We provide additional ablations in the \Cref{app:ablation}: (I) We observe that our batch size schedule always outperforms the static batch size for any learning rate, even the non-optimal ones. (II) While our batch size always outperforms the static baseline, the advantage vanishes in theory as $B\to \infty$ yet is practically significant even when the base batch size is increased $8\times$. (III) However, the advantage may vanish under purely low-precision training.

\clearpage
\newpage
\bibliographystyle{assets/plainnat}
\bibliography{references}

\clearpage
\newpage
\beginappendix

\section{Theoretical results}

\subsection{\Cref{thm: simple}}
\label{app:thm1}
\begin{proof}[Proof of \Cref{thm: simple}]
Theorem 10 in \cite{defazio2023optimal} gives an upper bound of the loss
\begin{align}
\E L(\w_{\tau})\leq L_*+\frac{D^2}{2\sum_{t=1}^T \eta_t}+
\frac{1}{2}\left(\frac{\sum_{t=1}^{\tau}\eta_t^2\E\|\g_t\|^2}{\sum_{t=1}^{\tau} \eta_t}+\sum_{k=1}^{\tau-1}\frac{\eta_k}{\sum_{t=k+1}^{\tau}\eta_t}\frac{\sum_{t=k}^{\tau} \eta_t^2\E\|\g_t\|^2}{\sum_{t=k}^{\tau} \eta_t}\right)
\label{eq:origin}
\end{align}

To simplify the notation, we write $$x_t = \eta_t^2\E\|\g_t\|^2, A_k=\frac{\eta_{k}}{\sum_{t=k+1}^{\tau} \eta_t \sum_{t=k}^{\tau}  \eta_t}=\frac{1}{\sum_{t=k+1}^{\tau} \eta_t}-\frac{1}{ \sum_{t=k}^{\tau}  \eta_t}$$
so that \eqref{eq:origin} becomes
\begin{align*}
&L_*+\frac{D^2}{2\sum_{t=1}^{\tau} \eta_t}+\frac{1}{2}\frac{1}{\sum_{t=1}^{\tau} \eta_t}\sum_{t=1}^{\tau}  x_t+\frac{1}{2}\sum_{k=1}^{\tau -1} \left(A_k\sum_{t=k}^{\tau}  x_t\right)
\\
=&L_*+\frac{D^2}{2\sum_{t=1}^{\tau} \eta_t}+\frac{1}{2}\frac{1}{\sum_{t=1}^{\tau}  \eta_t}\sum_{t=1}^{\tau-1}  x_t +\frac{1}{2}\frac{x_{\tau}}{\sum_{t=1}^{\tau} \eta_t} +\frac{1}{2}\sum_{k=1}^{\tau-1} \left(A_k\sum_{t=k}^{\tau}  x_t\right)
\\
=&L_*+\frac{D^2}{2\sum_{t=1}^{\tau} \eta_t}+\frac{1}{2}\frac{1}{\sum_{t=1}^{\tau}  \eta_t}\sum_{t=1}^{{\tau}-1}  x_t +\frac{1}{2}x_{\tau}\frac{1}{\sum_{t=1}^{\tau} \eta_t} +\frac{1}{2}x_{\tau}\sum_{k=1}^{{\tau}-1}A_k +\frac{1}{2}\sum_{k=1}^{{\tau}-1} \left(A_k\sum_{t=k}^{{\tau}-1}  x_t\right)
\end{align*}
We exchange the double sum to yield
$\sum_{k=1}^{{\tau}-1} \left(A_k\sum_{t=k}^{{\tau}-1}  x_t\right)=\sum_{t=1}^{{\tau}-1} \left( x_t\sum_{k=1}^t A_k\right)$, and obtain
\begin{align*}
&L_*+\frac{D^2}{2\sum_{t=1}^{\tau} \eta_t}+\frac{1}{2}x_{\tau}\left(\frac{1}{\sum_{s=1}^{\tau}  \eta_s}+\sum_{k=1}^{{\tau}-1} A_k\right)+\frac{1}{2}\sum_{t=1}^{{\tau}-1} x_t\left(\frac{1}{\sum_{s=1}^{\tau}  \eta_s}+\sum_{k=1}^t A_k\right)
\end{align*}
which can be further simplified to
\begin{align*}
&L_*+\frac{D^2}{2\sum_{t=1}^{\tau} \eta_t}+ \frac{1}{2}\frac{x_{\tau}}{\eta_{\tau}}+ \frac{1}{2}\sum_{t=1}^{{\tau}-1}  x_t\frac{1}{\sum_{s=t+1}^{\tau} \eta_s}
\end{align*}
because the telescoping sum gives $\sum_{k=1}^{t} A_k = \frac{1}{\sum_{s=t+1}^{\tau}\eta_s}-\frac{1}{\sum_{s=1}^{\tau}\eta_s}$. 
\end{proof}

\subsection{\Cref{eq:grad condition}}
\label{app:grad condition}

    According to the mean-covariance decomposition, 
    \begin{align}
        \E\|\g_t\|^2 = \|\E \g_t\|^2+\text{tr}(\text{Cov}(\g_t)).
    \end{align}

   Let $G$ be the upper bound of $\|\E\g_t\|$ and $X$ be the upper bound of trace of $\Sigma_t:= Cov_{\xi}(g(\w_t;\xi))$ which is the per-example stochastic gradient at iteration $t$, where 
$\xi$ is a data sample drawn from the data distribution.
Since
$$Cov(\g_t)=Cov\left(\frac{1}{B_t}\sum_{i=1}^{B_t}g(w_t; \xi_{t,i})\right)=\frac{1}{B_t}\Sigma_t$$, it gives
   $$tr(Cov(\g_t))=tr(\frac{1}{B_t}\Sigma_t)\leq\frac{X}{B_t}.$$
Therefore, 
\begin{align}
        \E\|\g_t\|^2 = \|\E \g_t\|^2+tr(Cov(\g_t))\leq G^2+\frac{X}{B_t}.
\end{align}

\begin{remark}
    For adaptive gradient descent $\w_{t+1}=\w_t-\eta_t \phi(\g_t)$, suppose some upper bounds $G,X$, then we obtain
$$\E\|\phi(\g_t)\|^2\leq G^2+X/B_t+O_p(1/B_t^2)$$
\end{remark}

\subsection{\Cref{thm:optimal B}}
\label{app:thm3}
\begin{proof}[Proof of \Cref{thm:optimal B}]
Using the notation of $b_t$, we rewrite the minimization problem \eqref{eq: constrained optim} as
\[
\min_{B_t} \int_0^T \frac{b_t^2}{B_t} \, dt
\text{ s.t.}
\int_0^T B_t \, dt = K.
\]

The Lagrangian functional associated with the equality constraint is, with $\lambda \in \R$,
\[
\int_0^T
    \frac{b_t^2}{B_t}dt + \lambda \left(\int_0^T B_t
dt
- K\right)
=\int_0^T \left(
    \frac{b_t^2}{B_t} + \lambda B_t
\right) dt
- \lambda K.
\]

Denote the integrand as $\ell_t(B_t) := \frac{b_t^2}{B_t} + \lambda B_t$. The first-order condition gives
\[
\frac{\partial \ell_t}{\partial B_t}
=
-\frac{b_t^2}{B_t^2} + \lambda=0
\Longrightarrow
B_t^\textup{optim} = \frac{b_t}{\sqrt{\lambda}},
\]

Next, we determine $\lambda$ from the constraint:
\[
\int_0^T B_t^\textup{optim} \, dt
=
\frac{1}{\sqrt{\lambda}} \int_0^T b_t\, dt
= K
 \Longrightarrow 
\frac{1}{\sqrt{\lambda}}
=
\frac{K}{\int_0^T b_t \, dt}
\Longrightarrow
B_t^\textup{optim}= K \cdot\frac{b_t}{\int_0^T b_t \, dt}.
\]
To make $B_t^\textup{optim}$ explicit on $\eta_t$, notice that
$$\int_0^T b_tdt=2\sqrt{\int_0^T \eta_kdk}.$$
\end{proof}

\subsection{\Cref{tab:our first} and \Cref{tab:theorem1}}
Here we derive $b_t$, $B_t^\text{optim}$ and $L(T)$. We omit the optimal $L(T)$ because it is trivial to minimize the formula $L_*+a/\eta+b\cdot\eta$.

\paragraph{constant learning rate}
We start from $\eta_t=\eta$. Firstly, we write 
$$
b_t:=\frac{\eta_t}{\sqrt{\int_{t}^T\eta_s ds}}
=\sqrt{\frac{\eta}{T-t}}
$$
Secondly, we get
$$\int_0^T b_t dt=2\sqrt{T\eta}$$
Thirdly, we get
$$B_t=BT\cdot \frac{b_t}{\int_0^T b_t dt}
= \frac{BT}{2\sqrt{T(T-t)}}
$$
Lastly, substituting $B_t$ and $\eta_t$ into the loss bound:
$$L(T)=L_*+\frac{D^2}{2T\eta} + \frac{\eta}{2}G^2\ln T + \frac{2\eta X}{B}$$

\paragraph{linear learning rate}
We start from $\eta_t=\eta(1-t/T)$. Firstly, we write 
$$
b_t:=\frac{\eta_t}{\sqrt{\int_{t}^T\eta_s ds}}
=\frac{\eta(1-\frac{t}{T})}{\sqrt{\frac{(T-t)^2}{2T}}}=\sqrt{\frac{2}{T}}
$$
Secondly, we get 
$$B_t=B$$
Lastly, substituting $B_t$ and $\eta_t$ into the loss bound:
$$L_*+\frac{D^2}{T \eta}+\eta (G^2+X/B)$$

\paragraph{WSD}

We start from $\eta_t=\begin{cases}
  \eta&\textit{ if } t\leq cT\\
  \eta\frac{T-t}{T-cT}&\textit{ if } t> cT         
      \end{cases}$.

Firstly, we write
$$
b_t:=\frac{\eta_t}{\sqrt{\int_{t}^T\eta_s ds}}
$$
where
$$\int_{t}^T\eta_s ds=
\begin{cases}
  \frac{\eta}{2}(T+cT-2t)  &\text{ if }t\leq cT\\
  \frac{\eta}{2}\frac{(T-t)^2}{(T-cT)}  &\text{ if }t> cT
\end{cases}$$
so that
$$b_t
=
\begin{cases}
  \sqrt{\frac{2\eta}{T+cT-2t}}  &\text{ if }t\leq cT\\
  \sqrt{\frac{2\eta}{T-cT}}  &\text{ if }t> cT
\end{cases}$$

Secondly, we get
$$\int_0^T b_t dt= \sqrt{2\eta(T+cT)}$$
Thirdly, we get
$$B_t=BT\cdot \frac{b_t^*}{\int_0^T b_t^* dt}= 
\begin{cases}
  \frac{BT}{\sqrt{(T+cT-2t)(T+cT)}}  &\text{ if }t\leq cT\\
  \frac{BT}{\sqrt{(T-cT)(T+cT)}} = \frac{B}{\sqrt{1-c^2}}  &\text{ if }t> cT
\end{cases}$$
Lastly, substituting $B_t$ and $\eta_t$ into the loss bound:
\begin{align*}
L_*+ \frac{D^2}{\eta(1+c) T}+\frac{\eta}{2}\left[2+\ln\frac{1+c}{1-c}\right]G^2+\eta(1+c)\frac{X}{B}.
\end{align*}

\paragraph{cosine learning rate}

We start from $\eta_t=\frac{\eta}{2}(1+\cos{(\frac{\pi t}{T})})$.

Firstly, we write 
$$b_t:=\frac{\eta_t}{\sqrt{\int_{t}^T\eta_s ds}}=
\cos^2\!\left(\frac{\pi t}{2T}\right)
\sqrt{
\frac{
\eta
}{
\frac{T}{2}
\left(
1 - \frac{t}{T} - \frac{1}{\pi}\sin\frac{\pi t}{T}
\right)
}
}.
$$
Secondly, we get 
$$B_t=BT\cdot \frac{b_t}{\int_0^T b_t dt}
=
\frac{
B\,\cos^2\!\left(\frac{\pi t}{2T}\right)
}{\,\sqrt{
1-\frac{t}{T}-\frac{1}{\pi}\sin\frac{\pi t}{T}
}
},
$$
Thirdly, substituting $B_t$ and $\eta_t$ into the loss bound:
$$L_*+\frac{D^2}{T\eta}+1.061\eta G^2+\frac{\eta X}{B}$$

\subsection{\Cref{lemma:general cost}}
\label{app:thm4}
\begin{proof}[Proof of \Cref{lemma:general cost}]
Similar to \Cref{thm:optimal B}, using the notation of $b_t$, we rewrite the minimization problem \eqref{eq: constrained optim2} as
\[
\min_{B_t} \int_0^T \frac{b_t^2}{B_t} \, dt
\text{ s.t.}
\int_0^T f(B_t) \, dt = K.
\]

The Lagrangian functional associated with the equality constraint is, with $\lambda \in \R$,
\[
\int_0^T
    \frac{b_t^2}{B_t}dt + \lambda \left(\int_0^T f(B_t)
dt
- K\right)
=\int_0^T \left(
    \frac{b_t^2}{B_t} + \lambda f(B_t)
\right) dt
- \lambda K.
\]

Denote the integrand as $\ell_t(B_t) := \frac{b_t^2}{B_t} + \lambda f(B_t)$. The first-order condition gives
\[
\frac{\partial \ell_t}{\partial B_t}
=
-\frac{b_t^2}{B_t^2} + \lambda f'(B_t)=0
\Longrightarrow
g^{-1}(B_t)=B_t^2 f'(B_t)=\frac{b_t^2}{\lambda}
\Longrightarrow
B_t^* = g\left(\frac{b_t^2}{\lambda}\right),
\]

Next, we determine $\lambda$ from the constraint:
\[
\int_0^T f(B_t^*) \, dt
= K
 \Longrightarrow 
\int_0^T f\left(g\left(\frac{\eta_t^2}{\int_t^T \eta_s ds}/\lambda\right)\right) dt = K
\Longrightarrow
B_t^*=g\left(\frac{\eta_t^2}{\int_t^T \eta_s ds}/\lambda\right)
\]
    
\end{proof}

\subsection{\Cref{thm:joint wd}}
\begin{proof}[Proof of \Cref{thm:joint wd}]
Denote $L_\kappa(\w)=L(\w)+\frac{\kappa}{2}\|\w\|^2$, then SGD with weight decay on $L$ is equivalent to SGD without weight decay on $L_\kappa$, where the bounded gradient condition becomes $\nabla L_\kappa=\nabla L+\kappa \w \Longrightarrow \|\nabla L_\kappa\|=\|\nabla L\|+\kappa \|\w\|\leq G+\kappa W$.

It is not hard to obtain  (c.f. \eqref{eq:loss mapping last})
$$\E L_\kappa(\w_T) - L_{\kappa,*}\leq \frac{D^2}{2\sum_{t=1}^{T} \eta_t}+
\frac{1}{2}\sum_{t=1}^{T-1}\frac{\eta_t^2  ((G+\kappa W)^2+X/B_t)}{\sum_{k=t+1}^{T} \eta_k}
$$
as long as the learning rate decays to zero for any fixed $T$.

Notice that weight decay does not impact the last term $X/B_t$, rendering the same minimization problem as \eqref{eq: constrained optim} and hence the same solution $B_t^\text{optim}$, i.e. our optimal batch size schedule.

For qualified learning rate schedule and scaled peak learning rate, we get $\Theta(1/\sqrt{T})$ loss by \Cref{tab:theorem1}, where the only difference is on $G$. Therefore, under either static batch size or optimal batch size, we get
$$\E L_\kappa(\w_T) - L_{\kappa,*}=O(1/\sqrt{T})$$

To translate the gap between regularized loss $L_\kappa$ to the gap between vanilla loss $L$, we need
$$\E L(\w_T) - L_*\leq \E L_\kappa(\w_T) - L_*\leq \E L_\kappa(\w_T)-L_{\kappa,*}+L_{\kappa,*}-L_*.$$
The first difference has already been derived. The second difference is obvious by contradiction:
$$L_{\kappa,*}-L_*\leq \frac{\kappa}{2}\|\w_*\|^2\leq \frac{\kappa}{2} W^2$$
All in all, we need
$$\E L(\w_T) - L_*\leq O(1/\sqrt{T})+\frac{\kappa}{2} W^2$$
which requires $\kappa=O(1/\sqrt{T})$.
\end{proof}

\section{Experiment details}
\subsection{\Cref{fig:longest}, \Cref{tab:longest}}
We train with WSD learning rate schedule and cooldown at 0.9; peak learning rate is 0.02. The optimizer is Muon-NSGD. Default initialization in Huggingface.

Llama3 model has 48 layer, hidden size 3072, intermediate size 3072*4, batch size 256, and sequence length 1024. Qwen3 MoE model has 3B total parameters, 32 layers, hidden size 2048, intermediate size 2048*3, moe intermediate size 2048/4, 16 experts, batch size 512, and sequence length 4096.

\begin{figure}[H]
    \centering
    \includegraphics[width=0.525\linewidth]{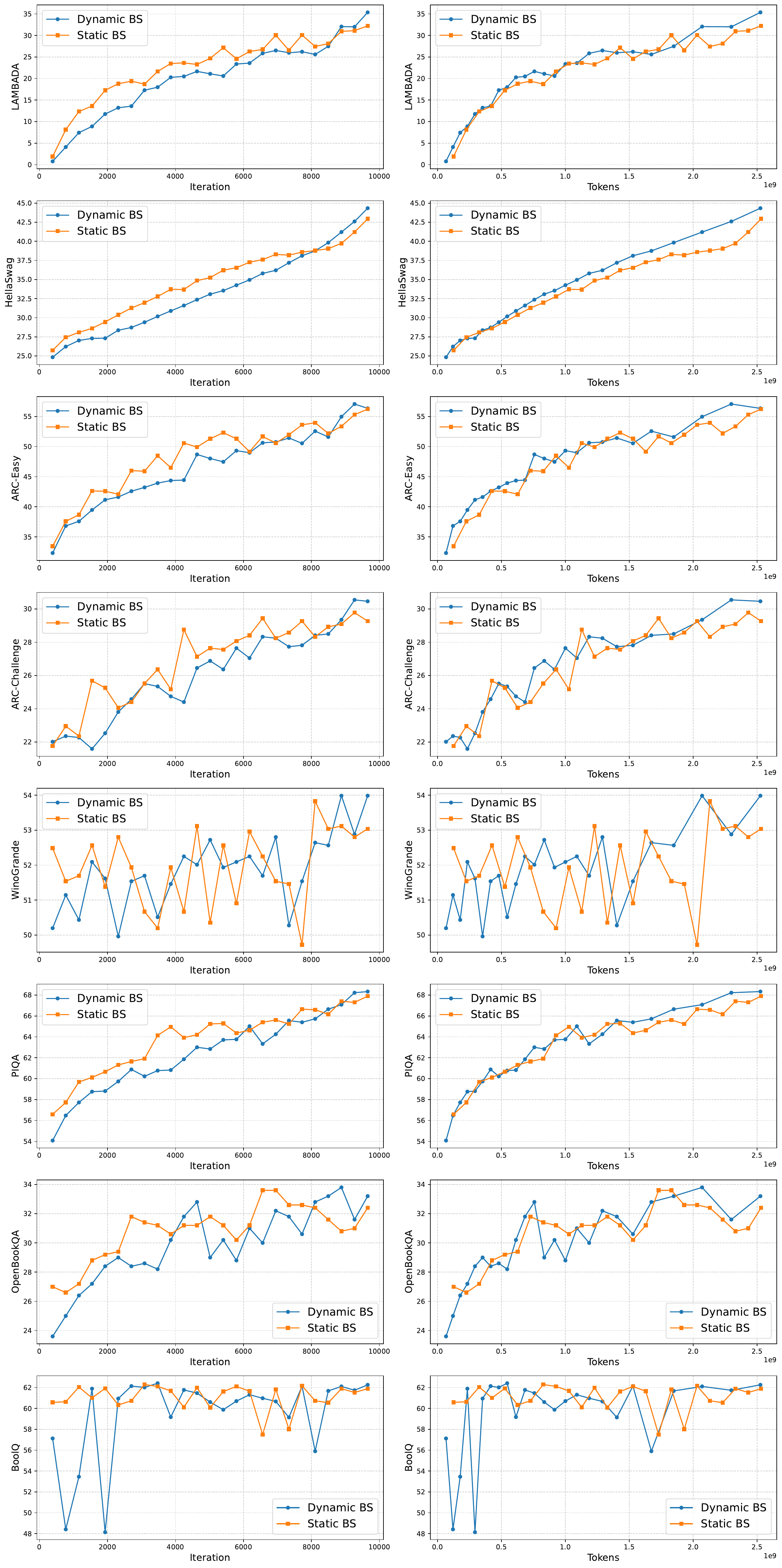}
    \caption{Evaluation of intermediate iterates for Llama3-8B with dynamic or static batch size.}
    \label{fig:sub eval}
\end{figure}

\subsection{\Cref{fig:nanogpt-muon}, \Cref{fig:dynamicBS}}
Llama3 model has 28 layer, hidden size 1024, intermediate size 3072, batch size 256, and sequence length 1024. \Cref{fig:nanogpt-adam} is analogous to \Cref{fig:nanogpt-muon} but uses AdamW optimizer. The losses are smoothed by time weighted EMA.

\begin{figure}[!htb]
\centering

\begin{tcolorbox}[
    colback=white,
    colframe=yellow!60!orange,
    boxrule=1.2pt,
    arc=3mm,
    left=0mm,
    right=0mm,
    top=1mm,
    bottom=0mm,
    width=\linewidth,
    title={\textbf{constant batch size, dynamic learning rate}},
    coltitle=black,
    colbacktitle=yellow!25,
    fonttitle=\normalsize,
    enhanced,
    attach boxed title to top center={
        yshift=-1.5mm
    },
    boxed title style={
        colback=yellow!25,
        colframe=yellow!60!orange,
        arc=2mm,
        boxrule=1pt
    }
]

\centering

\includegraphics[width=0.24\linewidth]{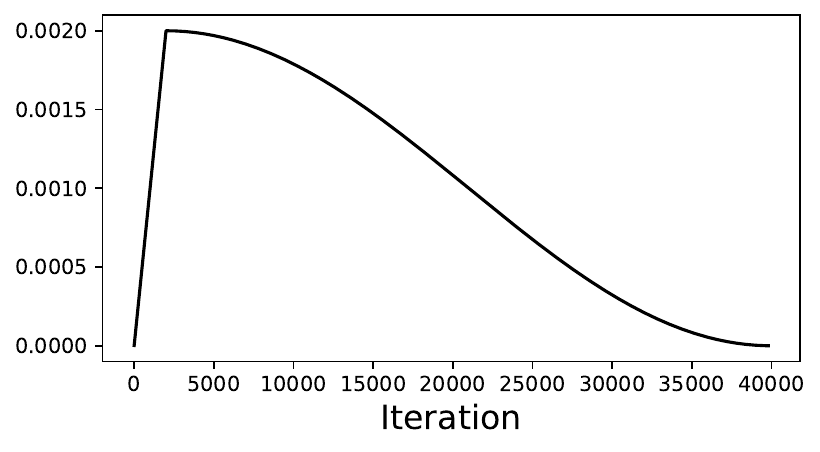}
\includegraphics[width=0.24\linewidth]{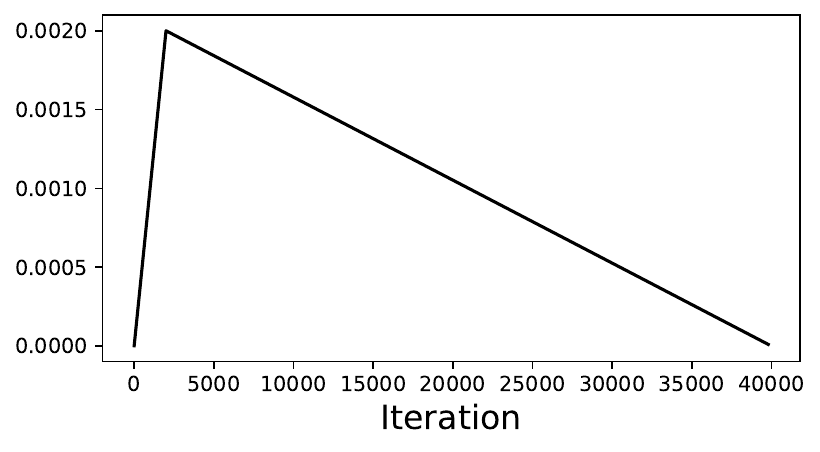}
\includegraphics[width=0.24\linewidth]{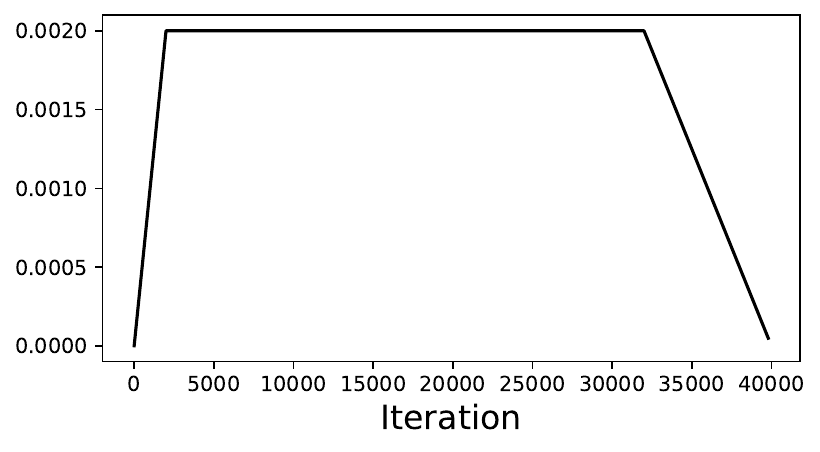}
\includegraphics[width=0.24\linewidth]{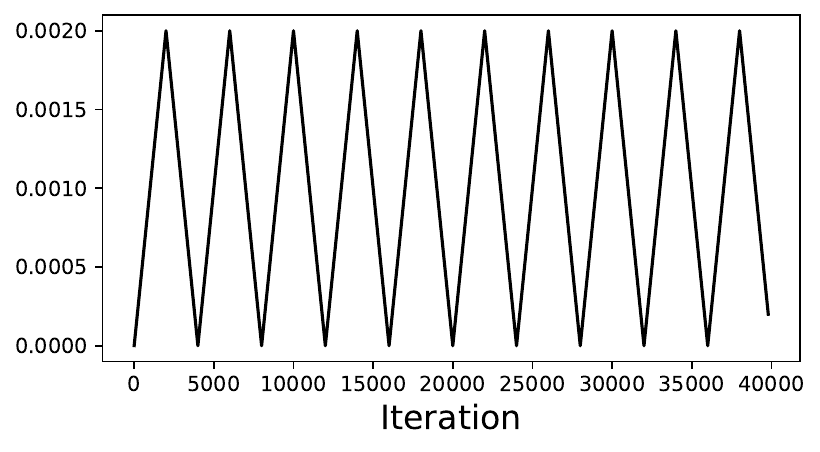}

\vspace{1mm}

\includegraphics[width=0.24\linewidth]{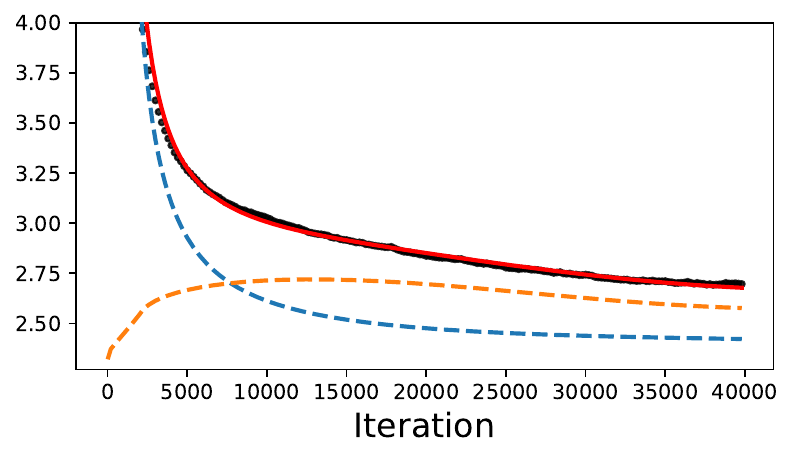}
\includegraphics[width=0.24\linewidth]{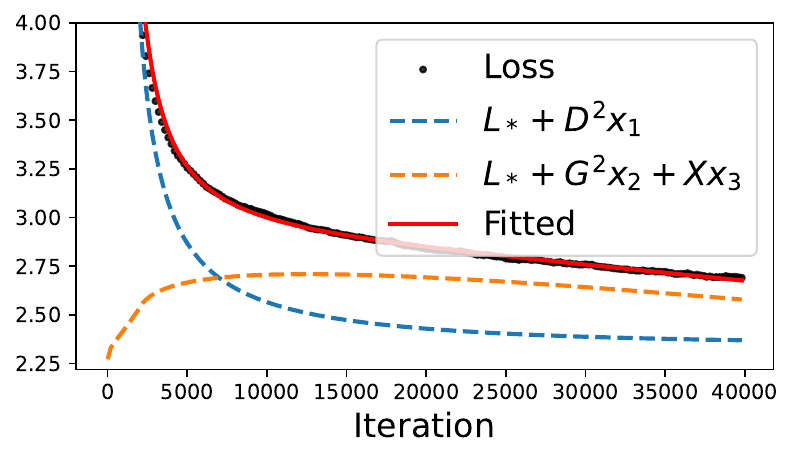}
\includegraphics[width=0.24\linewidth]{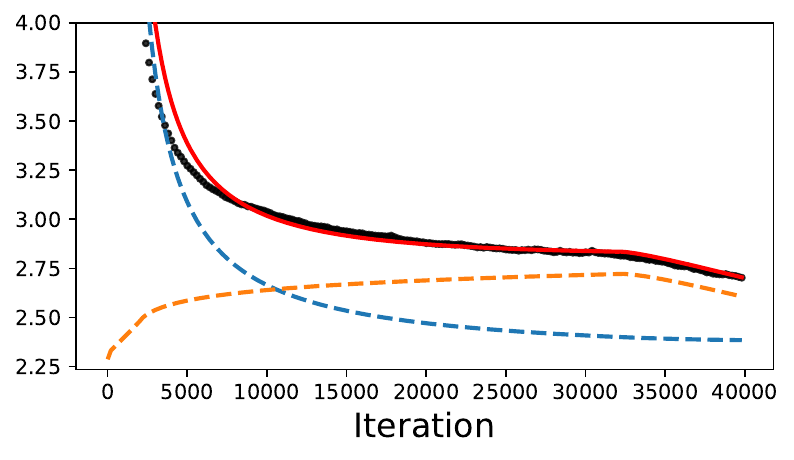}
\includegraphics[width=0.24\linewidth]{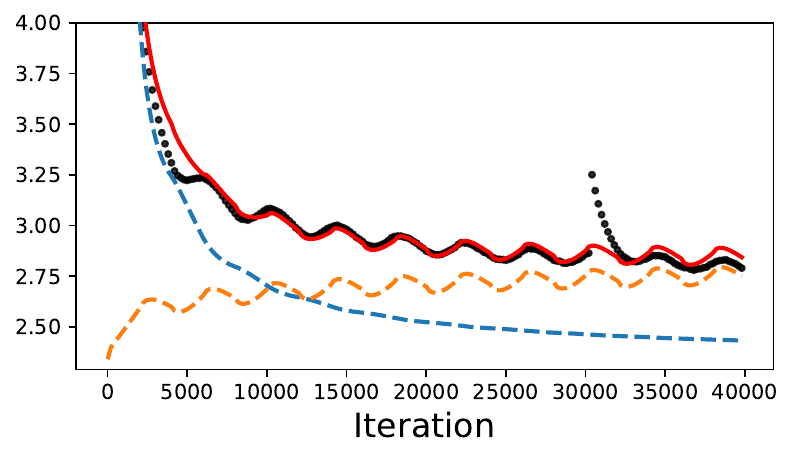}

\end{tcolorbox}

\begin{tcolorbox}[
    colback=white,
    colframe=blue!60,
    boxrule=1.2pt,
    arc=3mm,
    left=0mm,
    right=0mm,
    top=1mm,
    bottom=0mm,
    width=\linewidth,
    title={\textbf{constant learning rate, dynamic batch size}},
    coltitle=black,
    colbacktitle=blue!15,
    fonttitle=\normalsize,
    enhanced,
    attach boxed title to top center={
        yshift=-1.5mm
    },
    boxed title style={
        colback=blue!15,
        colframe=blue!60,
        arc=2mm,
        boxrule=1pt
    }
]

\centering

\includegraphics[width=0.24\linewidth]{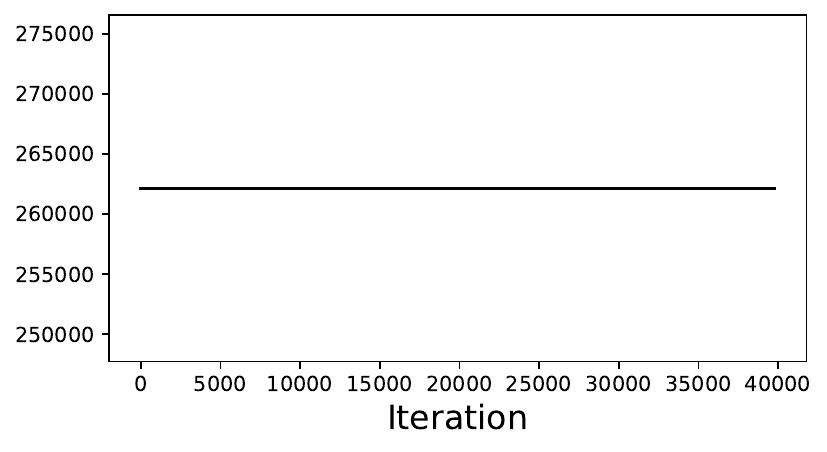}
\includegraphics[width=0.24\linewidth]{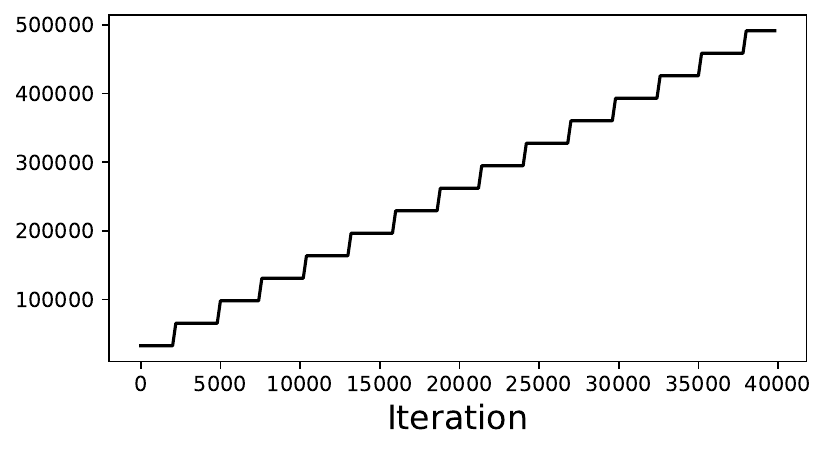}
\includegraphics[width=0.24\linewidth]{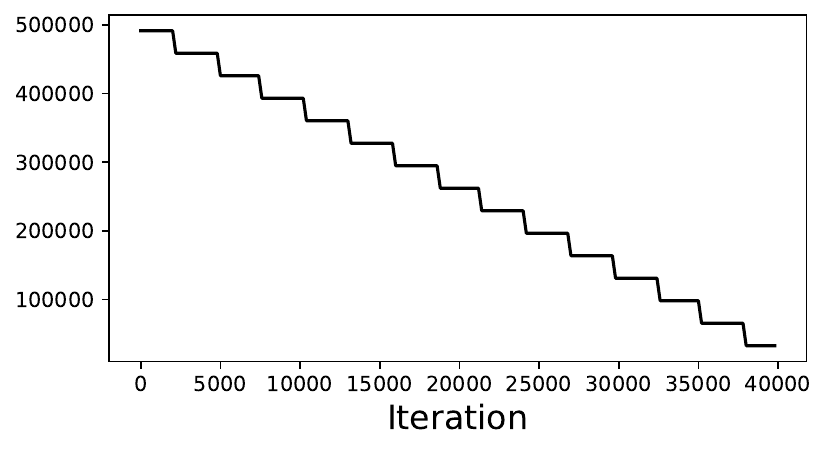}
\includegraphics[width=0.24\linewidth]{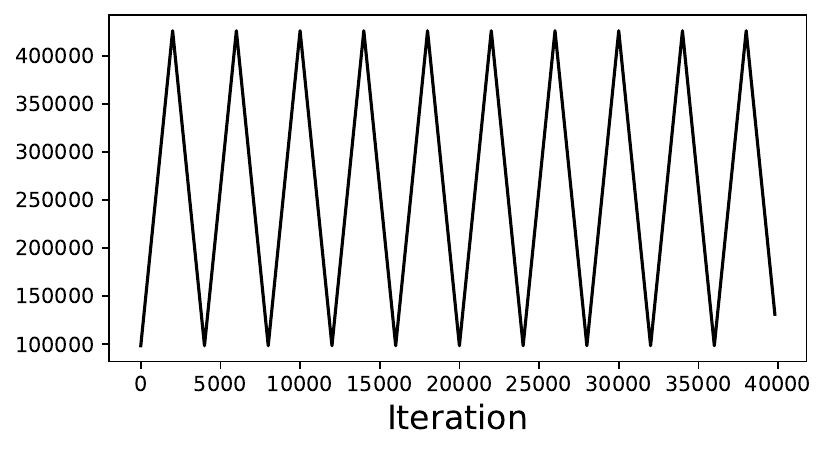}

\vspace{1mm}

\includegraphics[width=0.24\linewidth]{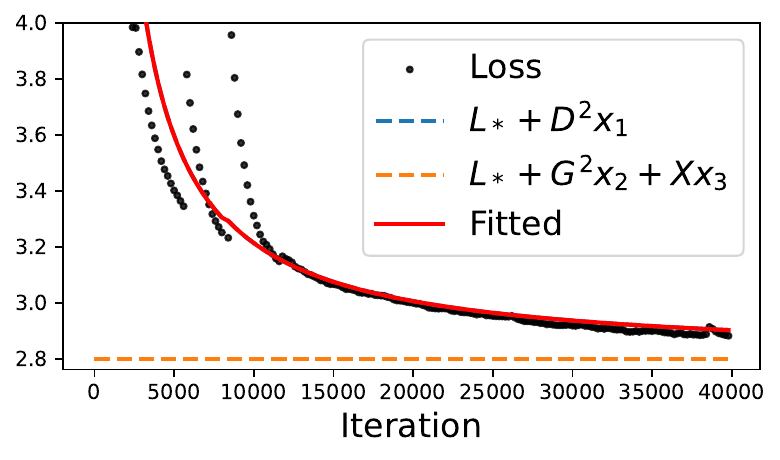}
\includegraphics[width=0.24\linewidth]{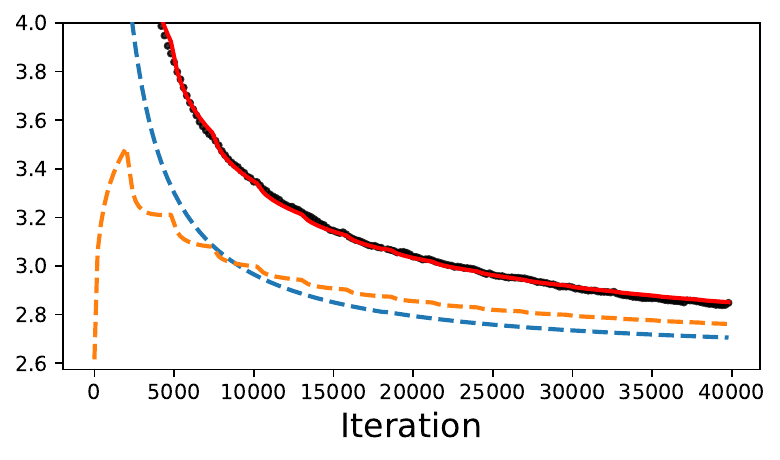}
\includegraphics[width=0.24\linewidth]{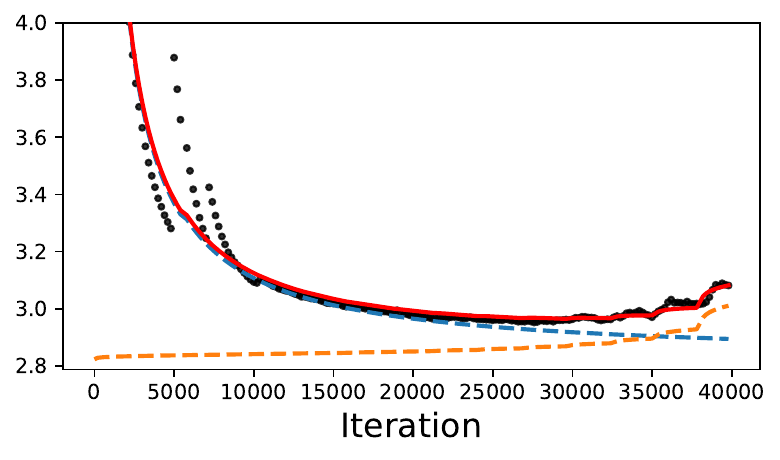}
\includegraphics[width=0.24\linewidth]{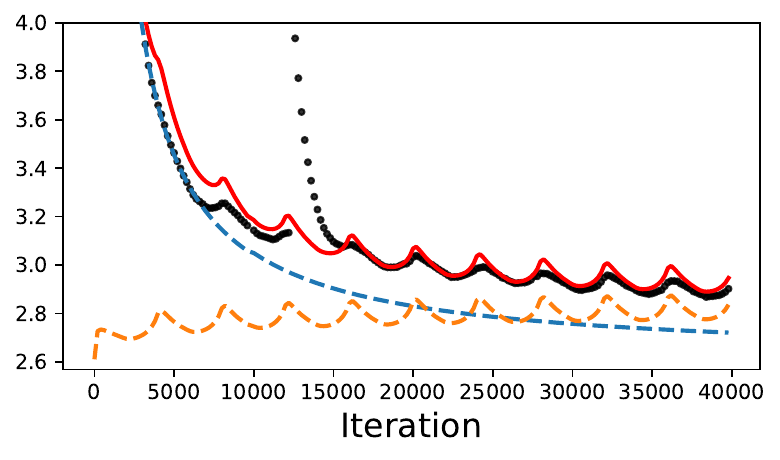}

\end{tcolorbox}
\caption{Sequence-to-sequence prediction by \Cref{thm:seq2seq} for Llama3-1B model on Fineweb-edu data, trained by AdamW optimizer. Top two rows: $B_t$ is fixed ($B=256*1024$) and $\eta_t$ is dynamic. Bottom two rows: $\eta_t$ is fixed ($\eta=0.002$) and $B_t$ is dynamic.
}
\label{fig:nanogpt-adam}
\end{figure}

\subsection{\Cref{fig:scaling}, \Cref{fig:collapse}}
We use WSD learning rate schedule with 10\% cooldown and Muon-NSGD optimizer. For \Cref{fig:scaling} (left column), the peak learning rate is 0.01; for other figures, the peak learning rate is scaled as $2/\sqrt{T}$. TPP is based on active parameters for MoE and total parameters for dense models.

For Llama3 models, we experiment with
\begin{itemize}
    \item total parameters 111M, 8 layers, 512 hidden size
    \item total parameters 274M, 8 layers, 1024 hidden size
    \item total parameters 426M, 10 layers, 1280 hidden size
    \item total parameters 507M, 10 layers, 1440 hidden size
    \item total parameters 631M, 12 layers, 1536 hidden size
    \item total parameters 898M, 14 layers, 1792 hidden size
    \item total parameters 1.236B, 16 layers, 2048 hidden size
    \item total parameters 2.163B, 20 layers, 2560 hidden size
    \item total parameters 3.569B, 28 layers, 2880 hidden size
    \item total parameters 6.585B, 48 layers, 3072 hidden size
\end{itemize}
The intermediate size is set to 4 times hidden size, batch size 256, and sequence
length 1024.

For Qwen3 MoE models, we experiment with
\begin{itemize}
\item active parameters 438M, 8 layers, 1280 hidden size
    \item active parameters 606M, 16 layers, 1536 hidden size
    \item active parameters 826M, 24 layers, 1792 hidden size
    \item active parameters 1.127B, 32 layers, 2048 hidden size
\end{itemize}
The intermediate size is set to 3 times hidden size, MoE intermediate size is 1/4 hidden size, 16 experts, batch size 512, and sequence length 4096.

\section{Ablation results}
\label{app:ablation}
\subsection{Optimizers}
Our main optimizer is Muon-NSGD \citep{boreiko2025towards}. We additionally experiment on AdamW optimizer \citep{loshchilovdecoupled} to confirm the applicability of \Cref{thm:seq2seq} in \Cref{fig:nanogpt-adam}. We also trained VLM in \Cref{sec:vlm} with AdamW.

\subsection{Peak learning rate}
We have ablated $O(1/\sqrt{T})$ learning rate in \Cref{sec:scaling}, which is empirically the optimal learning rate. Furthermore, we show in \Cref{fig:vary peak lr} that optimal batch size \textit{\textbf{always outperforms}} static batch size for any learning rate, larger or smaller than the optimal learning rate (which is 0.01 in this case). This holds because our batch size derived in \Cref{thm:optimal B} is \textit{\textbf{decoupled}} from the peak learning rate. 

\begin{figure}[!htb]
    \centering
  \begin{minipage}[c]{0.72\textwidth}
    \includegraphics[width=0.32\linewidth]{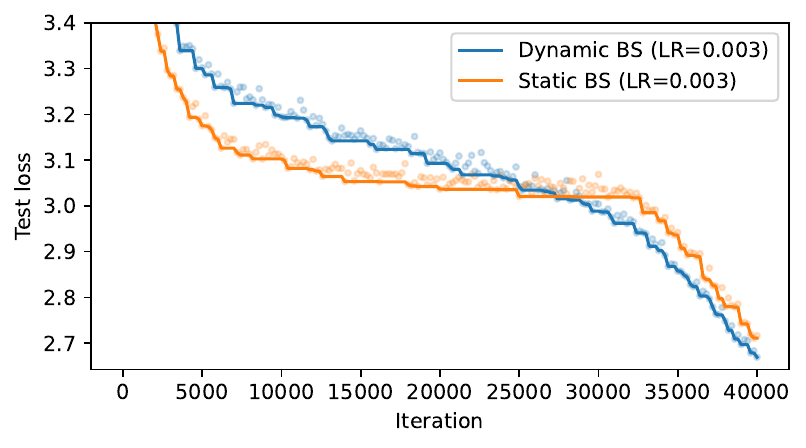}
    \includegraphics[width=0.32\linewidth]{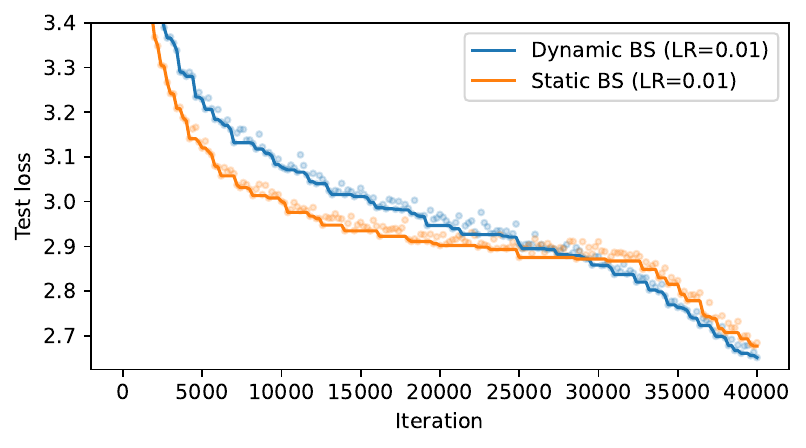}
    \includegraphics[width=0.32\linewidth]{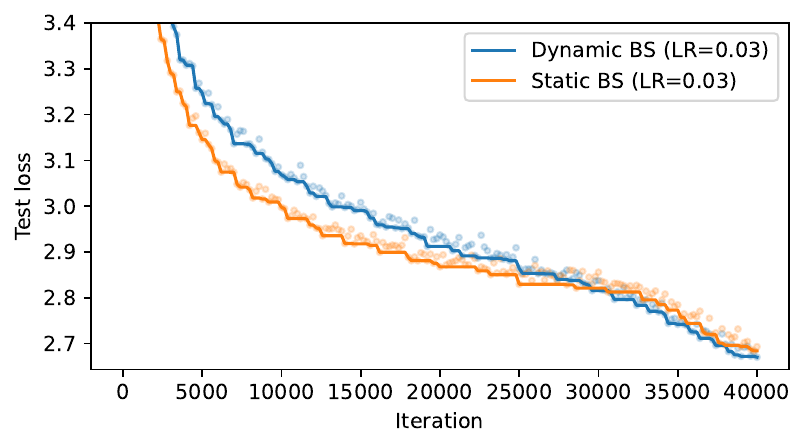}
  \\
    \includegraphics[width=0.32\linewidth]{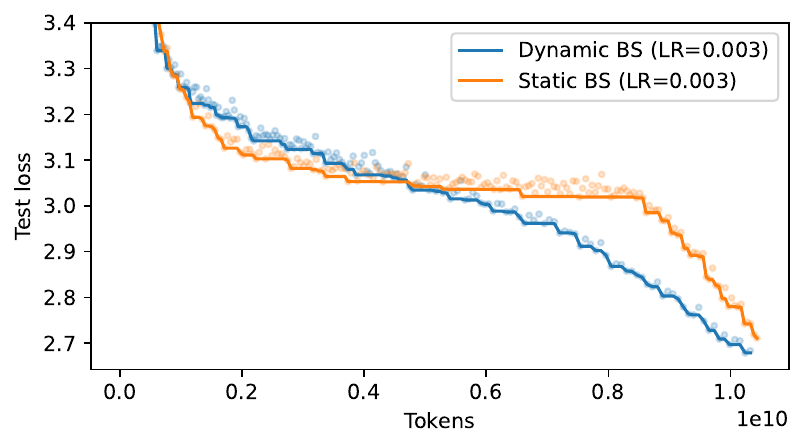}
    \includegraphics[width=0.32\linewidth]{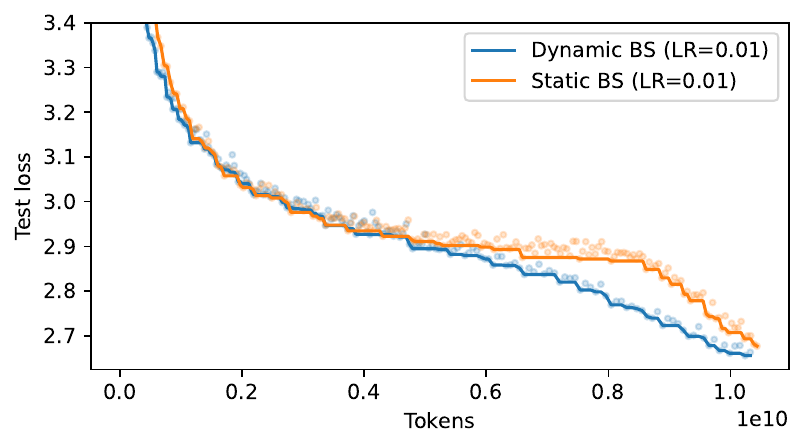}
    \includegraphics[width=0.32\linewidth]{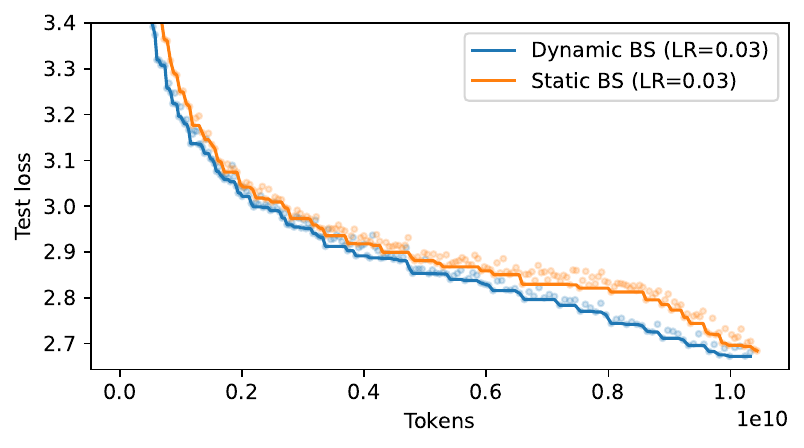}
    \end{minipage}
  \hspace{-0.4cm}
  \begin{minipage}[c]{0.28\textwidth}
    \includegraphics[width=\linewidth]{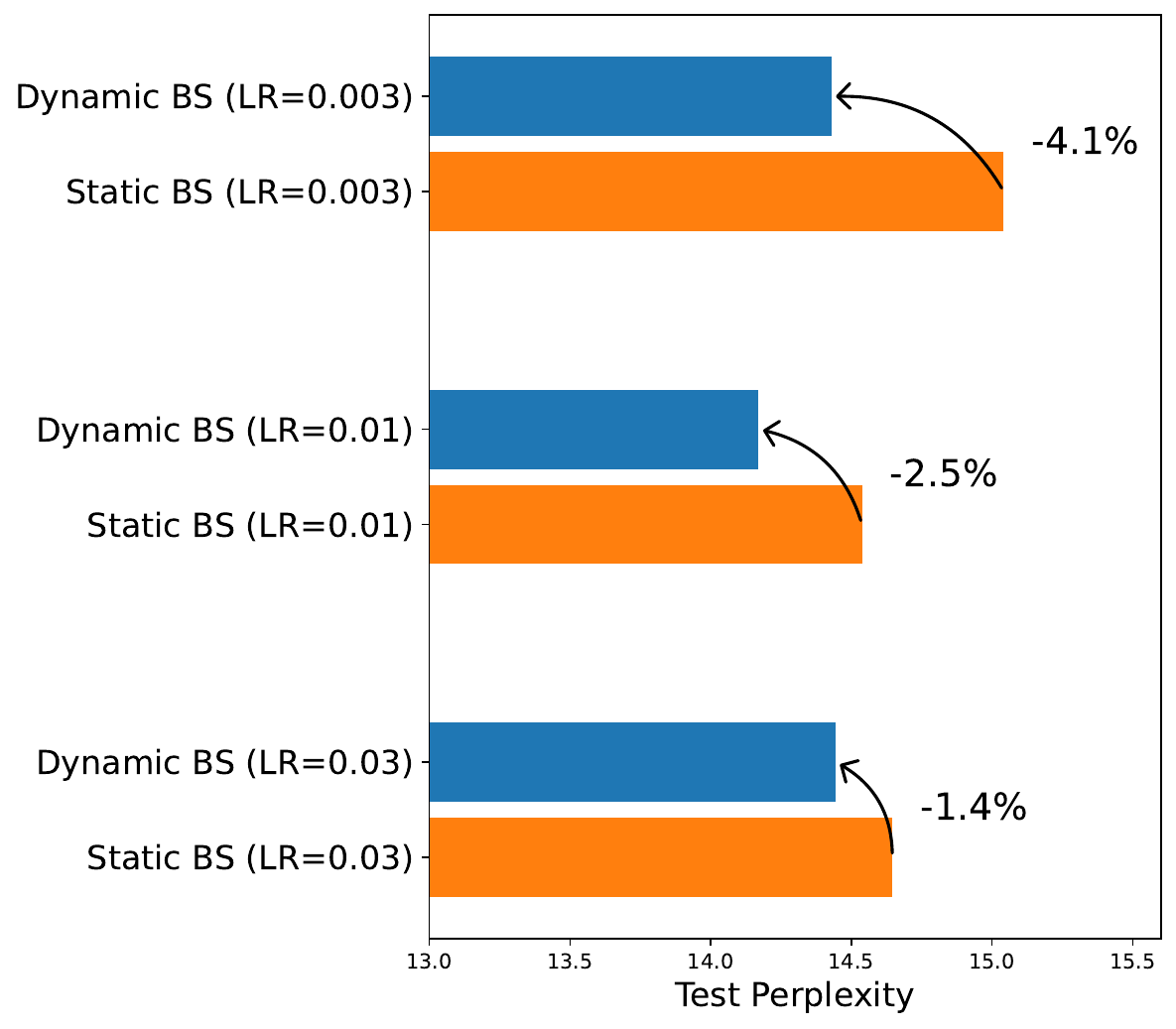}
    \end{minipage}
\vspace{-0.2cm}
    \caption{Impact of batch size schedules for Llama3 1B model on Fineweb-edu data. Left to right: peak learning rate=0.003,0.01 (optimal), 0.03. 
}
    \label{fig:vary peak lr}
\end{figure}

\subsection{Training precision}
By default, we use mixed-precision training. We also observe strong performance of optimal batch size schedule when using full-precision training. However, with half-precision training, the advantage of our batch size vanishes in comparison to the static batch size.
\begin{figure}[H]
    \centering
  \begin{minipage}[c]{0.72\textwidth}
    \includegraphics[width=0.32\linewidth]{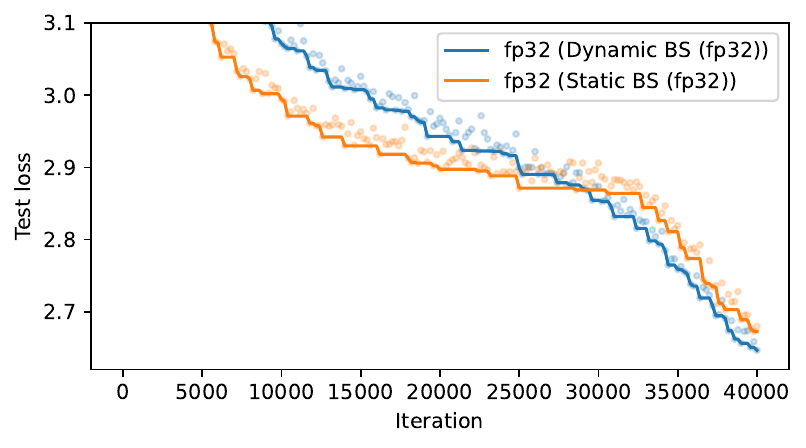}
    \includegraphics[width=0.32\linewidth]{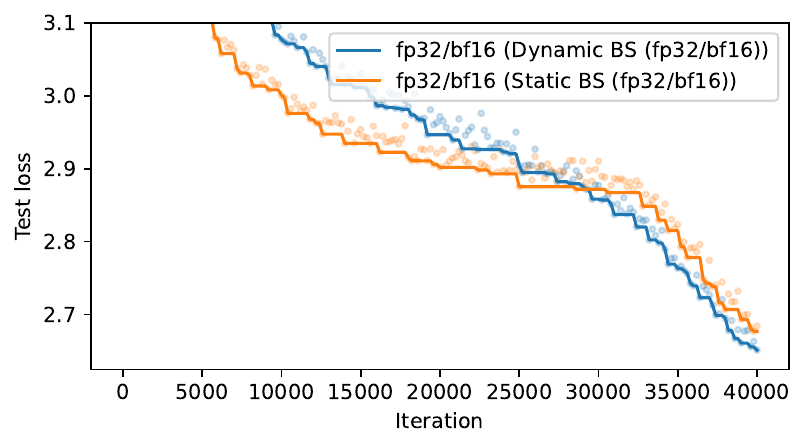}
    \includegraphics[width=0.32\linewidth]{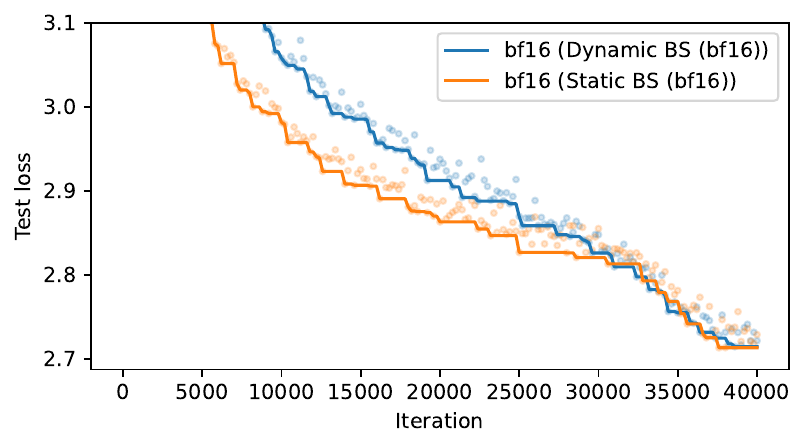}
  \\
    \includegraphics[width=0.32\linewidth]{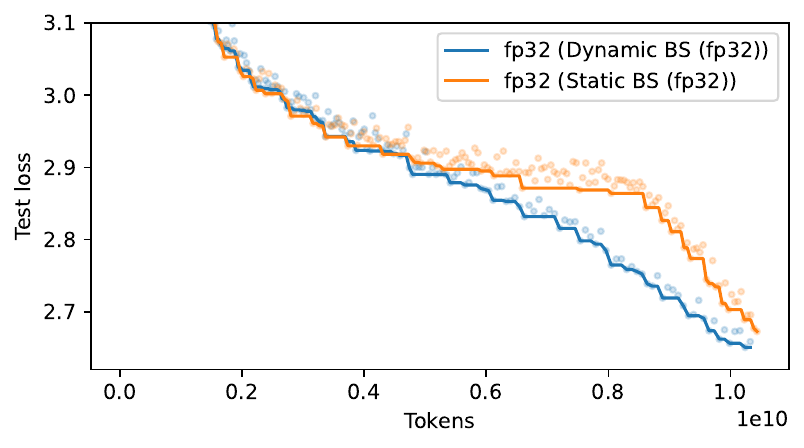}
    \includegraphics[width=0.32\linewidth]{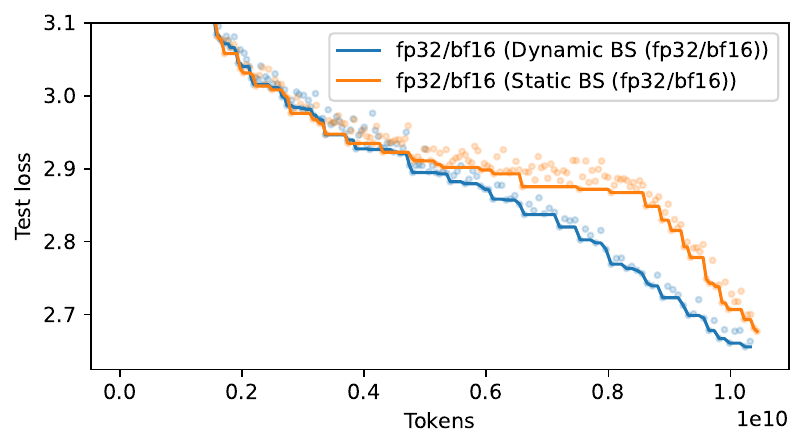}
    \includegraphics[width=0.32\linewidth]{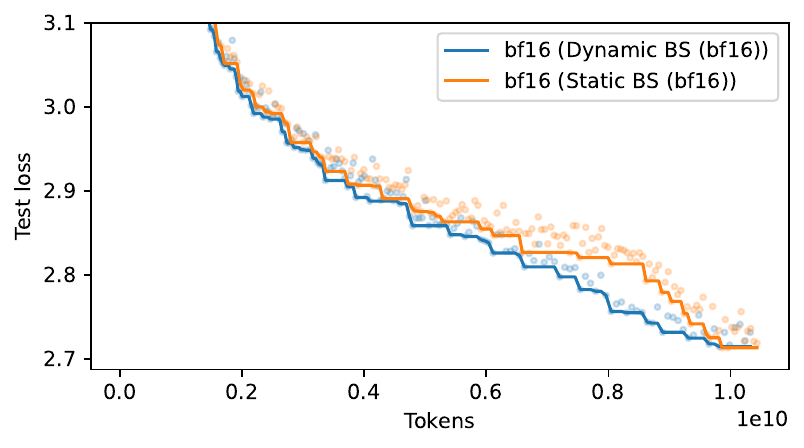}
    \end{minipage}
  \hspace{-0.4cm}
  \begin{minipage}[c]{0.28\textwidth}
    \includegraphics[width=\linewidth]{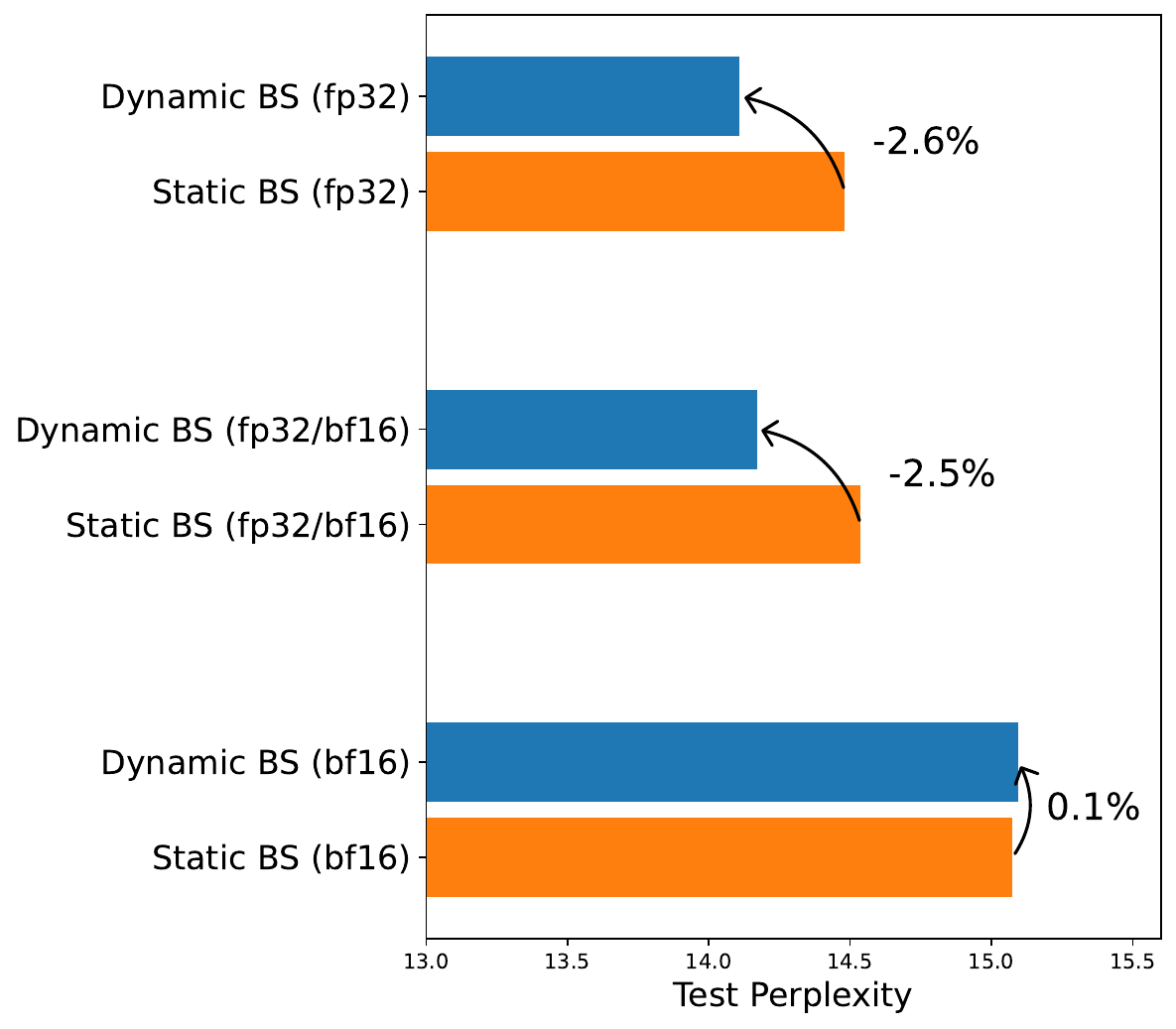}
    \end{minipage}
\vspace{-0.2cm}
    \caption{Impact of batch size schedules for Llama3 1B model on Fineweb-edu data. Left to right: full-precision (fp32), mixed precision (fp32/bf16), and half-precision (bf16).
 }
    \label{fig:vary precision}
\end{figure}

\subsection{Base batch size}
We show in \Cref{fig:vary bs} and \Cref{fig:vary bs2} that optimal batch size \textit{\textbf{always outperforms}} static batch size for any base batch size, under fixed compute and fixed iterations. Note that the advantage vanishes as $B$ approaches infinity, because the advantage is inversely proportional to base batch size by \Cref{cor:variance reduction}. Here $B$=1X is 128k tokens.
\begin{figure}[!htb]
    \centering
  \hspace{-0.3cm}
  \begin{minipage}[c]{0.74\textwidth}
    \includegraphics[width=0.24\linewidth]{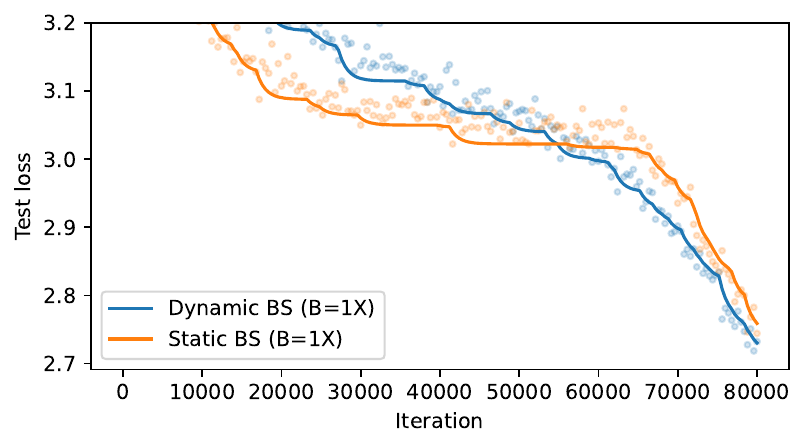}
    \includegraphics[width=0.24\linewidth]{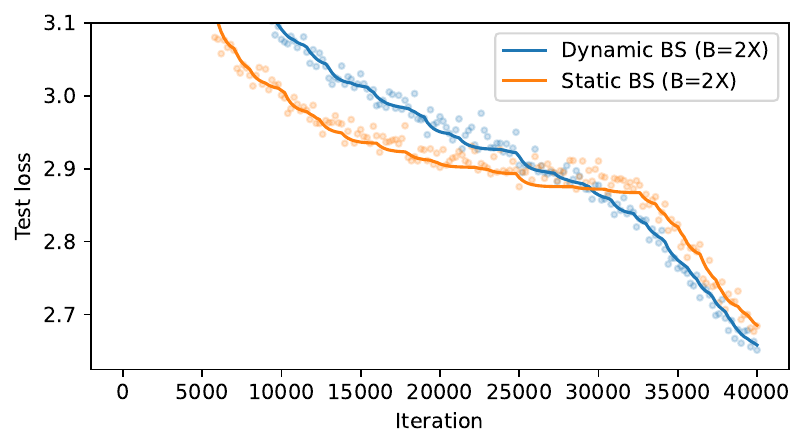}
    \includegraphics[width=0.24\linewidth]{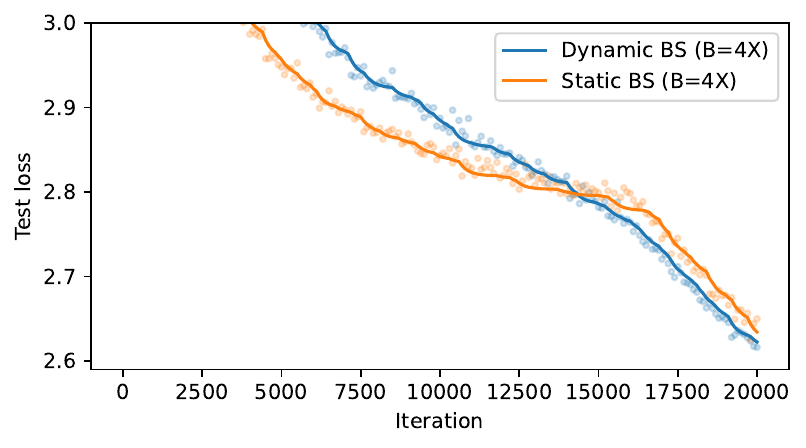}
    \includegraphics[width=0.24\linewidth]{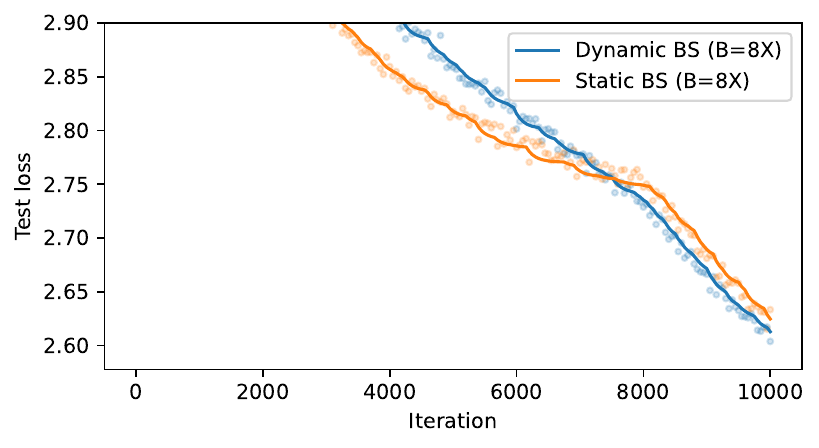}
  \\
     \includegraphics[width=0.24\linewidth]{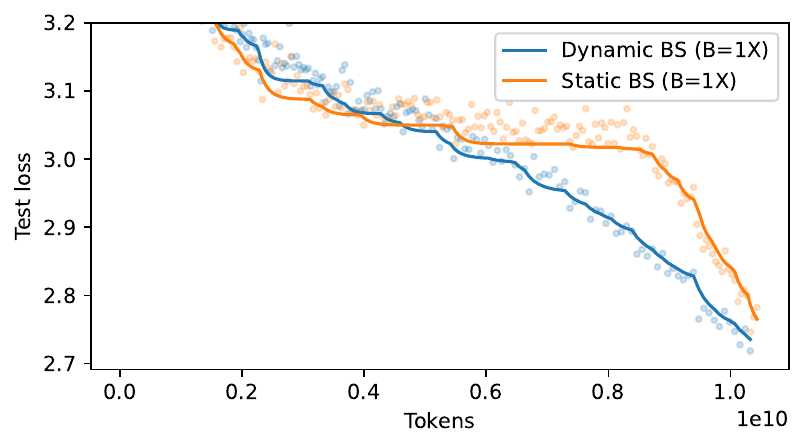}
     \includegraphics[width=0.24\linewidth]{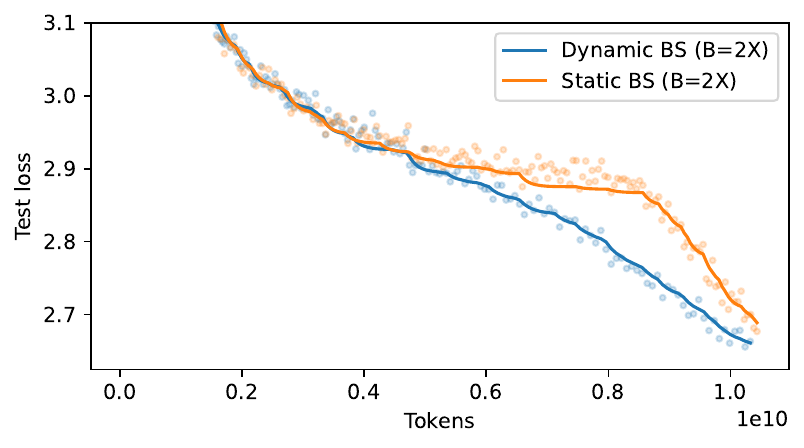}
     \includegraphics[width=0.24\linewidth]{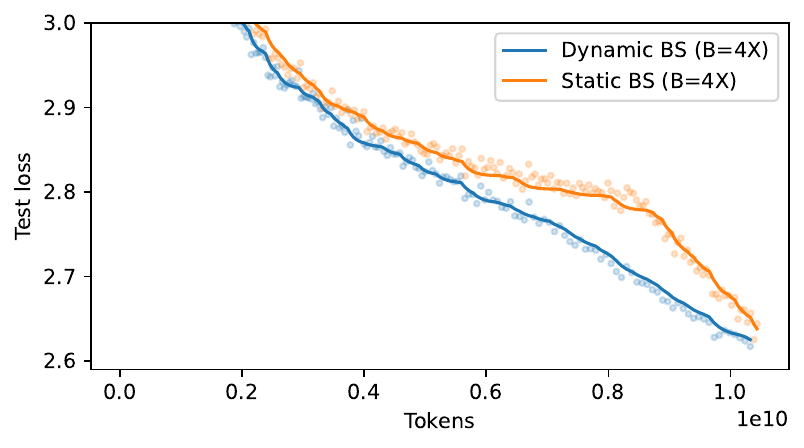}
     \includegraphics[width=0.24\linewidth]{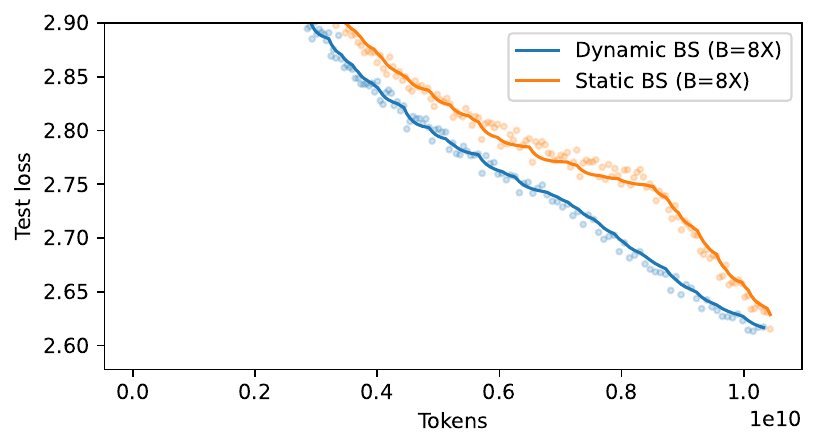}
   \end{minipage}
  \hspace{-0.2cm}
  \begin{minipage}[c]{0.26\textwidth}
    \includegraphics[width=\linewidth]{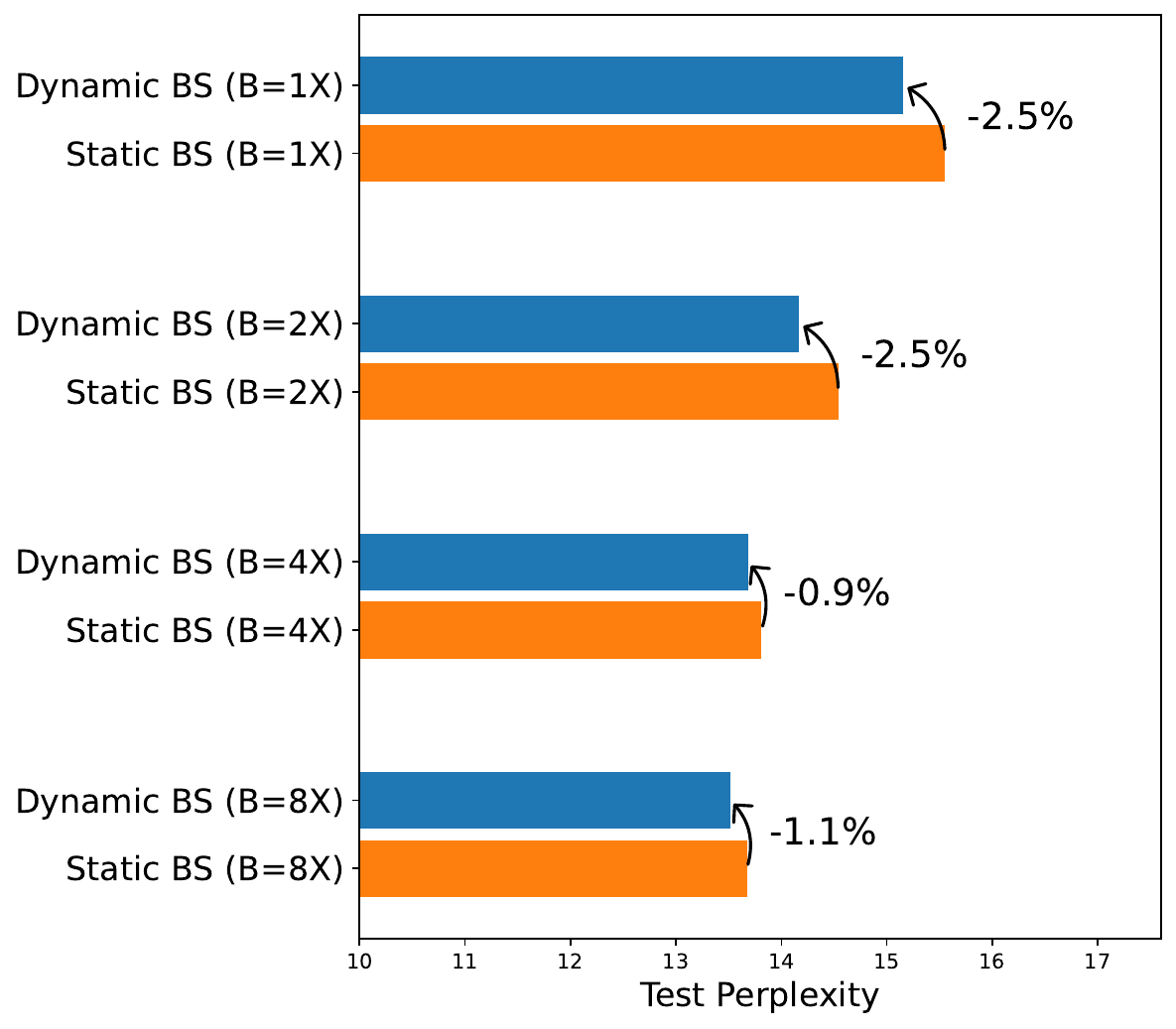}
    \end{minipage}
\vspace{-0.2cm}
    \caption{Impact of batch size schedules for Llama3 1B model on Fineweb-edu data, under fixed compute (larger batch uses fewer iterations).
    }
    \label{fig:vary bs}
\end{figure}

\begin{figure}[!htb]
    \centering
  \hspace{-0.3cm}
  \begin{minipage}[c]{0.74\textwidth}
    \includegraphics[width=0.24\linewidth]{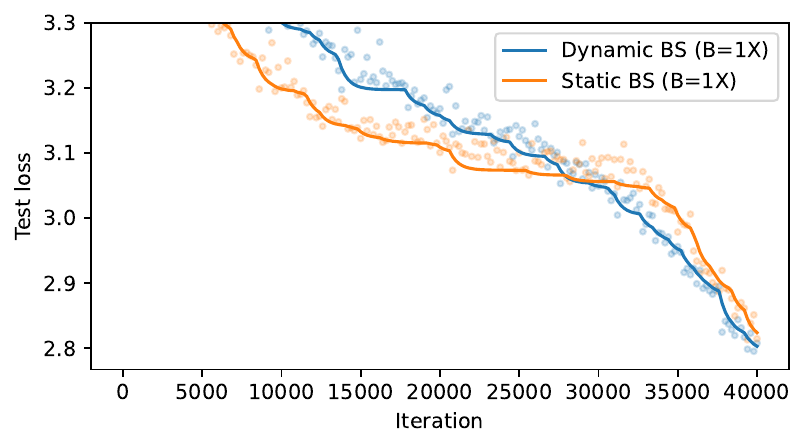}
    \includegraphics[width=0.24\linewidth]{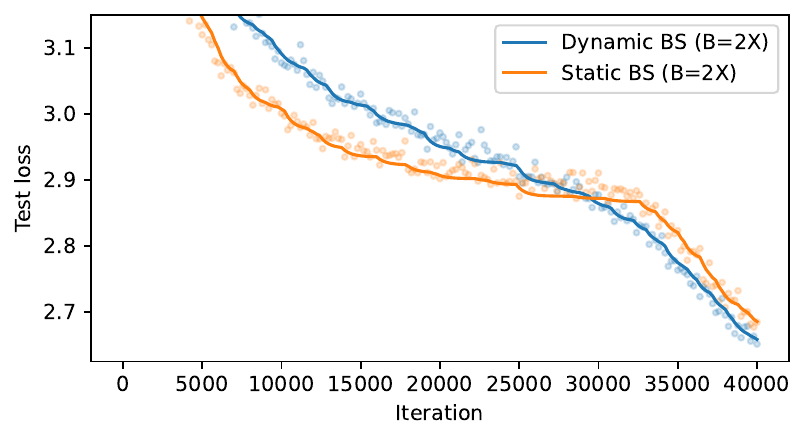}
    \includegraphics[width=0.24\linewidth]{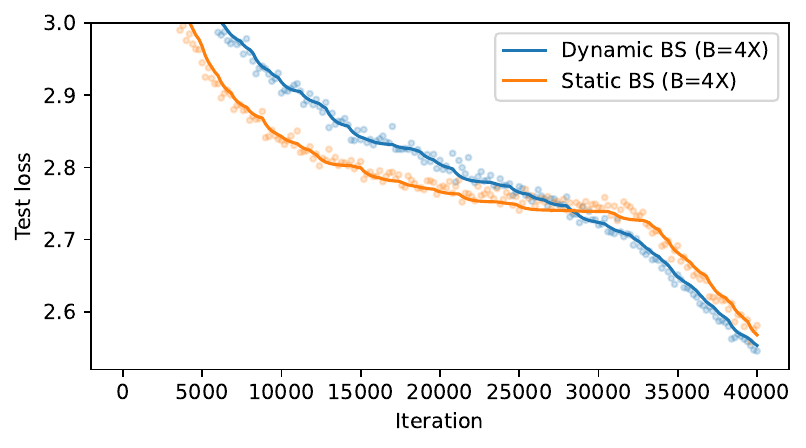}
    \includegraphics[width=0.24\linewidth]{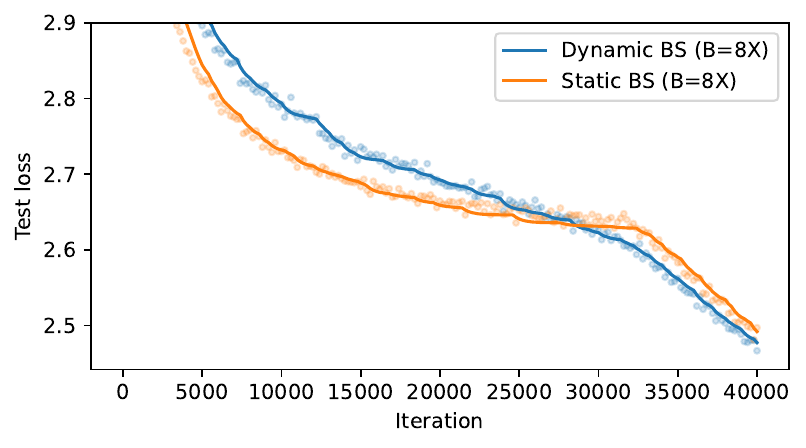}
  \\
     \includegraphics[width=0.24\linewidth]{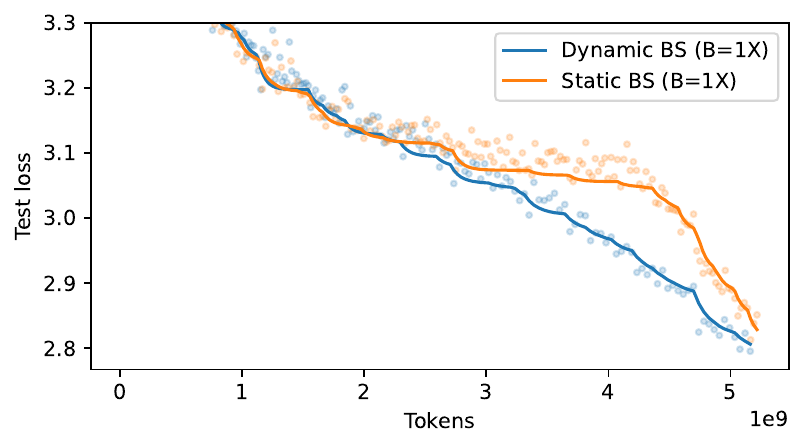}
     \includegraphics[width=0.24\linewidth]{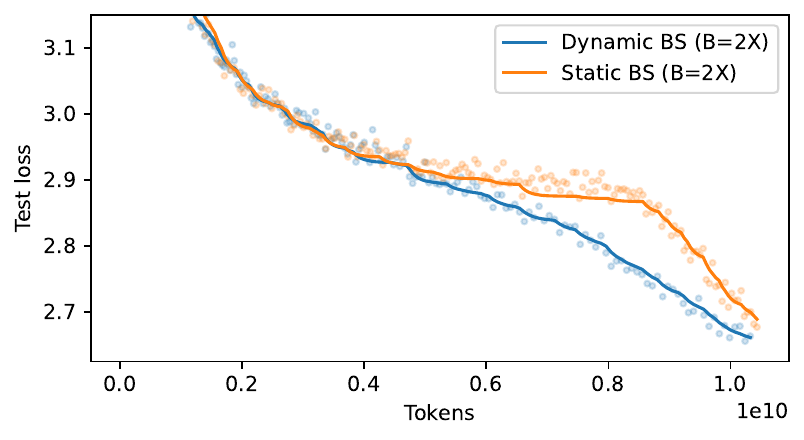}
     \includegraphics[width=0.24\linewidth]{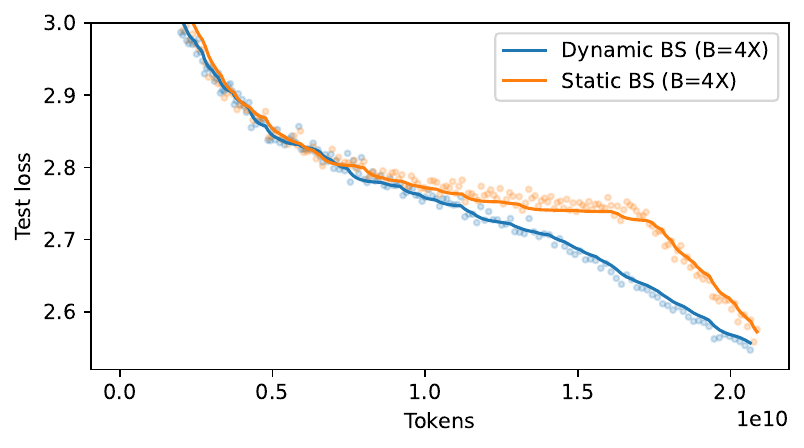}
     \includegraphics[width=0.24\linewidth]{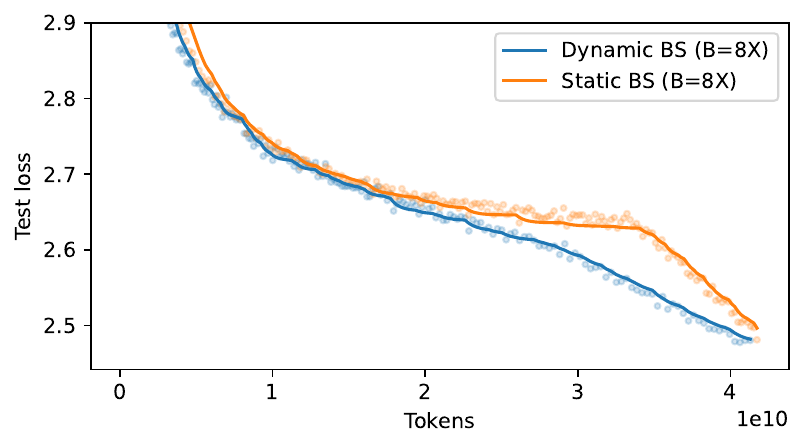}
   \end{minipage}
  \hspace{-0.2cm}
  \begin{minipage}[c]{0.26\textwidth}
    \includegraphics[width=\linewidth]{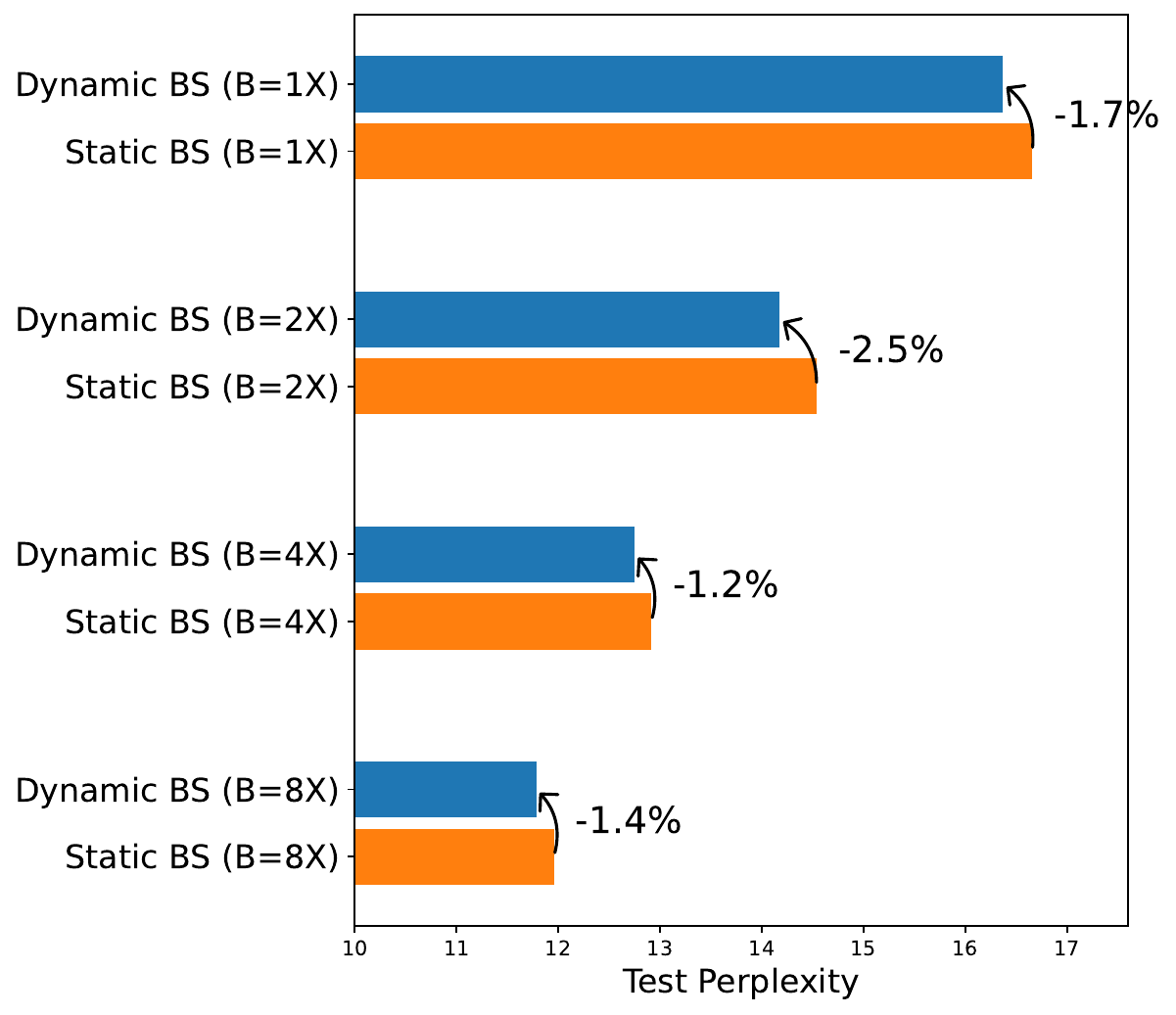}
    \end{minipage}
\vspace{-0.2cm}
    \caption{Impact of batch size schedules for Llama3 1B model on Fineweb-edu data, under fixed iterations of 40000. 
    }
    \label{fig:vary bs2}
\end{figure}

\subsection{The $\eta/B$ ratio does not determine training dynamics}
\label{sec:eta_B_ratio}

A natural hypothesis, suggested by the linear scaling rule~\citep{goyal2017accurate,smith2018dontdecay} and the analysis in~\citet{mccandlish2018empirical}, is that training dynamics are governed by the ratio $\eta/B$ rather than by $\eta$ and $B$ individually. If true, one could freely trade learning rate schedules for batch size schedules as long as $\eta(t)/B(t)$ remains unchanged. We design a controlled experiment to test this hypothesis directly.

\paragraph{Setup.}
We train Qwen3-0.6B from scratch on FineWeb-EDU using AdamW with batch size 256 and sequence length 1024. Each run trains for 10{,}000 steps, consuming $2.6$B tokens. We sweep over three learning rate schedules (WSD, cosine, linear) and three base learning rates ($3\times10^{-4}$, $10^{-3}$, $3\times10^{-3}$), with $\eta_{\min} = 0.1\,\eta_{\max}$.

\paragraph{Design.}
For each (schedule, $\eta_{\max}$) pair, we run two configurations that produce the \emph{same} $\eta/B$ trajectory at every step:
\begin{itemize}
    \item \textbf{Type~1 (LR schedule):} $\eta(t)$ follows the schedule, $B=G$ is constant. The ratio is $\eta(t)/G$.
    \item \textbf{Type~2 (BS schedule):} $\eta = \eta_{\text{const}}$ is fixed, $B(t) \propto 1/\eta_{\text{ref}}(t)$ so that $\eta_{\text{const}}/B(t) = \eta_{\text{ref}}(t)/G$. Here $\eta_{\text{const}} = T / \sum_{t=0}^{T-1} 1/\eta_{\text{ref}}(t)$ is chosen to match the total token budget exactly.
\end{itemize}
Both configurations consume identical total tokens by construction. If $\eta/B$ is the governing quantity, Type~1 and Type~2 should yield the same loss.

\paragraph{Results.}
Figure~\ref{fig:eta_B_ratio_step} and Figure~\ref{fig:eta_B_ratio_tokens} show the validation loss curves versus iteration and total tokens, respectively. Across all 9 configurations, the LR-schedule run (Type~1, blue) consistently outperforms the BS-schedule run (Type~2, orange), often by a significant margin. Table~\ref{tab:eta_B_ratio} summarizes the final validation loss: the gap ranges from $+0.017$ to $+0.74$ in favor of Type~1. The effect is largest at small learning rates and for cosine/linear schedules, where the BS-schedule run must use a very small constant $\eta_{\text{const}}$ to satisfy the token budget constraint.

These results demonstrate that $\eta/B$ alone does not determine training dynamics. The learning rate schedule provides benefits beyond its effect on the $\eta/B$ ratio---likely through its interaction with the optimizer state and loss landscape curvature.

\begin{figure}[!htb]
    \centering
    \includegraphics[width=\linewidth]{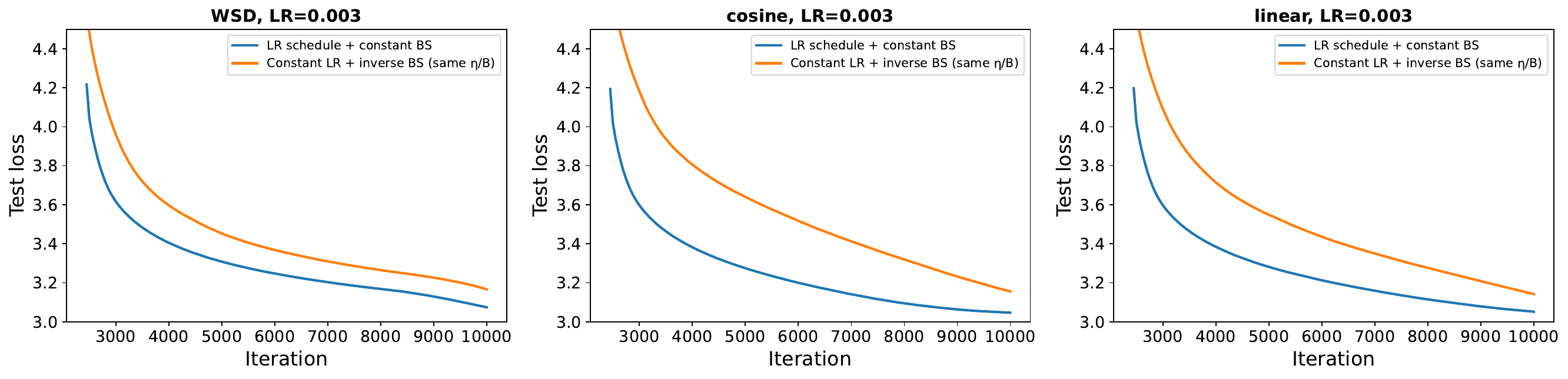}
    \\
    \includegraphics[width=\linewidth]{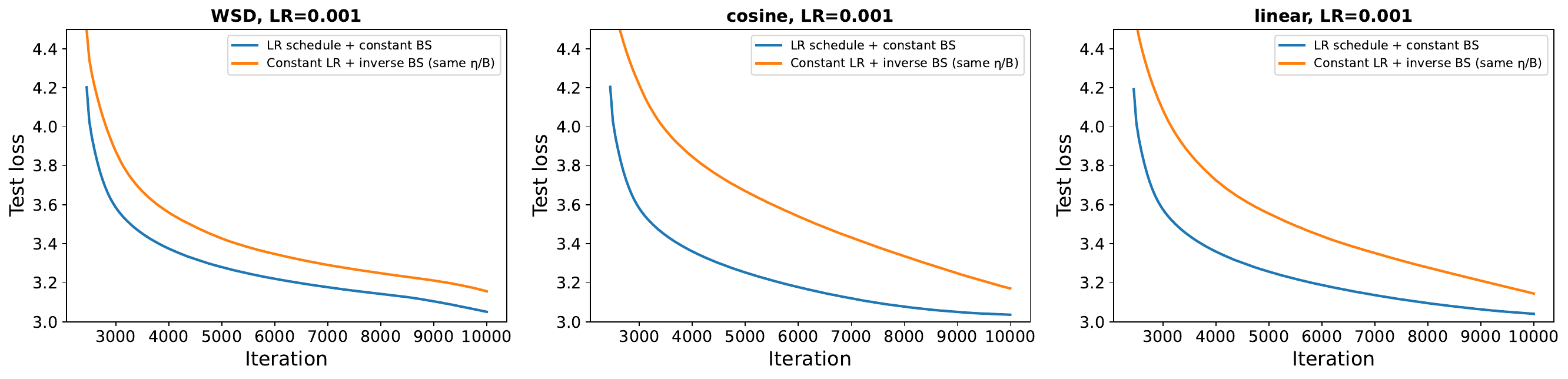}
    \\
    \includegraphics[width=\linewidth]{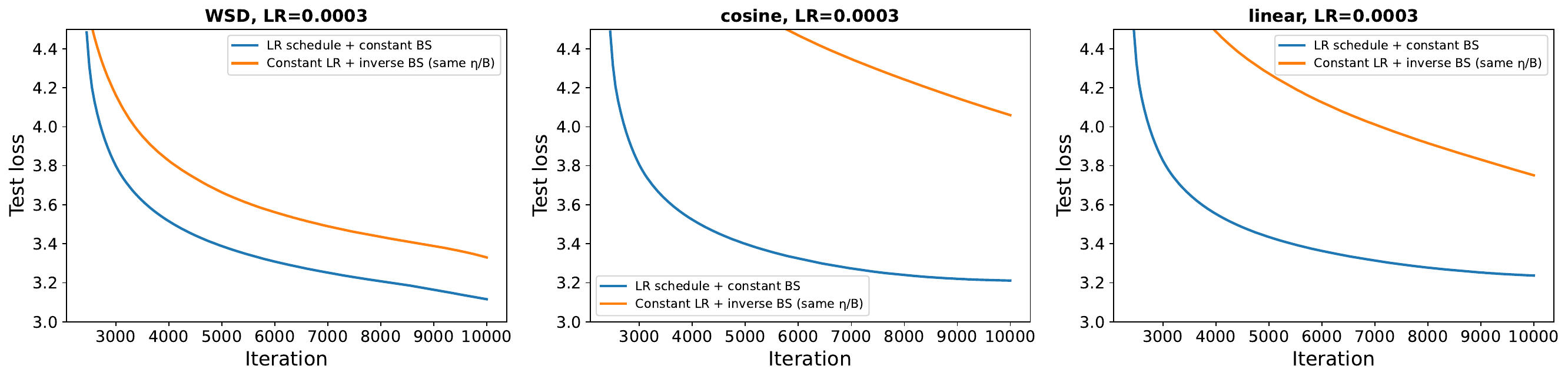}
    \vspace{-0.2cm}
    \caption{Validation loss vs.\ iteration for Type~1 (LR schedule + constant BS, blue) and Type~2 (constant LR + inverse BS, orange), which share the same $\eta/B$ trajectory. Columns: WSD, cosine, and linear schedules. Rows top to bottom: $\eta_{\max} = 3\times10^{-3}$, $10^{-3}$, $3\times10^{-4}$. The LR-schedule run consistently achieves lower loss despite identical $\eta/B$ at every step. Curves smoothed with a 50-iteration running average.}
    \label{fig:eta_B_ratio_step}
\end{figure}

\begin{figure}[!htb]
    \centering
    \includegraphics[width=\linewidth]{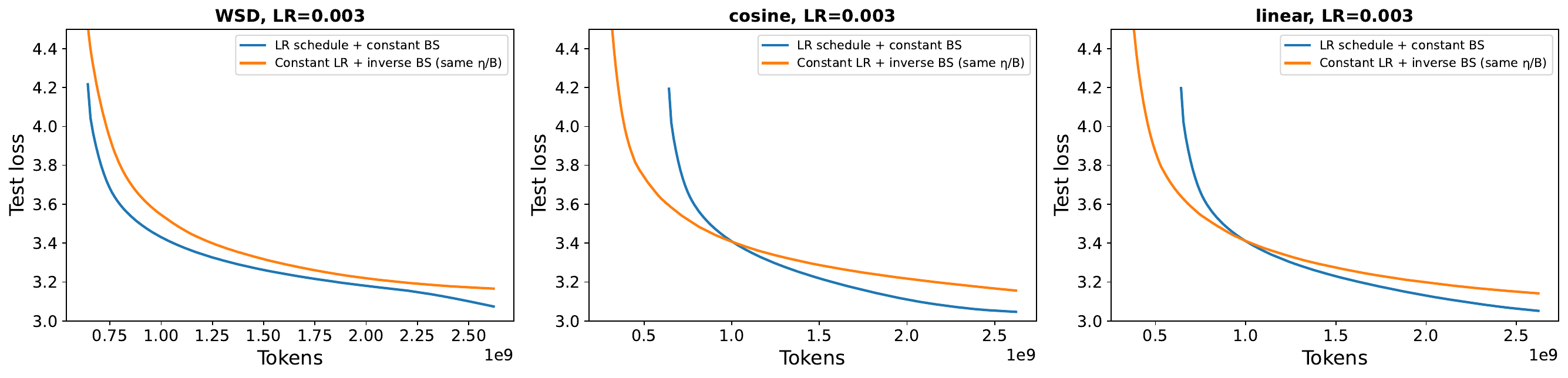}
    \\
    \includegraphics[width=\linewidth]{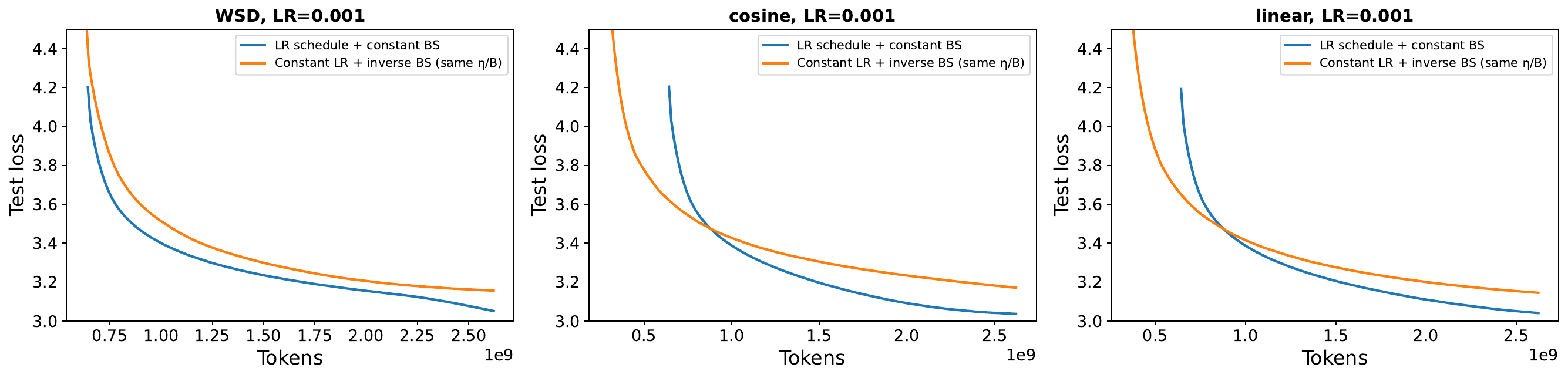}
    \\
    \includegraphics[width=\linewidth]{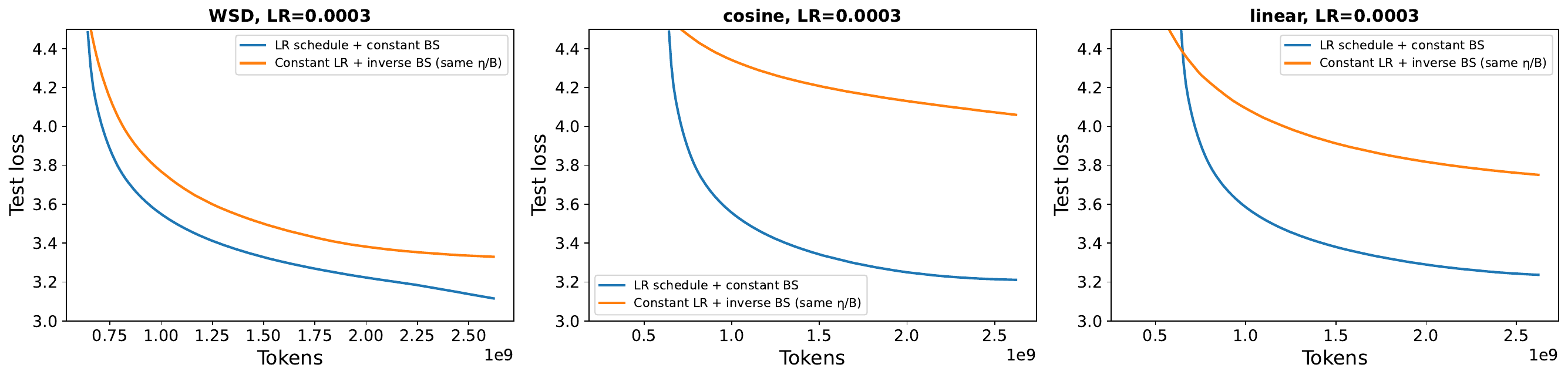}
    \vspace{-0.2cm}
    \caption{Same comparison as Figure~\ref{fig:eta_B_ratio_step}, plotted against total tokens consumed. Since both types consume exactly the same number of tokens, the x-axes are aligned. The LR-schedule advantage persists when controlling for compute budget.}
    \label{fig:eta_B_ratio_tokens}
\end{figure}

\begin{table}[!htb]
    \centering
    \caption{Final validation loss for Type~1 (LR schedule) vs.\ Type~2 (BS schedule) across all configurations. $\Delta$ is Type~2 minus Type~1; positive values indicate the LR schedule wins.}
    \label{tab:eta_B_ratio}
    \vspace{0.2cm}
    \begin{tabular}{llccc}
        \toprule
        $\eta_{\max}$ & Schedule & Type~1 (LR sched.) & Type~2 (BS sched.) & $\Delta$ \\
        \midrule
        $3\times10^{-3}$ & WSD    & 3.015 & 3.033 & +0.019 \\
        $3\times10^{-3}$ & Cosine & 3.050 & 3.082 & +0.031 \\
        $3\times10^{-3}$ & Linear & 3.041 & 3.059 & +0.017 \\
        \midrule
        $10^{-3}$        & WSD    & 2.995 & 3.035 & +0.039 \\
        $10^{-3}$        & Cosine & 3.042 & 3.093 & +0.051 \\
        $10^{-3}$        & Linear & 3.035 & 3.063 & +0.028 \\
        \midrule
        $3\times10^{-4}$ & WSD    & 3.077 & 3.212 & +0.136 \\
        $3\times10^{-4}$ & Cosine & 3.221 & 3.964 & +0.743 \\
        $3\times10^{-4}$ & Linear & 3.239 & 3.655 & +0.416 \\
        \bottomrule
    \end{tabular}
\end{table}

\clearpage
\section{Further experiment details}
\label{app:posttrain}

\Cref{tab:further_hparams} summarizes the hyperparameters used in the post-training experiments of \Cref{sec:posttrain}.

\begin{table}[!htb]
\centering
\caption{Hyperparameters for the VLM and math fine-tuning experiments in Section \ref{sec:posttrain}. For the LR sweep, we train with constant batch size and select the best LR by validation loss and downstream accuracy, then fix it for the dynamic-vs.-static comparison.}
\label{tab:further_hparams}
\small
\begin{tabular}{lcc}
\toprule
\textbf{Hyperparameter} & \textbf{VLM} & \textbf{Math} \\
\midrule
Model                    & SmoLM2+Siglip2                    & Qwen3-0.6B \\
Dataset                  & Cauldron                    & OrcaMath \\
Optimizer                & Muon-NSGD                    & AdamW \\
LR schedule              & WSD                    & WSD \\
Peak LR                  & 2e-3                    & 1e-5 \\
Min LR                   & 0                    & $0$ \\
Weight decay             & 0.01                    & 0.01 \\
Context length           & 4096                    & 512 \\
Base batch size         & 64                    & 32 \\
Total steps              & 40000                    & 500 \\
\bottomrule
\end{tabular}
\end{table}

\Cref{tab:vlm_results} reports the absolute scores for VLM benchmark tasks under dynamic and static batch size schedules.

\begin{table}[!htb]
\centering
\caption{Benchmark scores for dynamic and static batch sizes on VLM evaluation (\S\ref{sec:vlm}). Bold text indicates the better value per benchmark.}
\label{tab:vlm_results}
\small
\begin{tabular}{lcc}
\toprule
\textbf{Benchmark} & \textbf{Dynamic} & \textbf{Static} \\
\midrule
MMMU (acc)              & \textbf{0.2411} & 0.2311          \\
 ScienceQA               & \textbf{0.3954} & 0.3924          \\
 TextVQA                 & \textbf{0.3556} & 0.3397          \\
 DocVQA                  & \textbf{0.1847} & 0.1704          \\
 InfoVQA                 & \textbf{0.1099} & 0.1067          \\
 OCRBench                & \textbf{0.2810} & 0.2720          \\
 MME-Perception          & \textbf{704.03} & 653.82          \\
MME-Cognition           & 169.29          & \textbf{171.79} \\\midrule
MMStar (average)&0.2361&\textbf{0.2377} \\
\quad Coarse Perception       & 0.2782          & \textbf{0.2819} \\
\quad Fine-grained Perception & 0.2287          & \textbf{0.2317} \\
\quad Instance Reasoning      & \textbf{0.2457} & 0.2392          \\
\quad Logical Reasoning       & 0.2335          & \textbf{0.2341} \\
\quad Math                    & 0.2082          & \textbf{0.2369} \\
\quad Science                 & \textbf{0.2222} & 0.2024          \\

\bottomrule
\end{tabular}
\end{table}

\newpage
\section{How existing paper use batch size?}\label{appendix:existing_works}
    In this section we summarize the learning rate and batch size schedules used in training recent open source models. We summarize the schedules each model uses in Table \ref{tab:lr_bs_schedules}, also visualize them in Figure \ref{fig:related_work_lr_bs_schedules_p1} and \ref{fig:related_work_lr_bs_schedules_p2}. It could be seen that, most of the recent models are using a non-constant batch-size schedule, however they mostly employ a step function batch size schedule heuristically.

\begin{table*}[ht]
\centering
\small
\setlength{\tabcolsep}{3pt}
\renewcommand{\arraystretch}{1.15}
\resizebox{0.85\textwidth}{!}{\begin{tabularx}{\textwidth}{
  >{\raggedright\arraybackslash}p{2.25cm}
  >{\raggedright\arraybackslash}X
  >{\raggedright\arraybackslash}X
  >{\raggedright\arraybackslash}p{2.75cm}
}
\toprule
\textbf{Model}
& \textbf{Learning-rate schedule}
& \textbf{Batch-size schedule}
& \textbf{Where specified} \\
\midrule

Llama 3 405B
&
Peak LR \(8\times 10^{-5}\). Linear warmup for \(8{,}000\) steps; cosine decay to \(8\times 10^{-7}\) over \(1.2\)M steps; final long-context annealing linearly decays LR to \(0\) over the last \(40\)M tokens.
&
\(4\)M tokens/update initially; \(8\)M after \(252\)M tokens; \(16\)M after \(2.87\)T tokens.
&
\cite[Sec.~3.4, pp.~14--15]{llama3report2024}. \\

\midrule

Qwen3 / Qwen3.5 / Qwen3.6
&
Numeric LR schedule not disclosed in the public report/pages I found. Qwen3 states that predicted optimal LR/batch strategies are obtained by scaling laws and that LR decay is accelerated in Stage 2.
&
Numeric batch-size schedule not disclosed. Publicly stated stages: \(>30\)T tokens at 4K context, then \(\sim5\)T higher-quality tokens at 4K, then hundreds of billions of tokens at 32K.
&
\cite[Sec.~3.2, p.~4]{qwen3report2025}. \\

\midrule

DeepSeek-V4 Flash / Pro
&
Flash: \(2{,}000\)-step linear warmup; LR \(2.7\times10^{-4}\) for most training; near the end, cosine decay to \(2.7\times10^{-5}\). Pro: same strategy, peak \(2.0\times10^{-4}\), end \(2.0\times10^{-5}\). Exact decay-start token count not disclosed.
&
Flash: batch size increases from a small size to \(75.5\)M tokens/update, then stays there. Pro: maximum batch \(94.4\)M tokens/update. Exact ramp endpoint not disclosed.
&
\cite[Sec.~4.2.2, pp.~24--25]{deepseekv4report2026}. \\

\midrule

Kimi-K2
&
MuonClip with WSD. \(500\)-step warmup to \(2\times10^{-4}\), i.e. about \(500\times67\)M \(=33.5\)B tokens; constant \(2\times10^{-4}\) until \(10\)T tokens; cosine decay to \(2\times10^{-5}\) over the next \(5.5\)T tokens; terminal/long-context phase decays from \(2\times10^{-5}\) to \(7\times10^{-6}\), with decay shape not explicitly specified.
&
Constant \(67\)M tokens/update.
&
\cite[Sec.~2.5, p.~9]{kimik2report2025}. \\

\midrule

GPT-3 175B
&
Peak LR \(0.6\times10^{-4}=6\times10^{-5}\). Linear warmup for \(375\)M tokens; cosine decay to \(10\%\) of peak over \(260\)B tokens; constant at \(10\%\) of peak through \(300\)B tokens.
&
Final batch \(3.2\)M tokens/update. Batch linearly increases from \(32\)K tokens/update to final size over the first \(4\)--\(12\)B tokens, depending on model size; the 175B-specific ramp length is not separately stated.
&
\cite[Table~2.1, p.~8; App.~B, p.~42]{brown2020gpt3}. \\

\midrule

MT-NLG 530B
&
Peak LR \(5\times10^{-5}\). Linear warmup for \(1\)B tokens; cosine decay targeting \(10\%\) of peak over \(340\)B tokens. Training stops at \(270\)B tokens, so the LR does not reach the \(10\%\) target.
&
Batch starts at \(32\) sequences and increases by \(32\) sequences until \(1{,}920\) sequences over first \(12\)B tokens. With sequence length \(2{,}048\), this is \(65{,}536\to3.93\)M tokens/update.
&
\cite[Sec.~3.2, p.~10]{smith2022mtnlg}. \\

\midrule

PaLM 540B
&
Adafactor. LR \(10^{-2}\) for first \(10{,}000\) steps, then inverse-square-root decay proportional to \(1/\sqrt{k}\), where \(k\) is the step number.
&
Sequence length \(2{,}048\). Batch size \(512\) until step \(50\)K, \(1{,}024\) until step \(115\)K, and \(2{,}048\) until step \(255\)K; equivalently about \(1.05\)M, \(2.10\)M, and \(4.19\)M tokens/update.
&
\cite[Sec.~5, p.~10]{chowdhery2022palm}. \\

\midrule

Chinchilla 70B
&
Peak LR \(1\times10^{-4}\). Uses Gopher training setup except listed changes: warmup from \(10^{-7}\) to peak over \(1{,}500\) steps, followed by \(10\times\) cosine decay.
&
Batch \(1.5\)M \(\to\) \(3\)M tokens/update, doubled midway. Since Chinchilla trains on \(1.4\)T tokens, this is about \(700\)B tokens at each batch size.
&
\cite[Sec.~4.1/Table~4, p.~9]{hoffmann2022chinchilla};
\cite[Sec.~3.2, p.~6]{rae2021gopher}. \\

\midrule

Gopher 280B
&
Peak LR \(4\times10^{-5}\). Warmup from \(10^{-7}\) to peak over first \(1{,}500\) steps, followed by \(10\times\) cosine decay.
&
Batch \(3\)M \(\to\) \(6\)M tokens/update. The Chinchilla paper states the batch is doubled midway; for \(300\)B training tokens this gives roughly \(150\)B tokens at each batch size.
&
\cite[Sec.~3.1--3.2/Table~1, p.~6]{rae2021gopher};
\cite[Table~4, p.~9]{hoffmann2022chinchilla}. \\

\midrule

GLM-130B
&
Peak LR \(8\times10^{-5}\). Warmup from \(10^{-7}\) to peak over first \(0.5\%\) of samples, then \(10\times\) cosine decay to \(8\times10^{-6}\). For \(400\)B tokens, \(0.5\%\approx2\)B tokens.
&
Sequence length \(2{,}048\). Batch warmup from \(192\) to \(4{,}224\) sequences over first \(2.5\%\) of samples, then constant; equivalently \(0.393\)M \(\to\) \(8.65\)M tokens/update, with the ramp lasting about \(10\)B tokens.
&
\cite[p.~5; Table~11, p.~48]{zeng2023glm130b}. \\

\bottomrule
\end{tabularx}
}
\caption{Pre-training learning-rate and batch-size schedules. Batch size is expressed as tokens per optimizer update unless otherwise noted.}
\label{tab:lr_bs_schedules}
\end{table*}

\begin{figure*}[ht]
    \centering
    \setlength{\tabcolsep}{1pt}
    \renewcommand{\arraystretch}{0.9}

    \begin{tabular}{cccc}
        \textbf{LR} &
        \textbf{LR normalized} &
        \textbf{Batch size} &
        \textbf{Batch size normalized} \\
        \toprule

        \multicolumn{4}{l}{\textbf{Llama 3 405B}} \\[-0.35em]
        \includegraphics[width=0.235\textwidth]{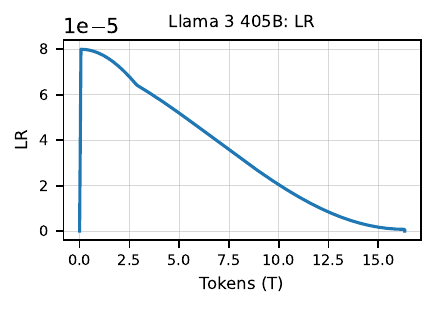} &
        \includegraphics[width=0.235\textwidth]{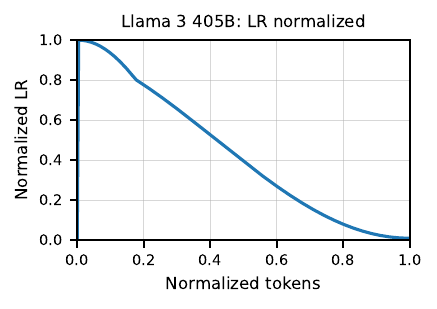} &
        \includegraphics[width=0.235\textwidth]{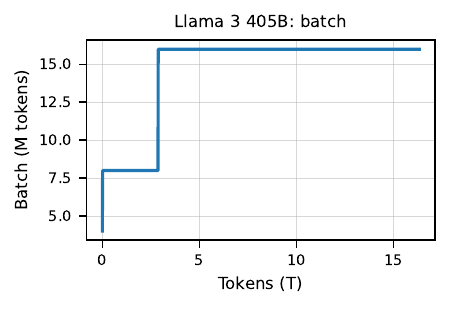} &
        \includegraphics[width=0.235\textwidth]{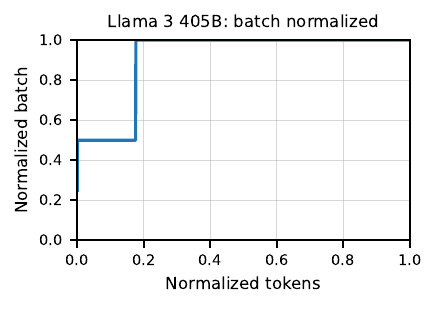} \\[0.35em]

        \multicolumn{4}{l}{\textbf{Kimi-K2}} \\[-0.35em]
        \includegraphics[width=0.235\textwidth]{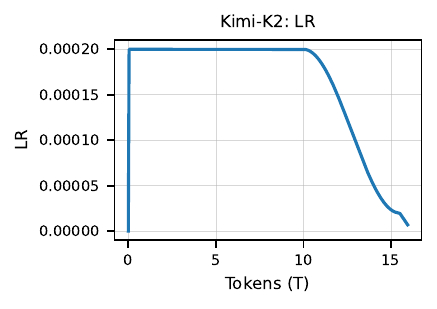} &
        \includegraphics[width=0.235\textwidth]{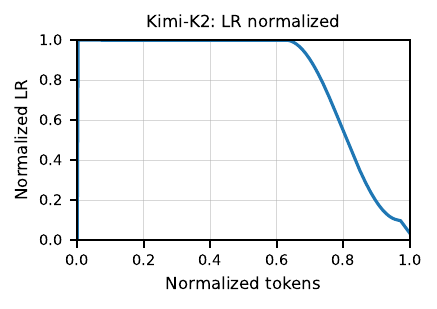} &
        \includegraphics[width=0.235\textwidth]{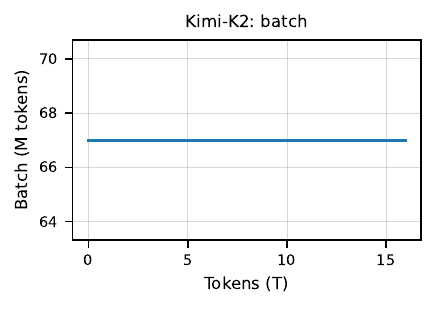} &
        \includegraphics[width=0.235\textwidth]{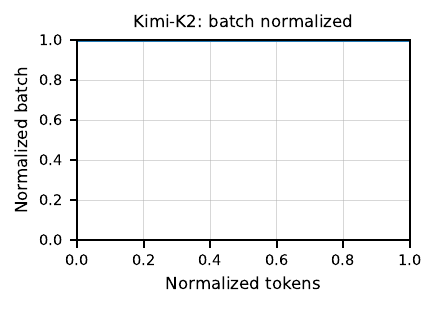} \\[0.35em]

        \multicolumn{4}{l}{\textbf{DeepSeek-V4 Pro (approximate)}} \\[-0.35em]
        \includegraphics[width=0.235\textwidth]{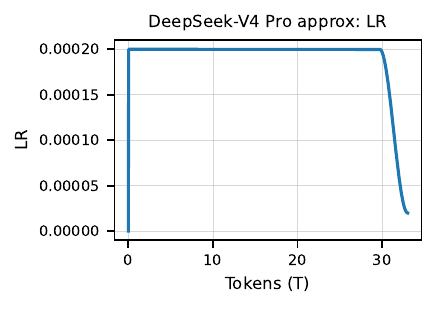} &
        \includegraphics[width=0.235\textwidth]{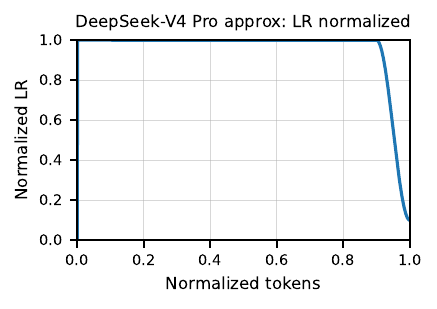} &
        \includegraphics[width=0.235\textwidth]{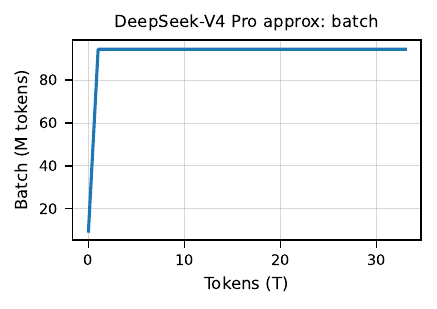} &
        \includegraphics[width=0.235\textwidth]{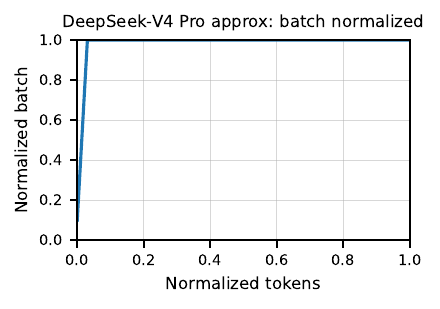} \\[0.35em]

        \multicolumn{4}{l}{\textbf{GPT-3 175B}} \\[-0.35em]
        \includegraphics[width=0.235\textwidth]{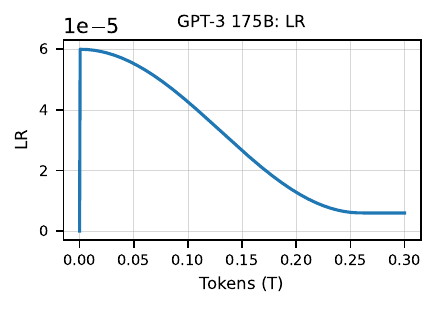} &
        \includegraphics[width=0.235\textwidth]{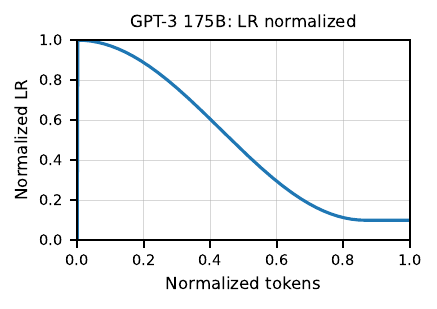} &
        \includegraphics[width=0.235\textwidth]{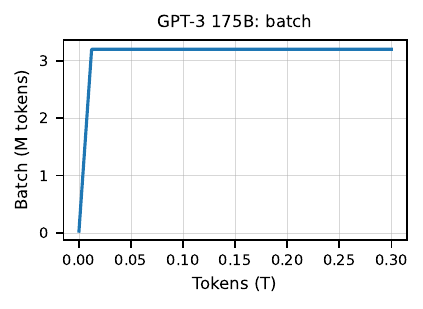} &
        \includegraphics[width=0.235\textwidth]{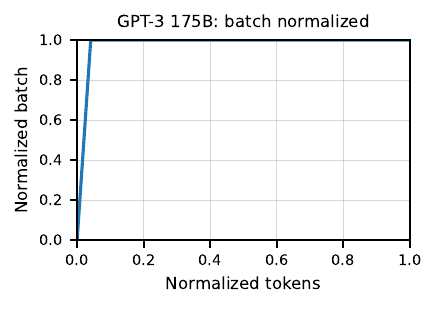} \\[0.35em]

        \multicolumn{4}{l}{\textbf{MT-NLG 530B}} \\[-0.35em]
        \includegraphics[width=0.235\textwidth]{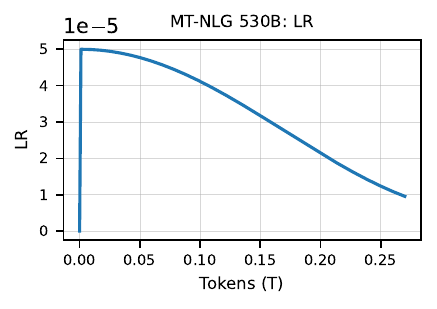} &
        \includegraphics[width=0.235\textwidth]{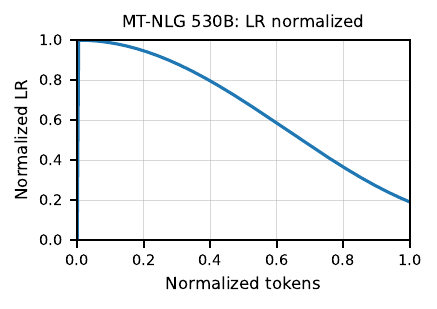} &
        \includegraphics[width=0.235\textwidth]{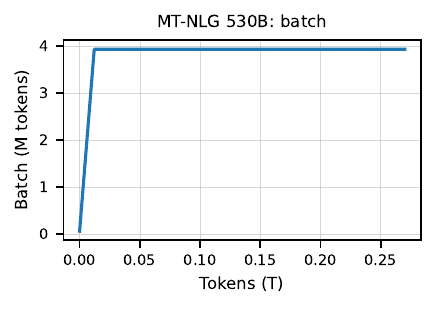} &
        \includegraphics[width=0.235\textwidth]{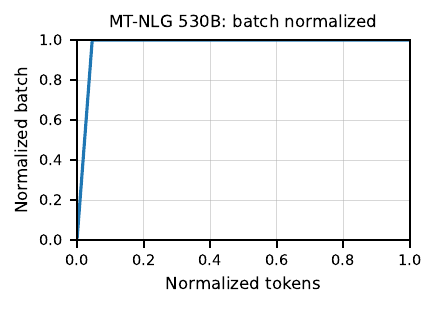} \\[0.35em]
\end{tabular}

    \caption{
        Learning-rate and batch-size schedules used in large-scale language model pre-training, part 1. The data are collected from the corresponding existing publications specified in Table \ref{tab:lr_bs_schedules}. 
        Each row corresponds to one model.
        The four columns show, from left to right: learning rate in original coordinates,
        learning rate in normalized coordinates, batch size in original coordinates,
        and batch size in normalized coordinates.
        In normalized plots, both axes are scaled to lie in $[0,1]$.
    }
    \label{fig:related_work_lr_bs_schedules_p1}
\end{figure*}

\begin{figure*}[ht]
    \centering
    \setlength{\tabcolsep}{1pt}
    \renewcommand{\arraystretch}{0.9}

    \begin{tabular}{cccc}
        \textbf{LR} &
        \textbf{LR normalized} &
        \textbf{Batch size} &
        \textbf{Batch size normalized} \\
        \toprule

        \multicolumn{4}{l}{\textbf{PaLM 540B}} \\[-0.35em]
        \includegraphics[width=0.235\textwidth]{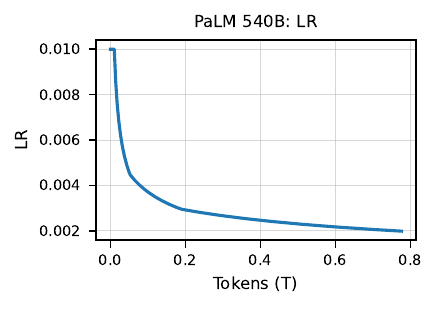} &
        \includegraphics[width=0.235\textwidth]{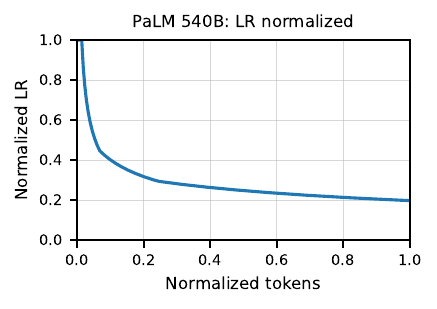} &
        \includegraphics[width=0.235\textwidth]{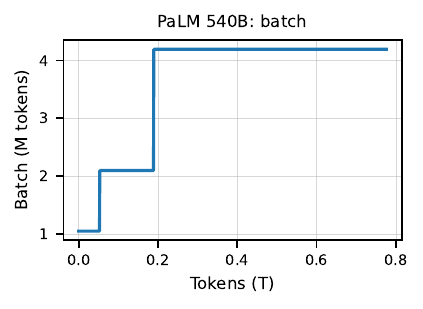} &
        \includegraphics[width=0.235\textwidth]{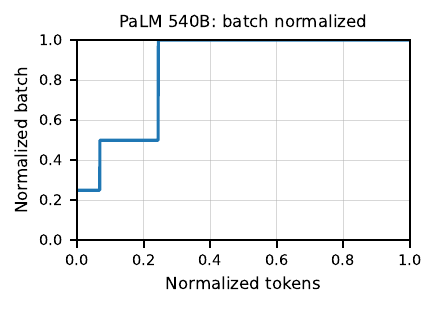} \\[0.35em]

        \multicolumn{4}{l}{\textbf{Chinchilla 70B}} \\[-0.35em]
        \includegraphics[width=0.235\textwidth]{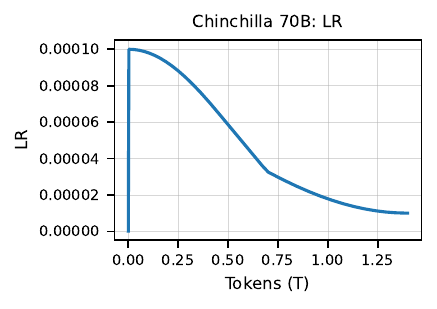} &
        \includegraphics[width=0.235\textwidth]{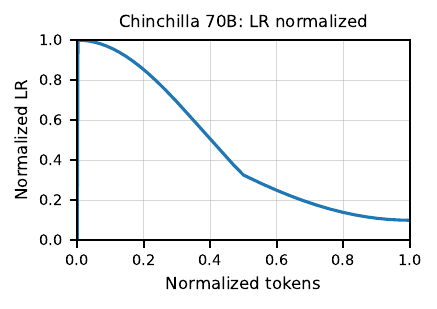} &
        \includegraphics[width=0.235\textwidth]{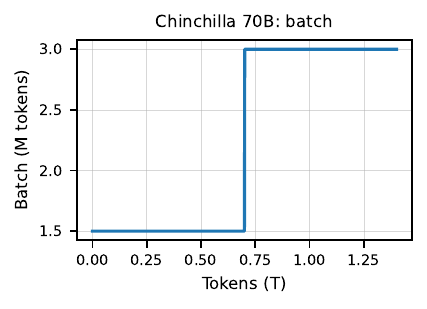} &
        \includegraphics[width=0.235\textwidth]{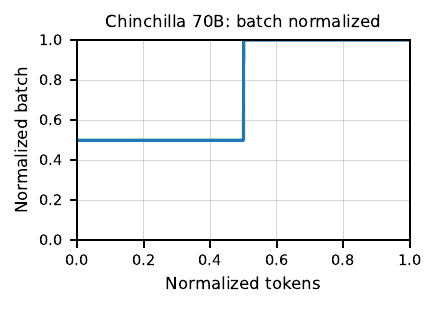} \\[0.35em]

        \multicolumn{4}{l}{\textbf{Gopher 280B}} \\[-0.35em]
        \includegraphics[width=0.235\textwidth]{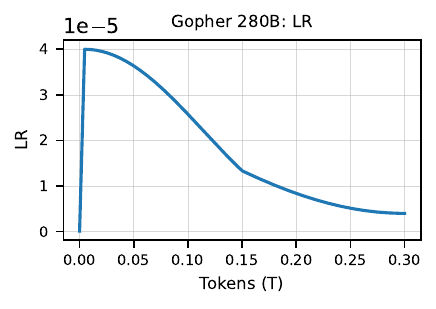} &
        \includegraphics[width=0.235\textwidth]{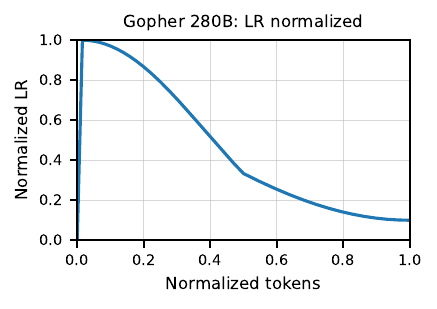} &
        \includegraphics[width=0.235\textwidth]{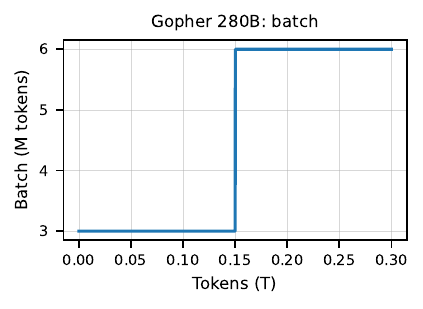} &
        \includegraphics[width=0.235\textwidth]{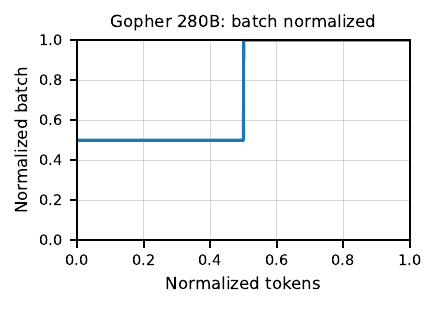} \\[0.35em]

        \multicolumn{4}{l}{\textbf{GLM-130B}} \\[-0.35em]
        \includegraphics[width=0.235\textwidth]{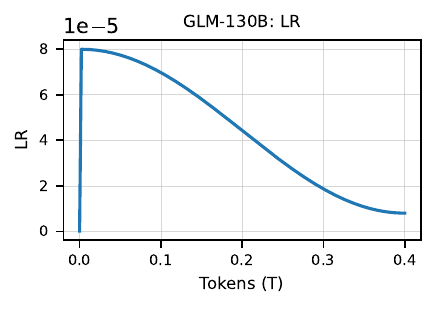} &
        \includegraphics[width=0.235\textwidth]{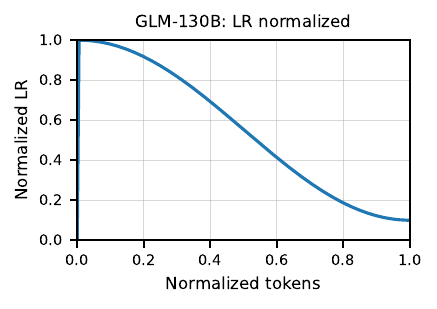} &
        \includegraphics[width=0.235\textwidth]{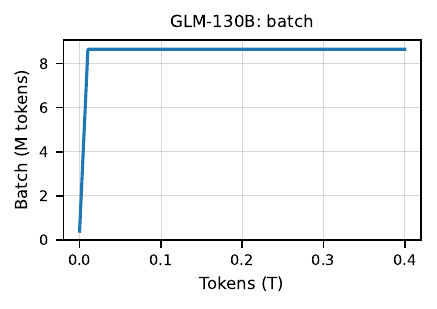} &
        \includegraphics[width=0.235\textwidth]{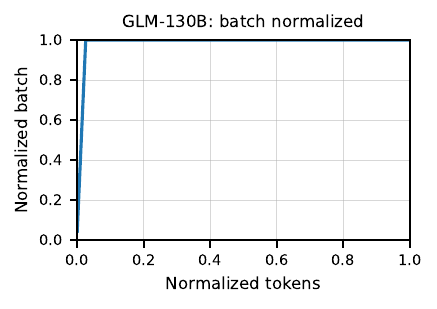} \\[0.35em]
    \end{tabular}

    \caption{
        Learning-rate and batch-size schedules used in large-scale language model pre-training, part 2. The data are collected from the corresponding existing publications specified in Table \ref{tab:lr_bs_schedules}.
        Each row corresponds to one model.
        The four columns show, from left to right: learning rate in original coordinates,
        learning rate in normalized coordinates, batch size in original coordinates,
        and batch size in normalized coordinates.
        In normalized plots, both axes are scaled to lie in $[0,1]$.
    }
    \label{fig:related_work_lr_bs_schedules_p2}
\end{figure*}

\end{document}